\pgfplotsset{compat=1.16}
\pgfplotsset{every tick label/.append style={font=\tiny}}
\newlength{\starsize}
\newlength{\starspread}
\tikzset{starsize/.code={\setlength{\starsize}{#1}},
         starspread/.code={\setlength{\starspread}{#1}}}
\tikzset{starsize=1mm,
         starspread=3mm}
\pgfqpoint{\starspread}{\starspread}}
\pgfqpoint{\starspread}{\starspread}}
\newcommand{\renchi}[1]{{\color{red}{[Renchi: #1]}}}
\newcommand*\bigcdot{\mathpalette\bigcdot@{.5}}
\newcommand*\bigcdot@[2]{\mathbin{\vcenter{\hbox{\scalebox{#2}{$\m@th#1\bullet$}}}}}
\newcommand{\stitle}[1]{\vspace*{0.5em}\noindent{\underline{\bf #1.\/}}}
\newcommand{\V}{\mathcal{V}\xspace}
\newcommand{\G}{\mathcal{G}\xspace}
\newcommand{\EDG}{\mathcal{E}\xspace}
\newcommand{\PP}{\mathcal{P}\xspace}
\newcommand{\LL}{\mathcal{L}\xspace}
\newcommand{\NN}{\mathcal{N}\xspace}
\newcommand{\C}{\mathcal{C}\xspace}
\newcommand{\WM}{\boldsymbol{W}\xspace}
\newcommand{\EM}{\boldsymbol{E}\xspace}
\newcommand{\AM}{\boldsymbol{A}\xspace}
\newcommand{\DM}{\boldsymbol{D}\xspace}
\newcommand{\IM}{\boldsymbol{I}\xspace}
\newcommand{\SM}{\boldsymbol{S}\xspace}
\newcommand{\MM}{\boldsymbol{M}\xspace}
\newcommand{\PM}{\boldsymbol{P}\xspace}
\newcommand{\CM}{\boldsymbol{C}\xspace}
\newcommand{\YM}{\boldsymbol{Y}\xspace}
\newcommand{\XM}{\boldsymbol{X}\xspace}
\newcommand{\UM}{\boldsymbol{U}\xspace}
\newcommand{\VM}{\boldsymbol{V}\xspace}
\newcommand{\HM}{\boldsymbol{H}\xspace}
\newcommand{\ZM}{\boldsymbol{Z}\xspace}
\newcommand{\RM}{\boldsymbol{R}\xspace}
\newcommand{\NAM}{\boldsymbol{\hat{A}}\xspace}
\newcommand{\BM}{\boldsymbol{B}\xspace}
\newcommand{\NR}{\mathbb{N}_{\text{row}}\xspace}
\newcommand{\NC}{\mathbb{N}_{\text{col}}\xspace}
\newcommand{\QM}{\boldsymbol{Q}\xspace}
\newcommand{\algo}{\texttt{DEMM}\xspace}
\newcommand{\algoplus}{\texttt{DEMM}{+}\xspace}
\newcommand{\algoal}{\texttt{DEMM}-\texttt{NA}\xspace}
\newcommand{\embalgo}{\texttt{FAAO}\xspace}
\newcommand{\clustalgo}{\texttt{SSKC}\xspace}
\newenvironment{customlegend}[1][]{%
    \begingroup
    \csname pgfplots@init@cleared@structures\endcsname
    \pgfplotsset{#1}%
}{%
    \csname pgfplots@createlegend\endcsname
    \endgroup
}%
\def\addlegendimage{\csname pgfplots@addlegendimage\endcsname}
\newcommand\footnoteref[1]{\protected@xdef\@thefnmark{\ref{#1}}\@footnotemark}
\let\oldnl\nl
\newcommand{\nonl}{\renewcommand{\nl}{\let\nl\oldnl}}
\DeclareMathOperator{\Tr}{Tr}
\g@addto@macro{\@algocf@init}{\SetKwInOut{Parameter}{Parameters}} 
\definecolor{myred}{HTML}{fd7f6f}
\definecolor{myred_new}{HTML}{D8D8D8}
\definecolor{myred_new2}{HTML}{D7191C}
\definecolor{myblue}{HTML}{7eb0d5}
\definecolor{mygreen}{HTML}{b2e061}
\definecolor{mypurple}{HTML}{bd7ebe}
\definecolor{myorange}{HTML}{ffb55a}
\definecolor{myyellow}{HTML}{ffee65}
\definecolor{mypurple2}{HTML}{beb9db}
\definecolor{mypink}{HTML}{fdcce5}
\definecolor{mycyan}{HTML}{8bd3c7}
\definecolor{myblue2}{HTML}{115f9a}
\definecolor{myred2}{HTML}{c23728}
\newcommand{\eat}[1]{}
\definecolor{NSCcol1}{HTML}{1f77b4}
\definecolor{NSCcol2}{HTML}{aec7e8}
\definecolor{NSCcol3}{HTML}{ff7f0e}
\definecolor{NSCcol4}{HTML}{ffbb78}
\definecolor{NSCcol5}{HTML}{98df8a}
\definecolor{ggreen}{HTML}{9BBB59}
\definecolor{olive}{RGB}{128,128,0}        
\definecolor{magenta}{RGB}{255,0,255}
  \providecommand\BibTeX{{%
    \normalfont B\kern-0.5em{\scshape i\kern-0.25em b}\kern-0.8em\TeX}}}
\begin{document}


\title{Effective Clustering for Large Multi-Relational Graphs}
\subtitle{Technical Report}

\author{Xiaoyang Lin}
\affiliation{%
  \institution{Hong Kong Baptist University}
  \country{}
  \country{Hong Kong SAR, China}
}
\email{csxylin@comp.hkbu.edu.hk}

\author{Runhao Jiang}
\affiliation{%
  \institution{Hong Kong Baptist University}
  \country{Hong Kong SAR, China}
}
\email{csrhjiang@comp.hkbu.edu.hk}
\orcid{0009-0000-4841-9175}

\author{Renchi Yang}
\authornote{Corresponding Author}
\affiliation{%
  \institution{Hong Kong Baptist University}
  \country{Hong Kong SAR, China}
}
\email{renchi@hkbu.edu.hk}
\orcid{0000-0002-7284-3096}

\settopmatter{printfolios=true}

\renewcommand{\shortauthors}{Lin et al.}

\begin{abstract}
Multi-relational graphs (MRGs) are an expressive data structure for modeling diverse interactions/relations among real objects (i.e., nodes), which pervade extensive applications and scenarios. Given an MRG $\G$ with $N$ nodes, partitioning the node set therein into $K$ disjoint clusters (referred to as MRGC) is a fundamental task in analyzing MRGs, which has garnered considerable attention.
However, the majority of existing solutions towards MRGC either yield severely compromised result quality by ineffective fusion of heterogeneous graph structures and attributes, or struggle to cope with sizable MRGs with millions of nodes and billions of edges due to the adoption of sophisticated and costly deep learning models.

In this paper, we present \algo{} and \algoplus{}, two effective MRGC approaches to address the aforementioned limitations. Specifically, our algorithms are built on novel two-stage optimization objectives, where the former seeks to derive high-caliber node feature vectors by optimizing the {\em multi-relational Dirichlet energy} specialized for MRGs, while the latter minimizes the {\em Dirichlet energy} of clustering results over the node affinity graph.
In particular, \algoplus{} achieves significantly higher scalability and efficiency over our based method \algo{} through a suite of well-thought-out optimizations. Key technical contributions include (i) a highly efficient approximation solver for constructing node feature vectors, and (ii) a judicious and theoretically-grounded problem transformation together with carefully-crafted techniques that enable the linear-time clustering without explicitly materializing the $N\times N$ dense affinity matrix.
Further, we extend \algoplus{} to handle attribute-less MRGs through non-trivial adaptations.
Extensive experiments, comparing \algoplus{} against 20 baselines over 11 real MRGs, exhibit that \algoplus{} is consistently superior in terms of clustering quality measured against ground-truth labels, while often being remarkably faster.
\end{abstract}

\begin{CCSXML}
<ccs2012>
   <concept>
       <concept_id>10010147.10010257.10010258.10010260.10003697</concept_id>
       <concept_desc>Computing methodologies~Cluster analysis</concept_desc>
       <concept_significance>500</concept_significance>
       </concept>
   <concept>
       <concept_id>10010147.10010257.10010321.10010335</concept_id>
       <concept_desc>Computing methodologies~Spectral methods</concept_desc>
       <concept_significance>500</concept_significance>
       </concept>
   <concept>
       <concept_id>10002951.10003227.10003351.10003444</concept_id>
       <concept_desc>Information systems~Clustering</concept_desc>
       <concept_significance>500</concept_significance>
       </concept>
 </ccs2012>
\end{CCSXML}

\ccsdesc[500]{Computing methodologies~Cluster analysis}
\ccsdesc[500]{Computing methodologies~Spectral methods}
\ccsdesc[500]{Information systems~Clustering}

\keywords{multi-relational graphs, clustering, Dirichlet energy}


\maketitle

\section{Introduction}
{\em Multi-relational graphs} (MRGs) are data structures composed of nodes interconnected via multiple types of relations, which excel in modeling and capturing complex relations and associations among real-world entities. 
Practical MRGs include social networks, whose users are connected via friendships and varied interactive activities, biological graphs where biological entities (proteins or genes) are associated by interactions, regulatory relationships, or metabolic pathways, as well as financial networks that encompass diverse edges, such as transactions, ownerships, and contractual relationships.
Due to the omnipresence of such multi-relational data structures, MRGs find broad applications across various domains, including recommendation systems~\cite{GNNRECOfan2022,KEIGLi2024},
biomedicine~\cite{Gebnyue2019,li2021graph}, financial risk control~\cite{wang2024multi,tang2025fraud}, academic network mining~\cite{du2022academic,yang2014recommendation}, social network analysis~\cite{guesmi2019community,lin2021multi}, etc. 

As a fundamental analytical task, the goal of {\em multi-relational graph clustering} (MRGC) is to divide the MRG $\G$ into $K$ disjoint groups of nodes that are internally tightly-knit and similar, where the number $K$ of clusters is specified a priori.
Two real-world application examples (depicted in Fig~\ref{fig:youtube}) of MRGC are as follows:
\begin{itemize}[leftmargin=*]
    \item \textbf{Detecting Social Communities:} 
    On the video sharing website YouTube, as shown in Fig.~\ref{fig:youtube}, active users can connect via contact, co-subscription, co-subscribed, sharing favorite videos, and commenting, which form a multiple relational graph (MRG). Through MRGC, we can extract high-quality communities of users sharing similar interests by integrating such heterogeneous interactions/relations~\cite{tang2012community}, thereby facilitating video/YouTuber recommendations and advertising.
    \item \textbf{Neuroscience:} In brain networks, there are structural (e.g., axonal pathways) and functional (e.g., correlations in activity) connections among brain regions (e.g., neurons or cortical areas). The clustering over such multi-relational structures can help identify functional modules and offer valuable insights into brain structures and functions~\cite{ashourvan2019multi,crofts2022structure}.
\end{itemize}
Despite being superior in practical applications, compared to traditional graph clustering, MRGC poses unique challenges in fusing rich structures underlying heterogeneous relations, as well as exploiting nodal attributes that are widely present in real MRGs.

A straightforward treatment for MRGC is to simply convert the MRG $\G$ into a single-relational graph $\overline{\G}$ through an equal weighting of multiple-typed relations therein, followed by applying {\em attributed graph clustering} techniques~\cite{yang2021effective,liu2022survey} over $\overline{\G}$. This paradigm overlooks the specific nuances and importance of different relation types, engendering biased results and subpar clustering quality. 
For instance, on social networks, treating relationships, including friendships, family ties, and professional connections equally will obscure the distinction between close family members and distant acquaintances.

Over the past few years, there has been a surge of interest in designing approaches specially catered for MRGC~\cite{Hassani2020ContrastiveMR,Lin2021GraphFM,Pan2021MultiviewCG,Nie2017SelfweightedMC} (detailed in Section~\ref{related-work}).
The majority of them can be categorized into two groups: {\em Multi-Relational Structure} (MRS)-based and {\em Multi-View Embedding} (MVE)-based methods.
Specifically, as depicted in Figure~\ref{fig:workflows}, the MRS-based methodology~\cite{Lin2021GraphFM,Ling2023DualLG,Park2019UnsupervisedAM} focuses on automatically adjusting weights for the integration of graph structures $\{\AM^{(r)}\}$ under heterogeneous relation types in MRGs, before incorporating node attributes $\XM$ for subsequent clustering.
However, this category of methods primarily hinges on graph structures for weight adjustment, which disregards or underexploits the attribute information. Such an oversight results in inaccurate weights and severe misalignment between graph structures and node attributes.
In contrast, the MVE-based models~\cite{Liu2021MultilayerGC,Xia2021MultiviewGE,Pan2023BeyondHR,Shen2024BeyondRI,Qian2023UpperBB, Mo2023DisentangledMG, Shen2024BalancedMG, Peng2023UnsupervisedMG} reverse the above two steps (see Figure~\ref{fig:workflows}), where the former step turns to encoding attributes $\XM$ on each single-relational graph $\AM^{(r)}$ into node feature vectors $\HM^{(r)}$ severally, whilst the latter step attends to unifying these multi-typed feature vectors $\{\HM^{(r)}\}$ into the final representations $\HM$ for node clustering.
Although this post-fusion scheme enjoys better result effectiveness, it 
fails to adequately capture the structural consistencies, disparities, and complementaries of varied types of relations~\cite{Zhuang2024EnhancingMG}.

\begin{figure}[!t]
\centering
\includegraphics[width=0.99\columnwidth]{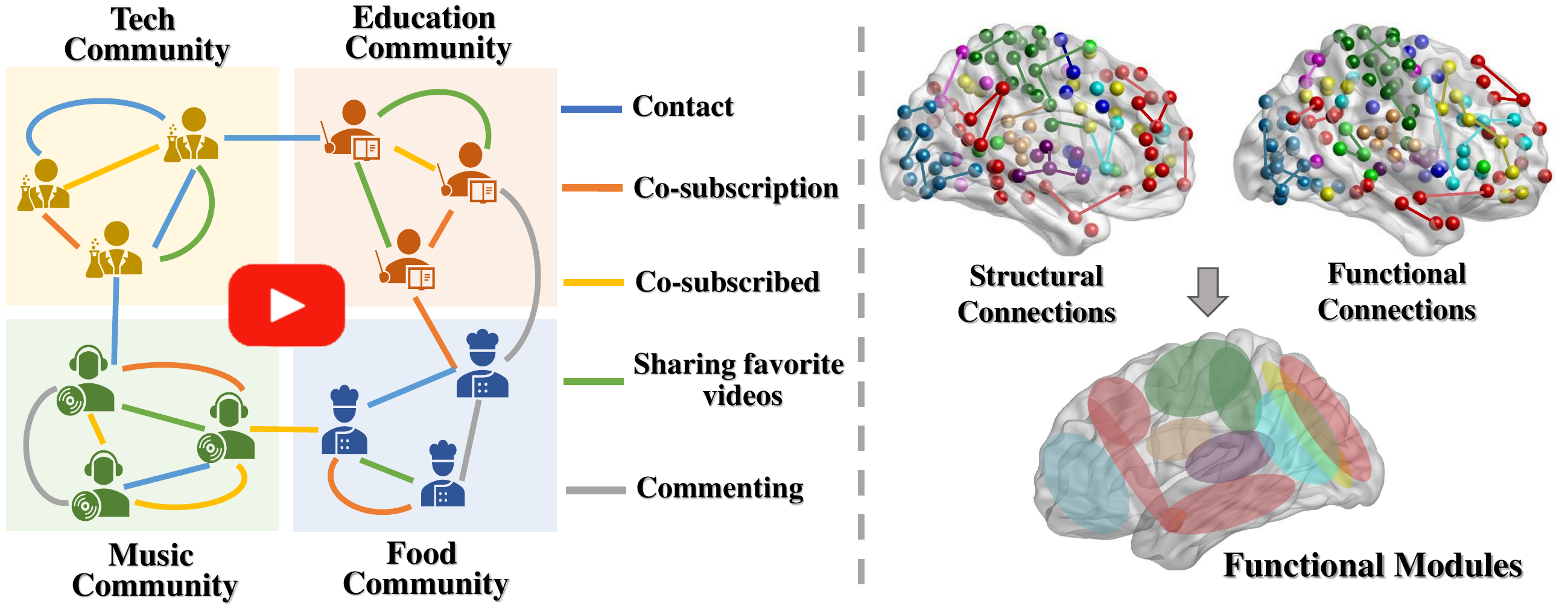}
\vspace{-2ex}
\caption{\textcolor{black}{Real application examples of MRGC.}}\label{fig:youtube}
\vspace{-2ex}
\end{figure}

In summary, extant MRGC studies still have flaws in reconciling multiplex relations and fusing information from heterogeneous structures and attributes, and thus, incur sub-optimal performance. On top of that, most solutions rely on sophisticated matrix solvers or deep learning models that entail substantial memory and compute-intensive operations, which are rather expensive for even medium-sized MRGs. 

\eat{
Overall, the current MRGC methods have following problems: first, most methods falls to reconciling heterogeneous relational dependencies across multiplex interactions and synchronizing the structural-attribute consistency alignment; second, the introduction of multi-relational graphs inherently elevates the time complexity of all MRGC algorithms. Although certain methods attempt to mitigate this by incorporating anchor graphs~\cite{kang2020large} to reduce computational  complexity overhead, such strategies inevitably compromise clustering quality to some extent. To address these challenges, we come up with
\algo{} (\underline{D}irichlet \underline{E}nergy \underline{M}inimization on \underline{M}ulti-Relational Graphs)
}

To overcome the deficiencies of existing methods, we propose \algo{} and \algoplus{} that achieve superb performance for MRGC over multiple real MRG datasets, through the optimization of our novel two-stage objective functions formulated based on the {\em Dirichlet energy} (DE)~\cite{zhou2005regularization} in a principled way.
As overviewed in Figure~\ref{fig:workflows}, distinct from MRS- and MVE-based approaches, \algo{} follows a two-stage pipeline, in which the first stage iteratively refines the node feature vectors $\HM$ by injecting information from node attributes $\XM$ and multiplex graph structures $\{\AM^{(r)}\}$, while the second phase constructs an affinity graph $\SM$ from $\HM$ and derives clusters therefrom. 
More concretely, in the first stage, the feature vectors $\HM$ and weights for integrating $\{\AM^{(r)}\}$ are alternatively updated towards optimizing the notion of {\em multi-relational Dirichlet energy} (MRDE) and ancillary terms, which is a new extension of the DE to MRGs dedicated to enforcing features of adjacent nodes of important relation types to be close.
In the same vein, \algo{} obtains clusters by minimizing their DE on $\SM$ such that cluster assignments of nodes with high affinity in $\SM$ are similar.
Unfortunately, \algo{} suffers from a quadratic complexity for the computation of $\HM$ and materialization of $\SM$, rendering it incompetent for large MRGs.

\begin{figure}[!t]
\centering
\includegraphics[width=0.9\columnwidth]{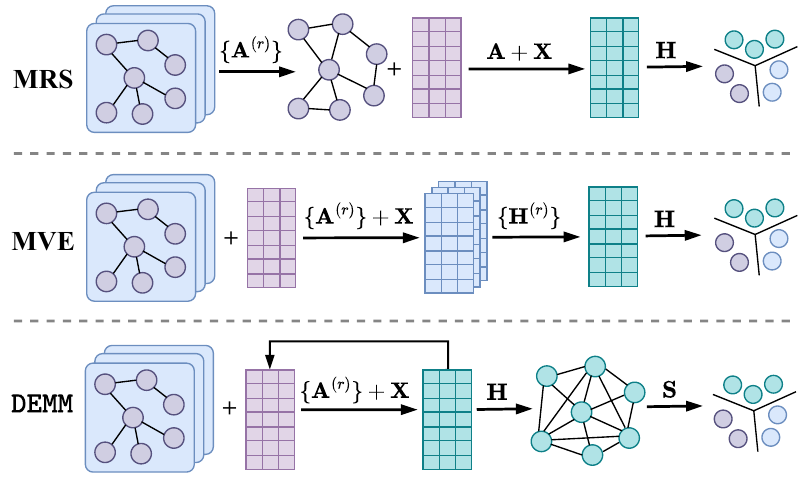}
\vspace{-2ex}
\caption{Workflows of existing MRGC methods and \algo{}.}
\label{fig:workflows}
\vspace{-3ex}
\end{figure}

To this end, we upgrade \algo{} to a linear-time method \algoplus{}, which obtains high efficiency without degrading result utility, via a series of novel algorithmic designs, optimization tricks, and theoretical analyses. Under the hood, \algoplus{} includes a carefully-designed approximate solver \embalgo{} for alternative updating of feature vectors $\HM$ and fusion weights, by uncovering computation bottlenecks and capitalizing on their mathematical properties for fast estimation.
In addition, through theoretically-grounded problem transformations along with our \clustalgo{} algorithm empowered by mathematical apparatus {\em random Fourier features}~\cite{rahimi2007random} and {\em Sinkhorn-Knopp normalization}~\cite{sinkhorn1967concerning}, \algoplus{} judiciously eliminates the need to materialize a quadratic-sized affinity graph and its rear-mounted arduous eigendecomposition in \algo{}.
Furthermore, we enable \algoplus{} over {\em attribute-less} MRGs that are under-explored in previous works with an additional orthogonality constraint.
Our empirical studies evaluating \algoplus{} against 20 competitors on 11 real MRG datasets demonstrate that \algoplus{} consistently and conspicuously outperforms the state-of-the-art solutions for MRGC in terms of clustering quality at a fraction of their computational expenses.

The contributions of this paper can be summarized as follows:
\begin{itemize}[leftmargin=*]
\item Conceptually, we introduce the new notion of MRDE on MRGs and formulate the MRGC task as a two-stage optimization problem based on the MRDE and DE.
\item Methodologically, we develop a brute-force algorithm \algo{} to solve the above objectives for effective MRGC, and a computationally tractable solver \algoplus{} for practical scalability with non-trivial theories and techniques \embalgo{} and \clustalgo{}. \algoplus{} is further extended as \algoal{} to attribute-less MRGs.
\item Empirically, we conduct extensive experiments on 9 real datasets of various sizes to validate the effectiveness and efficiency of proposed methods.
\end{itemize}


\section{Problem Formulation}
In this section, we set up the necessary preliminaries and provide a formalization of the MRGC problem.

\subsection{Symbol and Terminology}
\stitle{Matrix Notation} Throughout this paper, sets are denoted by calligraphic letters, e.g., $\V$. Matrices (resp. vectors) are written in bold uppercase (resp. lowercase) letters, e.g., $\MM$ (resp. $\mathbf{x}$). We use $\MM_{i}$ and $\MM_{\cdot,i}$ to represent the $i^{\textnormal{th}}$ row and column of $\MM$, respectively. $\|\MM\|_F$ denotes the Frobenius norm of matrix $\MM$ and $\texttt{nnz}(\MM)$ is the number of non-zero entries in $\MM$.
A matrix $\MM$ is said to be row-normalized (resp. column-normalized) if each $i^{\textnormal{th}}$ row (resp. column) is $L_2$ normalized, i.e., \(\|\MM_i\|_2\)=1 (resp. \(\|\MM_{\cdot,i}\|_2=1\)). For ease of exposition, we say $\MM\in \NR$ if $\MM$ is row-normalized.
By ``the first $K$ eigenvectors'', we refer to the eigenvectors corresponding to the $K$ largest eigenvalues of a matrix.

\begin{table}[!t]
\centering
\renewcommand{\arraystretch}{0.9}
\begin{footnotesize}
\caption{Frequently used symbols.}\vspace{-3mm} \label{tbl:symbol}
\resizebox{\columnwidth}{!}{%
\begin{tabular}{|p{0.51in}|p{2.5in}|}
\hline
{\bf Symbol}  &  {\bf Description}\\
\hline
$\V, \EDG^{(r)}$   & The node set and edge set of $r^{\textnormal{th}}$ relation type.\\ \hline
$N, M^{(r)}, M$ & The numbers of nodes, edges in $\EDG^{(r)}$, and all the edges.\\ \hline
$R, K$ & The numbers of relation types and desired clusters.\\ \hline
$D, d$ & The dimensions of the input attribute and feature vectors. \\ \hline
$\XM, \HM$ & Initial and target feature vectors of nodes.\\ \hline
$\DM^{(r)}$ & The diagonal degree matrix of $\EDG^{(r)}$. \\ \hline
$\AM^{(r)},\NAM^{(r)}$ & The adjacency matrix of $\EDG^{(r)}$ and its normalized version. \\ \hline
$\omega_r$ & The importance weight for $r^{\textnormal{th}}$ relation type. \\ \hline
$\YM, \SM$ & The NCI and affinity matrix defined in Eq.~\eqref{eq:NCI} and Eq.~\eqref{eq:edge-weight}.\\ \hline
$\mathcal{D}(\HM, \AM^{(r)})$ & The DE of $\HM$ on $\AM^{(r)}$ defined in Eq.~\eqref{eq:DE}.\\ \hline
$\alpha, \beta$ & The coefficients for terms $\mathcal{L}_{\textnormal{MRDE}}$ and $\mathcal{L}_{\textnormal{reg}}$ in Eq.~\eqref{eq:obj-emb}.\\ \hline
$L, m$ & The number of hops and sketching dimension in \embalgo{}.\\ \hline
\end{tabular}%
}
\end{footnotesize}
\vspace{-2ex}
\end{table}

\stitle{Graph Nomenclature}
A {\em multi-relational graph} (MRG) is defined as $\G=(\V,\{\EDG^{(r)}\}_{r=1}^R)$, where $\V$ denotes the set of $N$ distinct nodes and $\EDG^{(r)}$ contains a set of $M^{(r)}$ edges (or relations) between nodes in $\V$ in the $r^{\textnormal{th}}$ ($1\le r\le R$) type of relation.
The total number of edges in $\G$ is denoted by $M=\sum_{r=1}^R{M^{(r)}}$.
For each edge $(v_i,v_j)\in \EDG^{(r)}$ connecting nodes $v_i$ and $v_j$, we say $v_i$ and $v_j$ are neighbors to each other under $r^{\textnormal{th}}$ relation type. 
The degree of $v_i$ (i.e., the neighbors of $v_i$) in $\EDG^{(r)}$ is symbolized by $d^{(r)}_i$.
In particular, we refer to $\G$ as an {\em attributed} MRG if each node $v_i\in \V$ is endowed with a $D$-dimensional attribute vector $\XM_i$, and otherwise an {\em attribute-less MRG}. Unless specified otherwise, an MRG $\G$ is assumed to be attributed by default.

We denote by $\AM^{(r)}\in \{0,1\}^{N\times N}$ the adjacency matrix constructed from the edges in $\EDG^{(r)}$ and by $\DM^{(r)}$ the degree matrix whose diagonal entry $\DM^{(r)}_{i,i}=d^{(r)}_i$. Accordingly, the normalized adjacency matrix $\NAM^{(r)}$ is defined as $\NAM^{(r)}=\DM^{{(r)}^{-\frac{1}{2}}}\AM^{(r)}\DM^{{(r)}^{-\frac{1}{2}}}$ and the normalized Laplacian matrix is $\IM-\NAM^{(r)}$.
Additionally, the oriented incidence matrix of $\EDG^{(r)}$ is symbolized by $\EM^{(r)}\in \mathbb{R}^{N\times M^{(r)}}$, and $\EM^{(r)}{\EM^{(r)}}^\top=\DM^{(r)}-\AM^{(r)}$.
In Definition~\ref{def:mix-gap}, we define the {\em $(\ell_1,\ell_2)$-order maximum eigengap} (OME) of normalized adjacency matrix $\NAM$.
Table \ref{tbl:symbol} lists the frequently used symbols in this paper.

\begin{definition}[$(\ell_1,\ell_2)$-Order Maximum Eigengap]\label{def:mix-gap}
Let $\lambda_{i}(\NAM)$ be the $i^{\textnormal{th}}$ eigenvalue of $\NAM$. The $(\ell_1,\ell_2)$-order maximum eigengap  is $\mu_{\ell_1,\ell_2} = \underset{1\le i\le N}{\max}{|{\lambda_{i}(\NAM)}^{\ell_1}-{\lambda_{i}(\NAM)}^{\ell_2}|}$.
\end{definition}


\stitle{Multi-Relational Graph Clustering (MRGC)} Given an MRG $\G$ and the number $K$ of clusters, the overreaching goal of MRGC is to partition the node set $\V$ into $K$ disjoint groups $\{\mathcal{C}_1,\ldots, \mathcal{C}_K\}$ (i.e., $\bigcup_{k=1}^{K}\mathcal{C}_k=\V$ and $\mathcal{C}_i\cap \mathcal{C}_j=\varnothing$ for $i\neq j$), such that nodes with high attribute homogeneity and strong connectivity under $R$ relation types are in the same group, while dissimilar and distant ones fall into distinct clusters. 

\textcolor{black}{
This goal can typically be achieved through two subtasks. Firstly, the task is to construct a feature matrix $\HM$ that can accurately capture the affinity between nodes in terms of attribute similarity and multiplex structural connectivity in MRGs. Subsequently, clusters $\{\mathcal{C}_1,\ldots, \mathcal{C}_K\}$ can be derived from $\HM$ such that similar feature vectors in $\HM$ are grouped into the same clusters.
}
Particularly, clusters $\{\mathcal{C}_1,\ldots, \mathcal{C}_K\}$ can be represented in matrix form using an $N\times K$ {\em node-cluster indicator} (NCI) $\YM$ in which
\begin{equation}\label{eq:NCI}
\YM_{i,k} = \begin{cases}
\frac{1}{\sqrt{|\mathcal{C}_k|}}, & \text{if $v_i\in \mathcal{C}_k$}, \\
\quad 0, & \text{otherwise}.
\end{cases}
\end{equation}


\subsection{Multi-Relational Dirichlet Energy}\label{sec:MRDE}
\begin{figure}[!t]
\centering
\includegraphics[width=1.0\columnwidth]{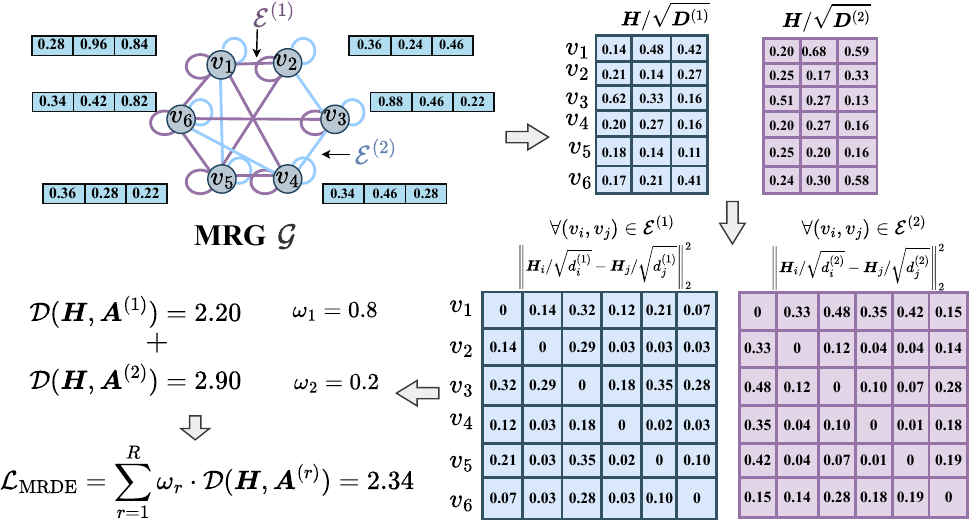}
\vspace{-4ex}
\caption{\textcolor{black}{A running example for MRDE.}}
\label{fig:mrde}
\vspace{-4ex}
\end{figure}

The {\em Dirichlet energy} (DE)~\cite{zhou2005regularization} of feature matrix $\HM\in \mathbb{R}^{N\times d}$ over a graph with edges $\EDG^{(r)}$ is defined by
\begin{footnotesize}
\begin{align}
\mathcal{D}(\HM, \AM^{(r)}) &= \frac{1}{2}\sum_{v_i,v_j\in \V}{\AM^{(r)}_{i,j}\cdot\left\|{\HM_i}/{\sqrt{\|\AM^{(r)}_i\|_1}}-{\HM_j}/{\sqrt{\|\AM^{(r)}_j\|_1}}\right\|^2_2} \notag \\
& =\frac{1}{2}\sum_{(v_i,v_j)\in \EDG^{(r)}}{\left\|{\HM_i}/{\sqrt{d^{(r)}_i}}-{\HM_j}/{\sqrt{d^{(r)}_j}}\right\|^2_2}\label{eq:DE},
\end{align}
\end{footnotesize}
where $\textstyle \left\|{\HM_i}/{\sqrt{d^{(r)}_i}}-{\HM_j}/{\sqrt{d^{(r)}_j}}\right\|^2_2$ measures the dissimilarity of the features of two adjacent nodes $v_i,v_j$ in $\EDG^{(r)}$. Intuitively, $\mathcal{D}(\HM, \AM^{(r)})$ assesses the overall {\em smoothness} of $\HM$ over $\EDG^{(r)}$, indicating whether node features in $\HM$ are similar across adjacent nodes. 

To quantify the smoothness of $\HM$ over the MRG $\G$, we extend the { Dirichlet energy} to the {\em multi-relational Dirichlet energy} (MRDE), which is formulated as follows:
\begin{small}
\begin{equation}\label{eq:MRDE}
\mathcal{L}_{\textnormal{MRDE}}= \sum_{r=1}^R{\omega_r\cdot\mathcal{D}(\HM, \AM^{(r)})}.
\end{equation}
\end{small}
$\omega_1,\ldots,\omega_R$ represents the {\em relation type weights} (hereafter RTWs), which specify the importance of the edges under $R$ relation types, respectively. Particularly, a low MRDE $\mathcal{L}_{\textnormal{MRDE}}$ reflects a high smoothness of $\HM$ over $\G$, while a high MRDE connotes a large divergence in features of adjacent nodes. \textcolor{black}{In other words, this implies that MRDE can be used to measure the quality of feature matrix $\HM$ in fusing multiplex structural connectivity in MRG $\G$.}

\textcolor{black}{
\begin{example}
Figure~\ref{fig:mrde} presents an MRG $\G$ that contains two types of relations ($\EDG^{(1)}$ and $\EDG^{(2)}$) and six nodes (i.e., $v_1$-$v_6$). The first (resp. second) type of relations is colored in purple (resp. blue). Each node $v_i$ in $v_1$-$v_6$ is associated with a 3-dimensional attribute vector $\HM_i$. 
By normalizing the attribute vectors by their respective node degrees in two types of relations, i.e., ${\HM_i}/{\sqrt{d^{(1)}_i}}$ and ${\HM_i}/{\sqrt{d^{(2)}_i}}$, we obtain two new node feature matrices $\HM/\sqrt{\DM^{(1)}}$ and $\HM/\sqrt{\DM^{(2)}}$.
For each edge $(v_i,v_j)\in \EDG^{(1)}$ (resp. $\EDG^{(2)}$), we calculate $\|{\HM_i}/{\sqrt{d^{(1)}_i}}-{\HM_j}/{\sqrt{d^{(1)}_j}}\|^2_2$ (resp. $\|{\HM_i}/{\sqrt{d^{(2)}_i}}-{\HM_j}/{\sqrt{d^{(2)}_j}}\|^2_2$). Summing up these values, respectively, leads to DE $\mathcal{D}(\HM, \AM^{(1)})=2.2$ and $\mathcal{D}(\HM, \AM^{(2)})=2.9$. Suppose that the RTWs are $\omega_1=0.8$ and $\omega_2=0.2$. The MRDE is then $\mathcal{L}_{\textnormal{MRDE}}= 0.8\times\mathcal{D}(\HM, \AM^{(1)})+0.2\times \mathcal{D}(\HM, \AM^{(2)}) = 2.34$.
\end{example}
}

\begin{table}[!h]
    \centering
    \vspace{-3mm}
    \small
    {
    \caption{\textcolor{black}{The MRDE and ACC values by \algoplus{} and \texttt{BMGC}~\cite{Shen2024BalancedMG}}.}
    \label{tab:mrde-com}
    \vspace{-2mm}
    \begin{tabular}{c|c|c|c|c|c|c}
        \hline
        {\bf Method} & {\bf Metric} & \bf{ {\em ACM}} & \bf{ {\em DBLP}} & \bf{ {\em ACM2}} & \bf{ {\em Yelp}} & \bf{ {\em IMDB}} \\
        \hline
        \multirow{2}{*}{\texttt{BMGC}} & MRDE & 1576.6 & 2837.6 & 2765.4 & 2164.5 & 1456.8 \\
            & ACC & 93.0 & 93.4 & 91.3 & 91.5 & 51.0 \\ \hline
       \multirow{2}{*}{\algoplus{}} & MRDE & 1380.6 & 2635.6 & 2505.8 & 2072.1 & 1296.4 \\
         & ACC & 93.6 & 93.7 & 91.3 & 92.7 & 67.6 \\
        \hline
    \end{tabular}
    \arrayrulecolor{black} 
    }
    \vspace{-2mm} 
\end{table}

\textcolor{black}{
Table~\ref{tab:mrde-com} reports the MRDE values of feature matrices obtained by a state-of-the-art MRGC approach \texttt{BMGC}~\cite{Shen2024BalancedMG} and our proposed \algoplus{}, as well as the final clustering accuracies (ACC) on five real datasets, respectively. The empirical results indicate that a smaller MRDE yields a better clustering quality on MRGs.
}

\eat{
\subsection{Spectral Graph Clustering}\label{sec:spectral}
{\em Spectral clustering}~\cite{von2007tutorial} is a canonical technique for graph clustering, which seeks to partition nodes towards minimizing intra-cluster connectivity, regardless of the node attributes.
One standard formulation of such objectives is the RatioCut~\cite{hagen1992new}: 
\begin{small}
\begin{equation*}
\min_{\{\C_1,\ldots,\C_K\}}\sum_{k=1}^{K}{\frac{1}{K}\sum_{v_i\in \C_k, v_j\in \V\setminus \C_k}{\frac{{\NAM}_{i,j}}{|\C_k|}}},
\end{equation*}
\end{small}
which is to minimize the average weight of edges connecting nodes in any two distinct clusters. As analysed in \cite{von2007tutorial}, the above objective is equivalent to finding an NCI optimizing the following trace minimization problem:
\begin{equation}\label{eq:spectral-loss}
\min_{\CM}\Tr(\CM^{\top}(\IM-\NAM)\CM),
\end{equation}
which is an NP-hard problem given the constraint in Eq.~\eqref{eq:NCI} on $\CM$. 
\begin{equation}\label{eq:NCI}
\CM_{i,j} = \begin{cases}
\frac{1}{\sqrt{|\mathcal{C}_j|}}, & \text{if $v_i\in \mathcal{C}_j$}, \\
\quad 0, & \text{otherwise}.
\end{cases}
\end{equation}
A common way is to compute an approximate solution by relaxing the discreteness condition on $\CM$ and allowing it to take arbitrary values in $\mathbb{R}$ such that the column-orthonormal property, i.e., $\CM^{\top}\CM=\IM$, still holds. By Ky Fan’s trace maximization principle~\cite{fan1949theorem}, it immediately leads to that the optimal solution is the $k$-largest eigenvectors of ${\NAM}$. The $K$-Means or rounding algorithms~\cite{shi2003multiclass,yang2024efficient} are then applied to convert the $k$-largest eigenvectors into an NCI.

Both GNN and Spectral clustering are conducting Dirichlet Energy Minimization~\cite{he2005laplacian}.
}

\begin{figure}[!t]
\centering
\includegraphics[width=\columnwidth]{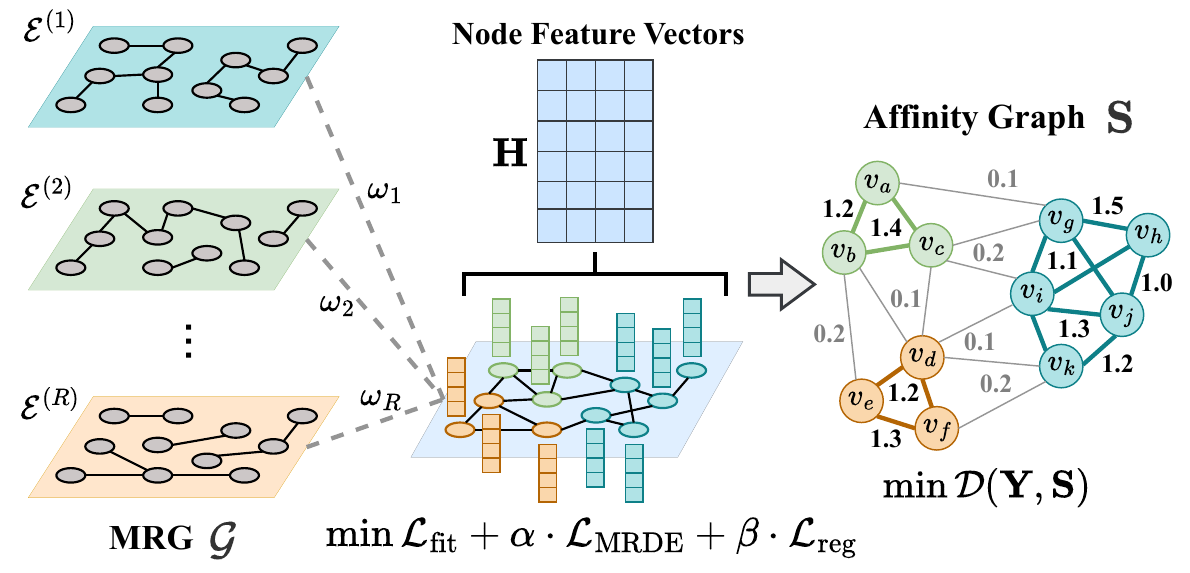}
\vspace{-5ex}
\caption{Two-Stage Optimization Objectives for MRGC.}
\label{fig:idea}
\vspace{-2ex}
\end{figure}

\subsection{Two-Stage Optimization Objectives}\label{sec:objtives}
Next, we define our two-stage objective functions schematized in Figure~\ref{fig:idea} for MRGC, based on the notions of DE and MRDE defined in Eq.~\eqref{eq:DE} and Eq.~\eqref{eq:MRDE}. 

\stitle{Stage I Objective}
{ As shown in Figure~\ref{fig:idea}, the first task is to fuse the attribute information in $\XM$ and the graph structures underlying $R$ types of relations $\{\EDG^{(1)},\EDG^{(1)},\ldots,\EDG^{(R)}\}$ into node feature vectors $\HM\in \NR$ by optimizing the following objective:}
\begin{small}
\begin{equation}\label{eq:obj-emb}
\begin{gathered}
\min_{\HM\in \NR,\ \omega_r\in \mathbb{R}} {\mathcal{L}_{\textnormal{fit}} + \alpha\cdot\mathcal{L}_{\textnormal{MRDE}}} + \beta\cdot \mathcal{L}_{\textnormal{reg}}  \quad \text{s.t.}  \quad \sum_{r=1}^R {\omega_r}=1,
\end{gathered}
\end{equation}
\end{small}
where the fitting and regularization terms $\mathcal{L}_{\textnormal{fit}}$, $\mathcal{L}_{\textnormal{reg}}$ are defined by
\begin{small}
\begin{equation*}\label{eq:regular-term}
\mathcal{L}_{\textnormal{fit}} = \|\HM-\XM\|_F^2, \quad \mathcal{L}_{\textnormal{reg}} = \sum_{r=1}^{R}\omega_r \cdot \|\NAM^{(r)}\|_F^2,
\end{equation*}
\end{small}
and $\alpha, \beta$ are their respective coefficients. The constraint $\sum_{r=1}^R {\omega_r}=1$ enforces a normalization on the $R$ RTWs.

{More specifically, the fitting term $\mathcal{L}_{\textnormal{fit}}$ seeks to reduce the discrepancy between the target node feature vectors $\HM$ and initial features\footnote{For notational convenience, we henceforth refer to the node attribute matrix denoised via a principal component analysis as initial features $\XM$.} $\XM\in \mathbb{R}^{N\times d}$, whereas the MRDE term $\mathcal{L}_{\textnormal{MRDE}}$ renders feature vectors $\HM_i$ and $\HM_j$ of nodes $v_i, v_j$ close to each other when they are connected via an edge of important types, i.e., its RTW $\omega_r$ is large. By minimizing MRDE, this stage seeks to obtain node feature vectors $\HM$ that are consistently smooth over the $R$ types of structural connectivity $\{\EDG^{(1)},\EDG^{(1)},\ldots,\EDG^{(R)}\}$ in MRGs.}
Notably, we additionally incorporate $\mathcal{L}_{\textnormal{reg}}$ to regularize RTWs $\{\omega_r\}_{r=1}^R$ with the consideration of the volumes of their associated edges, thereby preventing over-weighting (resp. under-weighting) the large (resp. small) edge set $\EDG^{(r)}$ (i.e., $\NAM^{(r)}$).
\textcolor{black}{In a nutshell, the main goal of Stage I is to compute RTWs $\{\omega_r\}_{r=1}^R$ automatically by optimizing the objective function to fuse $\{\EDG^{(r)}\}_{r=1}^R$, thereby obtaining node feature vectors $\HM$ while minimizing MRDE.}


\stitle{Stage II Objective}
In the second stage, the goal is to minimize the DE of NCI $\YM$ over an affinity graph $\SM$ constructed from node feature vectors $\HM$, i.e., 
\begin{equation}\label{eq:obj-clust}
\min_{\C_1,\ldots,\C_K}\mathcal{D}(\YM, \SM).
\end{equation}
Under certain assumptions on $\SM$, it can be transformed into
\begin{small}
\begin{equation}\label{eq:Ncut}
\min_{\C_1,\ldots,\C_K} \sum_{k=1}^{K}\sum_{v_i\in \C_k,v_j\in \V\setminus \C_k}{\frac{\SM_{i,j}}{|\C_k|}},
\end{equation}
\end{small}
which is to identify a set $\{\C_1,\ldots,\C_K\}$ of $K$ clusters that minimize the external connectivity of clusters.
{As exemplified in Figure~\ref{fig:idea}, clusters $v_a$-$v_c$, $v_d$-$v_f$, and $v_g$-$v_k$ are an ideal partitioning of $\V$ over $\SM$ since the affinity values of inter-partition nodes are merely $0.1$ or $0.2$, while those of intra-partition nodes are mostly more than $1.0$.
}

In particular, following the conventional choice for the affinity matrix of feature vectors in Euclidean space~\cite{shawe2004kernel,shi2000normalized}, we employ the {\em Gaussian kernel} with pairwise distance to measure the affinity of node pair $(v_i,v_j)$:
\begin{small}
\begin{equation}\label{eq:edge-weight}
\SM_{i,j} = \exp{\left(-\frac{\|{\HM}_i-{\HM}_j\|_2^2}{\sigma}\right)},
\end{equation}
\end{small}
where $\sigma$ is the {\em kernel width} parameter (typically 1 or 2). To accurately discriminate similar and dissimilar node pairs, node feature vectors $\HM$ is normalized such that $-1\le {\HM}_i\cdot {\HM}_j \le 1\ \forall{v_i,v_j \in \V}$ before constructing $\SM$.
\textcolor{black}{
Intuitively, minimizing $\mathcal{D}(\YM, \SM)$ is to minimize the Euclidean distances of feature vectors of nodes in the same clusters.
}



\eat{
\stitle{Theoretical Analysis}
Now we need to prove that maximizing $\max_{\YM}{\texttt{trace}(\YM^{\top}\SM\YM)} $ is equivalent to minimizing the conductance of the clustering, that is to say:
\begin{equation}
\max_{\YM}{\texttt{trace}(\YM^{\top}\SM\YM)} \Leftrightarrow \min_{\C_1,\ldots,\C_K}{\sum_{k=1}^{K}\sum_{v_i\in \C_k,v_j\in \V\setminus \C_k}{\frac{\SM_{i,j}}{|\C_k|}}}.
\end{equation}
As we referred before, $\SM$ is an affinity graph of $\G$ and 
 was constructed as a stochastic matrix by \algo{}. So we can deduce that $\IM-\SM$ is Laplacian of $\G$, for any row vectors $\mathbf{y}\in \mathbb{R}^n $, by the properties of the Laplace matrix:
 \begin{equation*}
     \mathbf{y}^{\top}(\IM-\SM)\mathbf{y} =\frac{1}{2}\sum_{v_i,v_j\in \V}{\SM_{i,j}\cdot (\mathbf{y}_i-\mathbf{y}_j)^2}.
 \end{equation*}
by extending row vectors $\mathbf{y}$ to  NCI $\YM$, we can easily get that:
\begin{align*}
    \texttt{trace}(\YM^{\top}\SM\YM)&=
    \texttt{trace}(\IM)-\texttt{trace}(\YM^T(\IM-\SM)\YM)\\
    &=N-\min_{\C_1,\ldots,\C_K}{\sum_{k=1}^{K}\sum_{v_i,v_j\in \V}\SM_{i,j}(\YM_{i,k}-\YM_{j,k})^2}\\ &=
    N-\min_{\C_1,\ldots,\C_K}\sum_{k=1}^{K}\sum_{v_i\in \C_k,v_j\in \V\setminus \C_k}{\frac{\SM_{i,j}}{|\C_k|}}
\end{align*}
which is also proof that the minimizing the Multi-Relational Dirichlet Energy $\mathcal{D}(\YM, \SM)$ is equivalent to the minimization the conductance of the clustering.
}

\section{The \algo{} Method}\label{sec:DEMM}

This section presents our first-cut solution \algo{} for MRGC, shown in Algorithm~\ref{alg:demm}. 
\textcolor{black}{At a high level, \algo{} is an approximate method towards optimizing our two-stage objective functions in Eq.~\eqref{eq:obj-emb} and~\eqref{eq:obj-clust} using an alternative optimization and spectral clustering under constraint relaxation, respectively.
}
More concretely, \algo{} takes as input an MRG $\G$, coefficients $\alpha$, $\beta$, and the number $K$ of clusters, and runs in two phases.
In the following, Section~\ref{sec:BFAO} details our brute-force alternative optimization method for our first objective in Eq.~\eqref{eq:obj-emb} to construct feature vectors $\HM$ (Stage I).
In Section~\ref{sec:SAGC}, we transform our clustering objective in Eq.~\eqref{eq:obj-clust} to its theoretically equivalent problem and apply a spectral approach to generate clusters $\{\C_1,\ldots,\C_k\}$ based on $\HM$ (Stage II).
Section~\ref{sec:analysis-1} provides theoretical analyses of \algo{} in terms of its correctness and computational complexity.

\begin{algorithm}[!t]
\caption{\algo{} Algorithm}\label{alg:demm}
\KwIn{ An MRG $\G$, parameters $\alpha$, $\beta$, and $K$.}
\KwOut{A set $\{\C_1,\C_2,\ldots,\C_K\}$ of $K$ clusters.}
\tcc{Brute-Force Alternating Optimization}
${\omega_r} \gets \frac{1}{R}\ \forall{1\le r\le R}$\;
\Repeat{$\HM$ converges}{
Compute $\NAM$ according to Eq.~\eqref{eq:A-sum}\;
Compute $\HM$ according to Eq.\eqref{eq:H-inv-comp}\;
Normalize $\HM$ such that $\HM\in \NR$\;
Update $\omega_r$ according to Eq.~\eqref{eq:update-w} $\forall{1\le r\le R}$\;
}
\tcc{Spectral Affinity Graph Clustering}
Normalize $\HM$ according to Eq.\eqref{eq:PCC}\;
Construct affinity matrix $\SM$ according to Eq.~\eqref{eq:edge-weight}\;
$\UM\gets$ the first $K$ eigenvectors of $\SM$\;
Run $K$-Means over $\UM$ to generate $\{\C_1,\ldots,\C_K\}$\;
\end{algorithm}

\subsection{Brute-Force Alternating Optimization}\label{sec:BFAO}
{ Given the hardness of Eq.~\eqref{eq:obj-emb}, we resort to an alternative optimization strategy to {\em approximately} solve this problem.}
Specifically, we update two variables, i.e., node feature vector $\HM$ and relation type weights $\{\omega_r\}_{r=1}^R$, alternatively, each time fixing one of them and updating the other, using the following rules.
\eat{
\renchi{new objective as below:}
We compute the similarity matrix $\SM$ based on the node embedding matrix $\HM$. Specifically, we measure the similarity between nodes using cosine similarity. The calculation is performed according to the following equation:
\begin{equation}
    \SM_{i,j}=\frac{h_i\cdot h_j}{\|h_i\|\cdot\|h_j\|}
\end{equation}
where $h_i$ represents the embedding of the $i$-th node.\\
Initially, $\HM=\XM$.

\renchi{
Either
\begin{gather*}
\min_{\HM,\omega_r} \alpha\|\HM-X\|_F^2 + \beta\sum_{r=1}^{R}{\omega_r}\texttt{trace}(\HM^{\top}(\IM-\NAM^{(r)})\HM) \\
\text{s.t.}  \quad 0\le {\omega_r}\le 1, \quad \sum_{r=1}^R{\omega_r}=1
\end{gather*}
or 
\begin{gather*}
\min_{\HM,\omega_r} \alpha\|\HM-X\|_F^2 + \beta\sum_{r=1}^{R}{{\omega_r}^2}\texttt{trace}(\HM^{\top}(\IM-\NAM^{(r)})\HM) \\
\text{s.t.}  \quad 0\le {\omega_r}^2\le 1, \quad \sum_{r=1}^R{{\omega_r}^2}=1
\end{gather*}

why we do this? We want either $\sum_{r=1}^{R}{\omega_r}(\IM-\NAM^{(r)})=\IM-\sum_{r=1}^{R}{\omega_r}{\NAM^{(r)}}$ or $\sum_{r=1}^{R}{{\omega_r}^2}(\IM-\NAM^{(r)})=\IM-\sum_{r=1}^{R}{{\omega_r}^2}{\NAM^{(r)}}$. Think of my words carefully!
}

\begin{gather*}
\min_{\HM,\omega_r} \alpha\|\HM-X\|_F^2 + \beta\sum_{r=1}^{R}{\omega_r}\texttt{trace}(\HM^{\top}(\IM-\NAM^{(r)})\HM) \\
\text{s.t.}  \quad 0\le {\omega_r}\le 1, \omega^T\mathbf{1}_R=1
\end{gather*}
\renchi{why $\NAM=\sum_{r=1}^{R} {\omega_r}^2 \NAM^{(r)}$ is a constraint? Why not $\sum_{r=1}^R{{\omega_r}^2}=1$}

\renchi{need detailed derivative steps. Can you simply apply the Neuman series without ensuring anything? The current results look wrong.}
}

\stitle{Update $\HM$ with $\{\omega_r\}_{r=1}^R$ fixed} Firstly, for any relation type $r$, we have the following fact: $\mathcal{D}(\HM, \EDG^{(r)}) = \texttt{trace}(\HM^{\top}(\IM-\NAM^{(r)})\HM)$.
{
Given fixed RTWs $\{\omega\}_{r=1}^R$, the original optimization objective in Eq.~\eqref{eq:obj-emb} can be simplified as the following partial objective function: $\min_{\HM\in \NR} \|\HM-\XM\|_F^2 + \alpha\cdot\mathcal{L}_{\textnormal{MRDE}}$,
which is equivalent to optimizing
}
\begin{equation}\label{eq:obj-H}
\min_{\HM\in \NR} \|\HM-\XM\|_F^2 + \alpha\cdot \texttt{trace}(\HM^{\top}(\IM-\NAM)\HM),
\end{equation}
where $\NAM$ is the weighted average of $\{\NAM^{(r)}\}_{r=1}^R$ defined in Eq.~\eqref{eq:A-sum}, henceforth referred to as the {\em unified normalized adjacency matrix}.
\begin{small}
\begin{equation}\label{eq:A-sum}
\NAM=\sum_{r=1}^{R}\omega_r\cdot\NAM^{(r)}
\end{equation}
\end{small}

\begin{lemma}\label{lem:opt-H}
The closed-form solution to Eq.~\eqref{eq:obj-H} is 
\begin{small}
\begin{equation}\label{eq:H-inv-comp}
\HM= \frac{1}{1+\alpha}\cdot \left(\IM-\frac{\alpha}{1+\alpha}\NAM\right)^{-1} \XM.
\end{equation}
\end{small}
\end{lemma}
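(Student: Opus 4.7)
My plan is to treat Eq.~\eqref{eq:obj-H} as an unconstrained quadratic minimization (the row-normalization constraint $\HM\in\NR$ is handled as a separate post-processing step in Algorithm~\ref{alg:demm}, line 5) and recover the closed form via the first-order optimality condition. Concretely, I will denote the objective by $f(\HM)=\|\HM-\XM\|_F^2 + \alpha\cdot \Tr\!\left(\HM^{\top}(\IM-\NAM)\HM\right)$ and compute $\nabla_{\HM} f$ directly. Using the standard identities $\nabla_{\HM}\|\HM-\XM\|_F^2 = 2(\HM-\XM)$ and $\nabla_{\HM}\Tr(\HM^{\top}\BM\HM)=(\BM+\BM^{\top})\HM$, the symmetry of $\IM-\NAM$ (inherited from the symmetry of each $\NAM^{(r)}$ and hence of $\NAM=\sum_r \omega_r\NAM^{(r)}$) yields
\begin{equation*}
\nabla_{\HM} f = 2(\HM-\XM) + 2\alpha(\IM-\NAM)\HM.
\end{equation*}
Setting this to zero and collecting terms gives $((1+\alpha)\IM-\alpha\NAM)\HM=\XM$, which, after factoring out $(1+\alpha)$, is exactly the expression claimed in Eq.~\eqref{eq:H-inv-comp}.

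The next step is to justify that this stationary point is the unique global minimizer. For this I would argue convexity: $\|\HM-\XM\|_F^2$ is strictly convex, and $\Tr(\HM^{\top}(\IM-\NAM)\HM)$ is convex because $\IM-\NAM$ is positive semidefinite. The latter follows because each $\NAM^{(r)}=\DM^{(r)^{-1/2}}\AM^{(r)}\DM^{(r)^{-1/2}}$ has its spectrum contained in $[-1,1]$, together with the simplex constraint $\sum_r \omega_r=1$ with $\omega_r\ge 0$, so $\NAM$ inherits the same spectral bound and $\IM-\NAM\succeq 0$. Strict convexity of the first term then makes $f$ strictly convex overall, so the stationary point is the unique global optimum.

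Finally, I would verify that the matrix being inverted is actually invertible. Since the eigenvalues of $\NAM$ lie in $[-1,1]$ and $\alpha\ge 0$, the eigenvalues of $\tfrac{\alpha}{1+\alpha}\NAM$ lie in $\left[-\tfrac{\alpha}{1+\alpha},\tfrac{\alpha}{1+\alpha}\right]\subsetneq(-1,1)$, so $\IM-\tfrac{\alpha}{1+\alpha}\NAM$ is strictly positive definite and therefore invertible; equivalently, $(1+\alpha)\IM-\alpha\NAM\succ 0$. I do not anticipate any serious obstacle: the only subtle point is ensuring the spectral bound on the fused $\NAM$, which reduces immediately to the standard fact that normalized adjacency matrices have spectrum in $[-1,1]$ combined with the simplex constraint imposed in Eq.~\eqref{eq:obj-emb}.
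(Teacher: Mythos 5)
Your proof is correct and follows essentially the same route as the paper's: the paper likewise derives Eq.~\eqref{eq:H-inv-comp} by setting the gradient of $\|\HM-\XM\|_F^2 + \alpha\cdot\texttt{trace}(\HM^{\top}(\IM-\NAM)\HM)$ to zero, obtaining $\alpha(\IM-\NAM)\HM + (\HM-\XM)=0$ and rearranging, with the row-normalization constraint deferred to a separate step. The only difference is that you additionally justify uniqueness (via convexity of $\IM-\NAM\succeq 0$) and invertibility of $\IM-\tfrac{\alpha}{1+\alpha}\NAM$ (via the spectral bound, which the paper establishes separately as Lemma~\ref{eq:eigval-bound}), points the paper's own proof leaves implicit.
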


{Our Lemma~\ref{lem:opt-H}\footnote{All proofs appear in Appendix~\ref{sec:proof}.} reveals that the optimal $\HM$ in Eq.~\eqref{eq:obj-H} (intermediate partial optimum to Eq.~\eqref{eq:obj-emb}) can be obtained through a matrix inverse as in Eq.~\eqref{eq:H-inv-comp}.
}

\stitle{Update $\{\omega_r\}_{r=1}^R$ with $\HM$ fixed} { When $\HM$ is at hand, the partial objective function of Eq.~\eqref{eq:obj-emb} can be rewritten as}
\begin{small}
\begin{equation*}
\begin{gathered}
\min_{\{\omega_r\}_{r=1}^R}\alpha\sum_{r=1}^{R}\omega_r \cdot\texttt{trace}\left(\HM^{\top}(\IM-\NAM^{(r)})\HM\right) + \beta \sum_{r=1}^{R}\omega_r \cdot \|\NAM^{(r)}\|_F^2
\end{gathered}
\end{equation*}
\end{small}
such that \(\sum_{r=1}^R{\omega_r}=1\).
By leveraging the Cauchy–Schwarz inequality, we can prove that the above \textcolor{black}{partial} objective is optimized when we set the RTW
\begin{small}
\begin{equation}\label{eq:update-w}
{\omega}_r=\frac{\left(\beta \cdot\|\NAM^{(r)}\|_F^2+\alpha\cdot \texttt{\textnormal{trace}}\left(\HM^{\top}(\IM-\NAM^{(r)})\HM\right)\right)^{-2}}{\sum_{r=1}^{R}\left(\beta\cdot \|\NAM^{(r)}\|_F^2+\alpha\cdot\texttt{\textnormal{trace}}\left(\HM^{\top}(\IM-\NAM^{(r)})\HM\right)\right)^{-2}}
\end{equation}
\end{small}
for each relation type $1\le r \le R$. Notice that $\{\|\NAM^{(r)}\|_F^2\}_{r=1}^R$ can be precomputed and reused in each iteration. We defer the detailed derivative steps to Appendix~\ref{sec:proof} for the sake of space.

Based on the above rules for updating $\HM$ and $\{\omega_r\}_{r=1}^R$, \algo{} (Algorithm~\ref{alg:demm}) begins by initializing RTWs $\omega_r=\frac{1}{R}$ $\forall{1\le r\le R}$ at Line 1. Continuing forth, Algorithm~\ref{alg:demm} starts an iterative process to update $\HM$ and $\{\omega_r\}_{r=1}^R$ in an alternating fashion (Lines 2-7). To be specific, \algo{} first fuses the normalized adjacency matrices of $R$ relation types into the unified normalized adjacency matrix $\NAM$ by Eq.~\eqref{eq:A-sum}, followed by an inverse of matrix \(\IM-\frac{\alpha}{1+\alpha}\NAM\) to get updated node feature vectors $\HM$ in Eq.~\eqref{eq:H-inv-comp} (Lines 3-4). $\HM$ is further row-normalized such that $\HM \in \NR$ at Line 5. After that, Algorithm~\ref{alg:demm} updates each relation type weight $\omega_r$ with the latest $\HM$ by Eq.~\eqref{eq:update-w} at Line 6, and repeats the above procedure until $\HM$ stabilizes.

\subsection{Spectral Affinity Graph Clustering}\label{sec:SAGC}
\begin{lemma}\label{lem:obj-clust}
If $\YM$ is required to be an $N\times K$ NCI as in Eq.~\eqref{eq:NCI}, then
\begin{equation}\label{eq:clust-obj}
\min_{\YM}\mathcal{D}(\YM, \SM) \Leftrightarrow \max_{\YM}{\texttt{trace}(\YM^{\top}\SM\YM)}.
\end{equation}
\end{lemma}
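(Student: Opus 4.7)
The plan is to reduce $\mathcal{D}(\YM,\SM)$ to a trace form, exploit the orthonormality of the NCI matrix $\YM$, and then extract a constant so that what remains is precisely $\Tr(\YM^{\top}\SM\YM)$ up to a multiplicative factor. First I would specialize the Dirichlet energy defined in Eq.~\eqref{eq:DE} to the affinity graph by setting $\AM^{(r)}=\SM$, giving
$$\mathcal{D}(\YM,\SM)=\tfrac{1}{2}\sum_{v_i,v_j\in\V}\SM_{i,j}\left\|\frac{\YM_i}{\sqrt{\|\SM_i\|_1}}-\frac{\YM_j}{\sqrt{\|\SM_j\|_1}}\right\|_2^{2}.$$
Using the classical equivalence between this quadratic form and the normalized Laplacian, this rewrites as $\Tr\!\bigl(\YM^{\top}(\IM-\DM_{\SM}^{-1/2}\SM\DM_{\SM}^{-1/2})\YM\bigr)$, where $\DM_{\SM}$ is the diagonal matrix of row sums of $\SM$.

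Next I would invoke the ``certain assumption on $\SM$'' flagged in the text just before Eq.~\eqref{eq:Ncut}, namely that $\SM$ has uniform row sums $\DM_{\SM}=c\IM$ for some constant $c$ (consistent with the Sinkhorn--Knopp normalization that \clustalgo{} later enforces in \algoplus{}). Under this assumption the normalized Laplacian becomes $\tfrac{1}{c}(c\IM-\SM)$, so
$$\mathcal{D}(\YM,\SM)=\Tr(\YM^{\top}\YM)-\tfrac{1}{c}\Tr(\YM^{\top}\SM\YM).$$

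The third step is to observe that the NCI condition in Eq.~\eqref{eq:NCI} makes the columns of $\YM$ pairwise orthogonal with unit $L_2$ norm, because column $k$ has exactly $|\mathcal{C}_k|$ nonzero entries each equal to $1/\sqrt{|\mathcal{C}_k|}$ and no two columns share a support. Hence $\YM^{\top}\YM=\IM_K$, so $\Tr(\YM^{\top}\YM)=K$ is constant in the choice of partition. Substituting back yields $\mathcal{D}(\YM,\SM)=K-\tfrac{1}{c}\Tr(\YM^{\top}\SM\YM)$, from which minimizing the left side over NCIs $\YM$ is equivalent to maximizing $\Tr(\YM^{\top}\SM\YM)$ over the same feasible set, proving Eq.~\eqref{eq:clust-obj}.

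The main obstacle I expect is the degree-normalization step: without the uniform row-sum assumption on $\SM$, the derivation only gives the equivalence with $\Tr(\YM^{\top}\tilde{\SM}\YM)$ for the normalized affinity $\tilde{\SM}=\DM_{\SM}^{-1/2}\SM\DM_{\SM}^{-1/2}$, and extra care is needed to justify that the paper's construction (the Gaussian kernel in Eq.~\eqref{eq:edge-weight}, possibly followed by Sinkhorn--Knopp) indeed yields $\DM_{\SM}\propto \IM$. Once that assumption is in place, the rest of the argument is a routine combination of the Laplacian quadratic-form identity and the NCI orthonormality.
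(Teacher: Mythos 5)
Your proposal is correct and follows essentially the same route as the paper's proof: expand the Dirichlet energy into the Laplacian quadratic form $\Tr(\YM^{\top}(\IM-\SM)\YM)$, then use the NCI orthonormality $\YM^{\top}\YM=\IM$ to make the first trace a constant, so that minimizing $\mathcal{D}(\YM,\SM)$ is equivalent to maximizing $\Tr(\YM^{\top}\SM\YM)$. If anything, you are more careful than the paper, which silently writes $\IM-\SM$ in place of the degree-normalized Laplacian; your explicit uniform row-sum assumption $\DM_{\SM}=c\IM$ is exactly the condition needed to license that step.
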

According to Lemma~\ref{lem:obj-clust}, our second optimization objective in Eq.~\eqref{eq:obj-clust} can be equivalently transformed to Eq.~\eqref{eq:clust-obj}, which is essentially an Ncut problem~\cite{shi2000normalized}.
Note that the N-cut problem has been proven to be NP-hard~\cite{goldschmidt1988polynomial,wagner1993between}. {We resort to a standard way of {\em spectral clustering}~\cite{von2007tutorial} to {\em approximately} solve it by first relaxing the discrete constraint in Eq.\eqref{eq:NCI} on $\YM$}, leading to the following objective function:
\begin{equation*}\label{eq:clust-obj-relax}
\max_{\tilde{\YM}\in \mathbb{R}^{N\times K}}{\texttt{trace}(\tilde{\YM}^{\top}\SM\tilde{\YM})}\quad \text{s.t.}\ \tilde{\YM}^\top\tilde{\YM}=\IM,
\end{equation*}
where $\tilde{\YM}$ is a continuous version of NCI $\YM$. 
According to Ky Fan's trace maximization principle~\cite{fan1949theorem}, the optimal solution is $\UM$ that contains the first $K$ eigenvectors of the affinity matrix $\SM$ as columns. 
The remaining task is then the conversion from $\UM$ into NCI $\YM$ by minimizing their {\em distance}, which typically can be done using rounding techniques~\cite{yang2024efficient,shi2003multiclass} or $K$-Means.

As illustrated at Lines 8-11 in Algorithm~\ref{alg:demm}, \algo{} proceeds to derive clusters from node feature vectors $\HM$ by first constructing the affinity matrix $\SM$ according to Eq.~\eqref{eq:edge-weight} (Lines 8-9). Particularly, before computing $\SM$, for each node $v_i\in \V$, Algorithm~\ref{alg:demm} applies a standardization $\HM_i - \overline{h}_i$, followed by an $L_2$ normalization, i.e.,
\begin{small}
\begin{equation}\label{eq:PCC}
{\HM}_i = \frac{\HM_i - \overline{h}_i}{\|\HM_i - \overline{h}_i\|_2},
\end{equation}
\end{small}
where $\overline{h}_i$ is the mean of $\HM_i$, i.e., $\frac{1}{d}\sum_{\ell=1}^d{\HM_{i,\ell}}$. As stated in Theorem 1 in \cite{tan2023metric}, this operation ensures the affinity $\HM_i\cdot \HM_j\in [-1,1]$ for any two nodes $v_i,v_j\in \V$.

Afterwards, the first $K$ eigenvectors $\UM$ of $\SM$ are then calculated through the popular {\em Arnoldi iterative solver} for partial eigendecomposition~\cite{lehoucq1996deflation} at Line 10. Following common practice in spectral clustering, we run $K$-Means over $\UM$ to produce NCI $\YM$, i.e., the $K$ clusters $\{\C_1,\C_2,\ldots,\C_K\}$ at Line 11. 
\eat{
Unlike classic spectral clustering that directly creates clusters by applying $K$-Means over eigenvectors $\UM$, Algorithm~\ref{alg:demm} leverages the \texttt{SNEM-Rounding} algorithm\footnote{We refer interested readers to Appendix~\ref{sec:algo-detail} for the algorithmic details.}~\cite{yang2024efficient} to derive clusters $\{\C_1,\C_2,\ldots,\C_K\}$ from $\UM$ (Line 11). Particularly, \texttt{SNEM-Rounding} iteratively seeks an orthogonal transformation matrix $\RM$ that most closely maps $\UM$ to NCI $\YM$ in terms of spectral norm:
\begin{equation*}
\min_{\RM\in\mathbb{R}^{K\times K}, \YM}{\|\UM\RM-\YM\|_2}\quad \text{s.t.}\ \RM\RM^\top = \IM.
\end{equation*}
As such, \({\texttt{trace}(\YM^{\top}\SM\YM)}\approx {\texttt{trace}(\RM^\top\UM^{\top}\SM\UM\RM)} ={\texttt{trace}(\UM^{\top}\SM\UM)}\), meaning that the loss \({\texttt{trace}(\YM^{\top}\SM\YM)}\) is close to the maximum since $\UM$ is the optimal solution to Eq.~\eqref{eq:clust-obj-relax}, and thus, \({\texttt{trace}(\UM^{\top}\SM\UM)}\) is the upper bound for \({\texttt{trace}(\YM^{\top}\SM\YM)}\).
}

\begin{figure}[!t]
\centering
\includegraphics[width=\columnwidth]{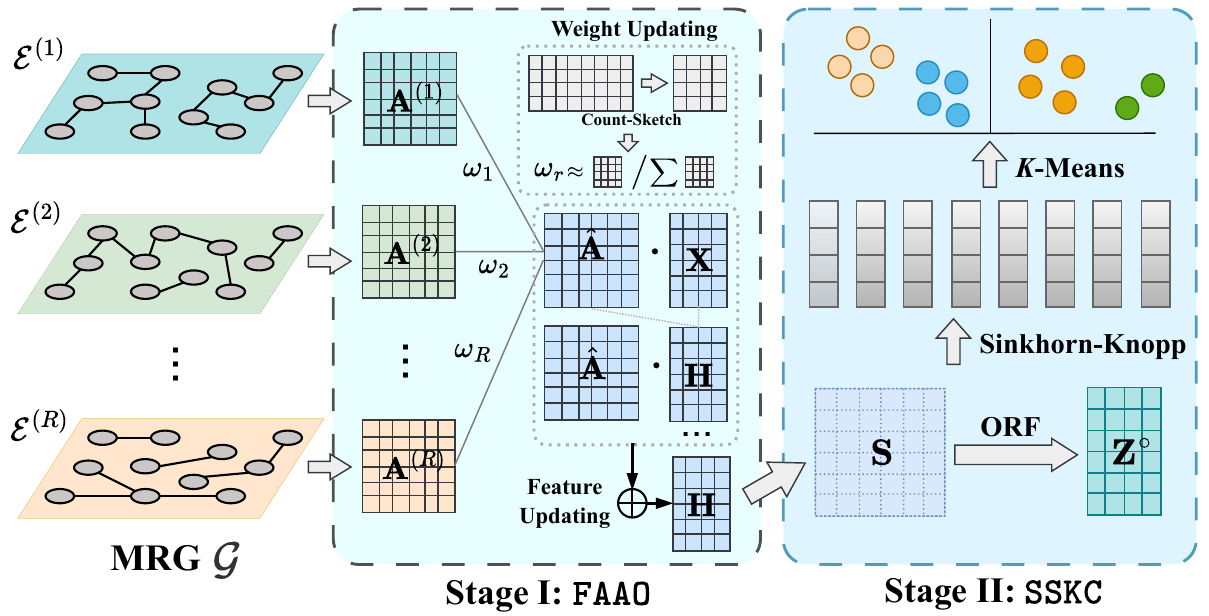}
\vspace{-4ex}
\caption{Overview of \algoplus{}.}
\label{fig:overview}
\vspace{-2ex}
\end{figure}

\subsection{Complexity Analysis}\label{sec:analysis-1}
Since Lines 3, 5, and 8 of Algorithm~\ref{alg:demm} merely involve summation of matrices and matrix normalizations, we focus on analyzing the complexities of computationally intensive operations. 
Particularly, inverting an $N \times N$ matrix followed by the multiplication with $\XM$ at Line 4 incurs a time cost of $O(MN+N^2 d)$. Line 6 calculates $\texttt{\textnormal{trace}}\left(\HM^{\top}(\IM-\NAM^{(r)})\HM\right)$ when updating each relation type weight $\omega_r$, leading to a total of $O(M d+N d^2R)$ time for $R$ relation type weights. 
In the second stage, Line 9 requires materializing the affinity matrix $\SM$ in Eq.~\eqref{eq:edge-weight} for all node pairs, consuming $O(N^2 d)$ time cost, whereas extracting the first $K$ eigenvectors of $\SM$ at Line 10 can be done in $O(N^2 K)$ time~\cite{lehoucq1996deflation}. Therefore, the overall time complexity of \algo{} is bounded by $O(MN+N^2d+Nd^2R)$.

Regarding space overhead, since the matrix inversion in Eq.~\eqref{eq:update-w} yields an $N\times N$ dense matrix and Line 9 materializes an $N \times N$ affinity matrix $\SM$, the total space complexity of \algo{} is $O(N^2)$.

\section{The \algoplus{} Algorithm}\label{sec:demm+}

Despite achieving high clustering quality as exhibited in experiments (Section~\ref{sec:exp}), \algo{} incurs quadratic computational cost and space overhead, and thus, is incompetent for large MRGs. As pinpointed in the preceding section, the colossal time and storage space are ascribed to the materialization of $N\times N$ {\em dense} matrices and expensive matrix operations, including inversion, multiplication, and eigendecomposition, in either the construction of node feature vectors $\HM$ or the generation of clusters $\{\C_1,\C_2,\ldots,\C_K\}$.
To alleviate such issues, this section further proposes \algoplus{} for MRGC, which is able to advance MRG clustering performance in efficiency without compromising the effectiveness. 

{Figure~\ref{fig:overview} depicts an overview of \algoplus{}. Akin to \algo{}, \algoplus{} consists of two secondary algorithms, \embalgo{} and \clustalgo{}, for the constructions of $\HM$ and $\{\C_1,\C_2,\ldots,\C_K\}$, respectively. At a high level, \algoplus{} develops a truncated approximation for $\HM$ and sketching-based estimations for RTWs in the first stage. Subsequently, it transforms the costly spectral clustering in Stage II to a cheap $K$-Means by adjusting $\SM$.} In Section~\ref{sec:FAAO}, we first elucidate the algorithmic details of \embalgo{}, which approximately updates $\HM$ and RTWs $\{\omega_r\}_{r=1}^R$ alternatively towards optimizing our objective in Eq.~\eqref{eq:obj-emb} using linear time and space. In lieu of optimizing Eq.~\eqref{eq:clust-obj} to get clusters $\{\C_1,\C_2,\ldots,\C_K\}$ via the explicit construction of the $N\times N$ affinity graph $\SM$ and costly spectral clustering, Section~\ref{sec:SSKC} presents our \clustalgo{} method that achieves a linear computational time complexity through a theoretically-grounded problem transformation and innovative adoption of mathematical apparatus, i.e., orthogonal random features and Sinkhorn-Knopp normalization.
\textcolor{black}{
Lastly, we further extend \algoplus{} to handle attribute-less MRGs (dubbed as \algoal{}). The algorithmic details are deferred to Appendix~\ref{sec:extend} for the interest of space. 
}

\eat{
\begin{table}[!t]
\centering
\renewcommand{\arraystretch}{1.0} 
\begin{small}
\caption{$(1-\mu(\NAM))^L$ of real MRGs. \renchi{so tiny?}}\label{tbl:sepctral-gap}
\vspace{-3mm}
\begin{tabular}{c|c|c|c|c|c}
\hline
{\bf $L$} & {\em ACM} & {\em DBLP} & {\em ACM2} & {\em Yelp} & {\em IMDB}  \\
\hline
$5$  &$1.2e^{-8}$&$1.5e^{-3}$ &$4.5e^{-8}$ &$1.4e^{-2}$ &$1.3e^{-11}$  \\ \hline
$10$  &$1.5e^{-16}$&$2.3e^{-6}$ &$2.1e^{-15}$&$1.9e^{-4}$& $1.6e^{-22}$ \\ \hline
$15$  &$1.9e^{-24}$ &$4.4e^{-9}$&$9.1e^{-23}$&$2.6e^{-6}$& $2.0e^{-33}$ \\ \hline
\end{tabular}
\end{small}
\vspace{-1ex}
\end{table}
}

\begin{algorithm}[!t]
\caption{\embalgo{} Algorithm}\label{alg:emb}
\KwIn{ An MRG $\G$, parameters $\alpha$, $\beta$, and $L$.}
\KwOut{Node feature vectors $\HM$}
${\omega_r} = \frac{1}{R}\ \forall{1\le r\le R}$\;
$\tilde{\EM}^{(r)} \gets \texttt{CountSketch}(\hat{\EM}^{(r)}, m)\ \forall{1\le r\le R}$\;
\Repeat{$\HM$ converges}{
Compute $\NAM$ by Eq.~\eqref{eq:A-sum}\;
$\widehat{\XM}^{(0)}\gets \frac{1}{1+\alpha}\cdot\XM,\ \HM\gets \widehat{\XM}^{(0)}$\;
\For{$\ell\gets 1$ \KwTo $L$}{
$\widehat{\XM}^{(\ell)} \gets \frac{\alpha}{1+\alpha}\cdot \NAM \widehat{\XM}^{(\ell-1)}$\;
$\HM \gets \HM + \widehat{\XM}^{(\ell)}$\;
}
$\HM \gets \HM + \alpha\cdot \widehat{\XM}^{(L)}$\;
Normalize $\HM$ such that $\HM\in \NR$\;
Update $\omega_r$ according to Eq.~\eqref{eq:update-w-new} $\forall{1\le r\le R}$\;
}
\end{algorithm}

\subsection{Fast Approximate Alternating Optimization}\label{sec:FAAO}
Recall that in Section~\ref{sec:BFAO}, the leading cause of the immense computational burden of building $\HM$ is the inversion of $\IM-\frac{\alpha}{1+\alpha}\NAM$ in Eq.~\eqref{eq:H-inv-comp}, which needs an $O(N^3)$ time.
On top of that, although $\{\|\NAM^{(r)}\|_F^2\}_{r=1}^R$ can be precomputed and the exact calculation of $\texttt{\textnormal{trace}}\left(\HM^{\top}(\IM-\NAM^{(r)})\HM\right)$ for each relation type $r$ in Eq.~\eqref{eq:update-w} takes a linear time of $O(Nd^2+M^{(r)}d)$ per iteration, the overall computational expenditure for updating $R$ relation type weights $\{\omega_r\}_{r=1}^R$ for multiple iterations is also significant.
Subsequently, we delineate the rationale behind \embalgo{} for tackling these efficiency challenges. 


\begin{theorem}[\cite{horn2012matrix}]\label{lem:neu}
Let $\MM$ be a matrix whose dominant eigenvalue $\lambda(\MM)$ satisfies $|\lambda(\MM)|<1$. Then, the inverse $(\IM-\MM)^{-1}$ can be expanded as a Neumann series: $(\IM-\MM)^{-1}=\sum_{\ell=0}^\infty\MM^\ell$.
\end{theorem}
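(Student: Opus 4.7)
The plan is to first confirm that $\IM-\MM$ is invertible, then show that the partial sums $\SM_L := \sum_{\ell=0}^{L}\MM^\ell$ converge, and finally identify the limit via a telescoping identity. Invertibility of $\IM-\MM$ is immediate: since every eigenvalue of $\MM$ has modulus strictly less than $1$, the value $1$ is not an eigenvalue of $\MM$, so $0$ is not an eigenvalue of $\IM-\MM$, hence $\IM-\MM$ is nonsingular.

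Next I would establish convergence of the partial sums. The key step is to produce a sub-multiplicative matrix norm $\|\cdot\|_\ast$ on $\mathbb{R}^{N\times N}$ for which $\|\MM\|_\ast < 1$. By the standard corollary of Gelfand's spectral-radius formula, for every $\epsilon > 0$ there exists such a norm with $\|\MM\|_\ast \le |\lambda(\MM)| + \epsilon$; choosing $\epsilon$ small enough that $|\lambda(\MM)|+\epsilon < 1$ gives $\rho := \|\MM\|_\ast < 1$. Sub-multiplicativity then yields $\|\MM^\ell\|_\ast \le \rho^\ell$, so for any $L_2 > L_1$,
\begin{equation*}
\|\SM_{L_2} - \SM_{L_1}\|_\ast \;\le\; \sum_{\ell=L_1+1}^{L_2} \rho^\ell \;\le\; \frac{\rho^{L_1+1}}{1-\rho},
\end{equation*}
which tends to $0$ as $L_1\to\infty$. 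Hence $\{\SM_L\}$ is Cauchy in a finite-dimensional normed space, so it converges to some limit $\SM_\infty$, and moreover $\MM^{L+1}\to \mathbf{0}$.

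With convergence in hand, I apply the telescoping identity $(\IM-\MM)\SM_L = \IM-\MM^{L+1}$, which follows by direct expansion. Letting $L\to\infty$ and using continuity of matrix multiplication gives $(\IM-\MM)\SM_\infty = \IM$. Multiplying both sides on the left by $(\IM-\MM)^{-1}$ (which exists by the first step) yields $\SM_\infty = (\IM-\MM)^{-1}$, completing the proof.

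The main obstacle is the norm argument: one cannot simply say $\|\MM\|<1$ in an arbitrary norm, because $|\lambda(\MM)|<1$ only guarantees this up to $\epsilon$ via Gelfand's formula. Once that adapted norm is in place, the rest is routine (geometric-series bound, telescoping, taking a limit). An alternative path would be to diagonalize or use Jordan form: writing $\MM = \PM\boldsymbol{J}\PM^{-1}$ reduces the claim to summing $\boldsymbol{J}^\ell$, whose entries involve binomial coefficients times powers of eigenvalues, each of which converges because $|\lambda_i(\MM)|<1$. I would prefer the norm-based route since it avoids the Jordan-block bookkeeping and works uniformly without assuming $\MM$ is diagonalizable.
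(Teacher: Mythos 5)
Your proof is correct. Note, however, that the paper does not prove this statement at all: Theorem~\ref{lem:neu} is imported verbatim from the cited reference~\cite{horn2012matrix}, so there is no in-paper argument to compare against. What you have written is essentially the standard textbook proof from that reference — invertibility of $\IM-\MM$ from the spectral assumption, convergence of the partial sums via an adapted sub-multiplicative norm with $\|\MM\|_\ast<1$ (the Gelfand/Householder lemma), and identification of the limit through the telescoping identity $(\IM-\MM)\sum_{\ell=0}^{L}\MM^\ell=\IM-\MM^{L+1}$ — and all three steps are carried out correctly, including the one subtle point you flag, namely that $|\lambda(\MM)|<1$ does not give $\|\MM\|<1$ in an arbitrary norm and must be upgraded via the $\epsilon$-adapted norm.
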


\begin{lemma}\label{eq:eigval-bound}
Let $\lambda(\NAM)$ be the dominant eigenvalue of $\NAM$. $|\lambda(\NAM)|\leq 1$.
\end{lemma}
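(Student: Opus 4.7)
The plan is to chain together three standard facts: (i) each per-relation normalized adjacency $\NAM^{(r)}$ has spectral radius at most $1$; (ii) the algorithmic update keeps the weights $\omega_r$ non-negative and convex; (iii) the spectral norm is subadditive, so convex combinations preserve the unit bound. Because $\NAM$ is symmetric (being a non-negatively weighted sum of symmetric matrices), the dominant eigenvalue coincides with the spectral norm, which gives the claim.

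First I would establish step (i). The matrix $\NAM^{(r)}=\DM^{{(r)}^{-1/2}}\AM^{(r)}\DM^{{(r)}^{-1/2}}$ is symmetric and similar to the row-stochastic walk matrix $\DM^{{(r)}^{-1}}\AM^{(r)}$ via the conjugation $\DM^{{(r)}^{1/2}}(\cdot)\DM^{{(r)}^{-1/2}}$ (after standard treatment of any isolated nodes, for which the corresponding rows/columns are zero and contribute eigenvalue $0$). Since a row-stochastic matrix has spectrum contained in the closed unit disk, and similarity preserves eigenvalues, every eigenvalue of $\NAM^{(r)}$ lies in $[-1,1]$. Equivalently, $\|\NAM^{(r)}\|_2 \le 1$. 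This is the key input; it is not a calculation I need to grind through, but I do need to state it carefully to cover isolated-node corner cases.

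Next I would handle step (ii). Although the optimization in Eq.~\eqref{eq:obj-emb} only imposes $\sum_r \omega_r = 1$, Algorithm~\ref{alg:demm} initializes $\omega_r = 1/R > 0$ and subsequently updates via Eq.~\eqref{eq:update-w}, which is a ratio of squared reciprocals of non-negative quantities (note that $\mathrm{trace}(\HM^\top(\IM-\NAM^{(r)})\HM)\ge 0$ because $\IM-\NAM^{(r)}$ is positive semidefinite by step (i)). Hence every iterate satisfies $\omega_r \ge 0$, and together with the normalization constraint we have a genuine convex combination. This is the only subtle point where the lemma might fail if misread, so I would make the non-negativity explicit here rather than taking it for granted.

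Finally, for step (iii) I would apply the triangle inequality for the spectral norm and step (i):
\begin{equation*}
\|\NAM\|_2 \;=\; \Bigl\|\sum_{r=1}^R \omega_r \NAM^{(r)}\Bigr\|_2 \;\le\; \sum_{r=1}^R \omega_r \|\NAM^{(r)}\|_2 \;\le\; \sum_{r=1}^R \omega_r \;=\; 1.
\end{equation*}
Because $\NAM$ is symmetric, $\|\NAM\|_2 = \max_i |\lambda_i(\NAM)| = |\lambda(\NAM)|$, so $|\lambda(\NAM)| \le 1$, completing the proof. The main obstacle is really only expository: namely, being precise about non-negativity of the $\omega_r$ and about isolated nodes when invoking the walk-matrix similarity; the rest is a two-line spectral-norm estimate.
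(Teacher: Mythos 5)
Your proof is correct, and it follows the same two-step skeleton as the paper's: first bound each per-relation matrix $\NAM^{(r)}$, then pass to the convex combination $\NAM=\sum_r\omega_r\NAM^{(r)}$. The tools differ at both steps, though, and the differences are worth noting. For the per-relation bound, the paper writes the dominant eigenvalue as a Rayleigh quotient, substitutes $\mathbf{y}=\DM^{(r)-1/2}\mathbf{x}$, and applies Cauchy--Schwarz to show $\mathbf{y}^\top\AM^{(r)}\mathbf{y}\le\mathbf{y}^\top\DM^{(r)}\mathbf{y}$; you instead invoke similarity to the row-stochastic walk matrix. Both are standard and correct, but your route delivers $\|\NAM^{(r)}\|_2\le 1$, i.e.\ the \emph{two-sided} bound $\lambda_i(\NAM^{(r)})\in[-1,1]$, whereas the paper's Rayleigh-quotient computation only bounds the largest eigenvalue from above. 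For the combination step, the paper uses Lidskii's inequality on the top eigenvalue while you use subadditivity of the spectral norm; these are essentially interchangeable here, but again your version controls $\max_i|\lambda_i(\NAM)|$ directly, which is what the lemma actually asserts ($|\lambda(\NAM)|\le 1$), while the paper's argument as written only establishes $\lambda_{\max}(\NAM)\le 1$ and leans on an incorrect side claim that each $\NAM^{(r)}$ is positive semidefinite (false for, e.g., bipartite relations, which have eigenvalue $-1$). You also make explicit two points the paper leaves implicit: non-negativity of the $\omega_r$ (needed for both Lidskii and the triangle-inequality step, and guaranteed by the update rule in Eq.~\eqref{eq:update-w}) and the isolated-node corner case where $\DM^{(r)-1/2}$ is degenerate. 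On balance your argument is slightly more careful and proves the statement as literally written.
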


\stitle{Basic Idea}
As per our theoretical outcome in Lemma~\ref{eq:eigval-bound}, the dominant eigenvalue of $\frac{1}{1+\alpha}\NAM$ is bounded by $\frac{1}{1+\alpha}< 1$. Combining it with Theorem~\ref{lem:neu} transforms Eq.~\eqref{eq:H-inv-comp} into an equivalent form:
\begin{small}
\begin{equation}\label{eq:comp-H-new}
\HM=\frac{1}{1+\alpha}\sum_{\ell=0}^{\infty}{\left(\frac{1}{1+\alpha}\right)}^\ell\NAM^\ell\XM,
\end{equation}
\end{small}
{ which remains the optimal solution to our conditional objective function in Eq.~\eqref{eq:obj-H} when RTWs are fixed.}
Although Eq.~\eqref{eq:comp-H-new} offers an iterative way of calculating $\HM$, its exact computation requires summing up an infinite series, which is infeasible.

{Notice that $\NAM^L$ can be interpreted as $L$-hop random walks over $\G$, wherein each entry $\NAM^L_{i,j}$ signifies the probability of a random walk originating from node $v_i$ visiting node $v_j$ at the $L$-th hop. Accordingly, the term $\sum_{\ell=0}^{\infty}{\left(\frac{1}{1+\alpha}\right)}^\ell\NAM^\ell$ in $\HM$ can be perceived as the total probabilities of random walks of various lengths, where length-$\ell$ random walks are weighted with ${\left(\frac{1}{1+\alpha}\right)}^\ell$. As such, one potential solution to estimate $\HM$ is to discard long random walks, i.e., random walks beyond $L$ ($L$ is a small integer) hops, as their weights are lower.
}

Due to the {\em mixing time}~\cite{levin2017markov} of random walks on graphs, the $L$-hop random walk probability $\NAM^{L}_{i,j}$ converges to an invariant value $a_{i,j}$ after a number of steps.
Mathematically, the overall discrepancy between $(L+1)$-hop and $L$-hop random walk probabilities $\|\NAM^{L+1}-\NAM^L\|_2$ can be proved to be equal to the $(L,L+1)$-OME $\mu_{L,L+1}$:
\begin{equation*}\label{eq:OME-dist}
\|\NAM^{L+1}-\NAM^L\|_2=\mu_{L,L+1}.
\end{equation*}
As reported in Figure~\ref{fig:mu_L}, $(L,L+1)$-OME of real MRGs {\em DBLP}~\cite{ZhaoWSLY20} and {\em Yelp}~\cite{Shi2022RHINERS} dwindles to nearly zero when $L$ is roughly $8$, indicating that the convergence/mixing of $\NAM^L$ can be achieved with merely a handful of hops.
Inspired by this, our idea is to compute an approximate $\HM$, 
\begin{small}
\begin{align}
\HM & \approx \frac{1}{1+\alpha}\sum_{\ell=0}^{L}{\left(\frac{\alpha}{1+\alpha}\right)}^\ell\NAM^\ell\XM + \frac{1}{1+\alpha}\sum_{\ell=L+1}^{\infty}{\left(\frac{\alpha}{1+\alpha}\right)}^\ell\NAM^L\XM \notag\\
& = \frac{1}{1+\alpha}\sum_{\ell=0}^{L}{\left(\frac{\alpha}{1+\alpha}\right)}^\ell\NAM^\ell\XM + \left(\frac{\alpha}{1+\alpha}\right)^{L+1}\NAM^L\XM \label{eq:Hprime},
\end{align}
\end{small}
wherein the terms $\NAM^\ell$ beyond $L$-th orders ($\ell \ge L+1$) are estimated using $\NAM^L$. In doing so, $\HM$ can be efficiently calculated as $L$ is merely up to a few dozen in practice.





\begin{lemma}\label{lem:trace-Fnorm}
Let $\hat{\EM}^{(r)}=\DM^{{(r)}-\frac{1}{2}}\EM^{(r)}$. \(\texttt{trace}\left(\HM^{\top}(\IM-\NAM^{(r)})\HM\right)=\|\HM^{\top}\hat{\EM}^{(r)}\|^2_F\ \forall{1\le r\le R}\).
\end{lemma}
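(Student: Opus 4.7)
\textbf{Proof proposal for Lemma~\ref{lem:trace-Fnorm}.} The plan is to show that $\IM-\NAM^{(r)}$ admits a clean factorization as $\hat{\EM}^{(r)}\hat{\EM}^{(r)\top}$, and then fold this factorization through the trace to expose a Frobenius norm. The key algebraic fact that makes this work is the identity already stated in the preliminaries, namely $\EM^{(r)}{\EM^{(r)}}^\top=\DM^{(r)}-\AM^{(r)}$, which says that the (unnormalized) combinatorial Laplacian of $\EDG^{(r)}$ factors through the oriented incidence matrix.

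First I would left- and right-multiply the above identity by $\DM^{{(r)}^{-1/2}}$. By definition of the normalized adjacency matrix, $\NAM^{(r)}=\DM^{{(r)}^{-1/2}}\AM^{(r)}\DM^{{(r)}^{-1/2}}$, so
\begin{equation*}
\IM-\NAM^{(r)}=\DM^{{(r)}^{-1/2}}\bigl(\DM^{(r)}-\AM^{(r)}\bigr)\DM^{{(r)}^{-1/2}}=\DM^{{(r)}^{-1/2}}\EM^{(r)}{\EM^{(r)}}^\top\DM^{{(r)}^{-1/2}}.
\end{equation*}
Recognizing $\hat{\EM}^{(r)}=\DM^{{(r)}^{-1/2}}\EM^{(r)}$ as the normalized incidence matrix and using the symmetry of $\DM^{{(r)}^{-1/2}}$, this rewrites as $\IM-\NAM^{(r)}=\hat{\EM}^{(r)}\hat{\EM}^{(r)\top}$.

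Substituting this factorization into the trace and invoking the standard identity $\texttt{trace}(\MM^\top\MM)=\|\MM\|_F^2$ together with cyclicity of the trace then yields
\begin{equation*}
\texttt{trace}\bigl(\HM^{\top}(\IM-\NAM^{(r)})\HM\bigr)=\texttt{trace}\bigl(\HM^{\top}\hat{\EM}^{(r)}\hat{\EM}^{(r)\top}\HM\bigr)=\bigl\|\HM^{\top}\hat{\EM}^{(r)}\bigr\|_F^2,
\end{equation*}
which is the desired equality, and completes the argument for every $1\le r\le R$.

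The proof is essentially one line of linear algebra once the incidence-matrix factorization is invoked, so there is no real obstacle; the only subtlety worth double-checking is that $\DM^{(r)}$ is diagonal and invertible (which is implicit in the paper's use of $\DM^{{(r)}^{-1/2}}$ throughout, and holds as long as $\EDG^{(r)}$ has no isolated nodes; in the presence of isolated nodes one interprets $\DM^{{(r)}^{-1/2}}$ as a pseudoinverse on those diagonal entries, which leaves both sides of the claimed identity unchanged because the corresponding rows of $\EM^{(r)}$ and of $\AM^{(r)}$ are zero).
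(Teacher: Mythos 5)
Your proposal is correct and follows essentially the same route as the paper's proof: both factor $\IM-\NAM^{(r)}$ as $\hat{\EM}^{(r)}\hat{\EM}^{(r)\top}$ via the identity $\EM^{(r)}{\EM^{(r)}}^\top=\DM^{(r)}-\AM^{(r)}$ and then read off the Frobenius norm from the trace. Your extra remark on isolated nodes is a sensible precaution but does not change the argument.
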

On the other hand, Lemma~\ref{lem:trace-Fnorm} suggests that we can leverage the matrix norm \(\|\HM^{\top}\hat{\EM}^{(r)}\|^2_F\) instead of the matrix trace for updating RTW $\omega_r$ in Eq.~\eqref{eq:update-w} in $O(M^{(r)}d)$ time since the normalized oriented incidence matrix $\hat{\EM}^{(r)}$ contains $2M^{(r)}$ non-zero entries and can be materialized in the preprocessing.
This time cost can be further reduced if a low-dimensional sparse matrix $\tilde{\EM}^{(r)} \in \mathbb{R}^{N\times m}$ ($m\ll M^{(r)}$ and $\texttt{nnz}(\tilde{\EM}^{(r)})\ll M^{(r)}$) can be created such that $\|\HM^{\top}\hat{\EM}^{(r)}\|^2_F\approx \|\HM^{\top}\tilde{\EM}^{(r)}\|^2_F$ for estimating $\omega_r$.


\eat{
\renchi{
Let $\EM^{(r)}\in \mathbb{R}^{N\times M^{(r)}}$ be the oriented incidence matrix of $\EDG^{(r)}$. See \url{https://en.wikipedia.org/wiki/Incidence_matrix} and \url{https://networkx.org/documentation/stable/reference/generated/networkx.linalg.graphmatrix.incidence_matrix.html} for the oriented incidence matrix. Particulalrly, $\DM^{(r)}-\AM^{(r)}=\EM^{(r)}{\EM^{(r)}}^\top$.

Let $\hat{\EM}^{(r)}=\DM^{{(r)}-\frac{1}{2}}\EM^{(r)}$.
\begin{align*}
\texttt{trace}\left(\HM^{\top}(\IM-\NAM^{(r)})\HM\right) & = \texttt{trace}\left(\HM^{\top}\DM^{{(r)}-\frac{1}{2}}(\DM-\AM^{(r)})\DM^{{(r)}-\frac{1}{2}}\HM\right) \\
& = \texttt{trace}\left(\HM^{\top}\DM^{{(r)}-\frac{1}{2}}\EM^{(r)}{\EM^{(r)}}^\top\DM^{{(r)}-\frac{1}{2}}\HM\right)\\
& = \texttt{trace}\left(\HM^{\top}\hat{\EM}^{(r)}{\hat{\EM}^{(r) \top}}\HM\right) \\
& = \|\HM^{\top}\hat{\EM}^{(r)}\|^2_F
\end{align*}

We may apply a count-sketch over $\hat{\EM}^{(r)}$ in the preprocessing to get $\tilde{\EM}^{(r)}$ such that $\hat{\EM}^{(r)} \hat{\EM}^{(r) \top}\approx \tilde{\EM}^{(r)}\tilde{\EM}^{(r) \top}$. $\tilde{\EM}^{(r)}$ can be reused in each iteration for computing $\|\HM^{\top}\hat{\EM}^{(r)}\|^2_F\approx \|\HM^{\top}\tilde{\EM}^{(r)}\|^2_F$
}
}
\eat{
Let $\HM^\circ$ be the old embeddings and $\HM=\HM^\circ+\boldsymbol{\Delta}$ be the new embeddings. Then, $\texttt{trace}\left(\HM^{\top}(\IM-\NAM^{(r)})\HM\right)$ can be decomposed into
\begin{align*}
& \texttt{trace}\left(\HM^{\top}(\IM-\NAM^{(r)})\HM\right) = \texttt{trace}\left((\HM^\circ+\boldsymbol{\Delta})^{\top}(\HM^\circ+\boldsymbol{\Delta})\right)\\
&  -  \texttt{trace}\left((\HM^\circ+\boldsymbol{\Delta})^{\top}\NAM^{(r)}(\HM^\circ+\boldsymbol{\Delta})\right)\\
=& \texttt{trace}\left({\HM^\circ}^\top\HM^\circ\right) + \texttt{trace}\left(\boldsymbol{\Delta}^\top\boldsymbol{\Delta}\right) + 2\texttt{trace}\left({\HM^\circ}^\top\boldsymbol{\Delta}\right) \\
& - \texttt{trace}\left({\HM^\circ}^\top\NAM^{(r)}\HM^\circ\right) - \texttt{trace}\left(\boldsymbol{\Delta}^\top\NAM^{(r)}\boldsymbol{\Delta}\right) - 2\texttt{trace}\left({\HM^\circ}^\top\NAM^{(r)}\boldsymbol{\Delta}\right)\\
= & \texttt{trace}\left({\HM^\circ}^\top(\IM-\NAM^{(r)})\HM^\circ\right) + \texttt{trace}\left(\boldsymbol{\Delta}^\top(\IM-\NAM^{(r)})\boldsymbol{\Delta}\right)\\
& + 2\texttt{trace}\left({\HM^\circ}^\top(\IM-\NAM^{(r)})\boldsymbol{\Delta}\right)  
\end{align*}

$\texttt{trace}\left({\HM^\circ}^\top(\IM-\NAM^{(r)})\HM^\circ\right)$ is computed and kept in prvious iteration.
}

\begin{figure}[!t]
\centering
\begin{small}
\begin{tikzpicture}[scale=1,every mark/.append style={mark size=3pt}]
    \begin{axis}[
        height=\columnwidth/2.5,
        width=\columnwidth/1.3,
        ylabel={$\mu_{L,L+1}$},
        xmin=0, xmax=13,
        ymin=0, ymax=1.1,
        xtick={0,1,2,3,4,5,6,7,8,9,10,11,12,13}, 
        ytick={0,0.2,0.4,0.6,0.8,1.0},
        xticklabel style = {font=\small},
        yticklabel style = {font=\small},
        xticklabels={0,1,2,4,6,8,10,15,20,30,50,70,90,100},
        legend style={fill=none,font=\normalsize,anchor=north east,draw=none},
    ]

    \addplot[line width=0.7mm, color=teal] coordinates {
        (0,1.1) (1,0.3849) (2,0.3254) (3,0.2223) (4,0.1314) (5,0.0733)
        (6,0.0398) (7,0.0083) (8,0.0017) (9,0.000068)
        (10,0.000012) (11,0.000017) (12,0.000022) (13,0.000024)
    };


    \addplot[line width=0.7mm, color=NSCcol3, dashed] coordinates {
        (0,1.1) (1,0.3849) (2,0.2683) (3,0.103) (4,0.0357) (5,0.012)
        (6,0.004) (7,0.0003) (8,0.000016) (9,0.0000037)
        (10,0.0000061) (11,0.000085) (12,0.000011) (13,0.000012)
    };

    \legend{{\em DBLP},{\em Yelp}}
    \end{axis}
\end{tikzpicture}
\end{small}
 \vspace{-3ex}
\caption{The OME $\mu_{L,L+1}$ when varying $L$.} \label{fig:mu_L}
\vspace{-2ex}
\end{figure}
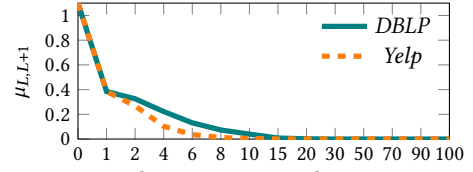

\stitle{Algorithm} Algorithm~\ref{alg:emb} displays the pseudo-code of \embalgo{}. Similar in spirit to the brute-force approach in Section~\ref{sec:BFAO}, \embalgo{} initializes $\omega_r$ as $\frac{1}{R}$ for each relation type at Line 1, and iteratively updates $\HM$ and $\{\omega_r\}_{r=1}^R$ (Lines 3-12). The differences are as follows. Algorithm~\ref{alg:emb} takes as input additional parameters $m, L$ and generates an $m$-dimensional approximation $\tilde{\EM}^{(r)}$ of $\hat{\EM}^{(r)}$ via \texttt{CountSketch}~\cite{clarkson2017low} at Line 2 before entering the iterations.
Moreover, in each iteration, \embalgo{} builds terms $\widehat{\XM}^{(\ell)}=\left(\frac{\alpha}{1+\alpha}\right)^\ell\NAM^\ell\XM \ \forall{0\le \ell \le L}$ using $L$ rounds of iterative sparse matrix multiplications (Lines 5-8), followed by assembling them with $\alpha\cdot\widehat{\XM}^{(L)}$ into $\HM$ as in Eq.~\eqref{eq:Hprime} at Line 9. 
On the basis of updated node feature vectors $\HM$ and precomputed $\{\|\NAM^{(r)}\|_F^2\}_{r=1}^R$, \embalgo{} calculates matrix norm $\|\HM^{\prime \top}\tilde{\EM}^{(r)}\|_F^2$ for each relation type and updates the estimated relation type weight $\omega_r$ by
\begin{small}
\begin{equation}\label{eq:update-w-new}
\omega_r = \frac{\left(\beta \cdot\|\NAM^{(r)}\|_F^2+\alpha\cdot \|\HM^{\prime \top}\tilde{\EM}^{(r)}\|_F^2\right)^{-2}}{\sum_{r=1}^{R}\left(\beta \cdot\|\NAM^{(r)}\|_F^2+\alpha\cdot \|\HM^{\prime \top}\tilde{\EM}^{(r)}\|_F^2 \right)^{-2}}.
\end{equation}
\end{small}

\stitle{Correctness Analysis} Denote by $\HM^{\ast}$ the exact node feature vectors defined in Eq.~\eqref{eq:comp-H-new}.
The following theorem establishes the approximation guarantees of $\HM$ obtained at Line 9 in Algorithm~\ref{alg:emb}. 
\begin{theorem}\label{lem:H-Hprime} 
$\|\HM-\HM^{\ast}\|_F \le \sum_{\ell=L+1}^{\infty}{\frac{\alpha^\ell}{(1+\alpha)^{\ell+1}}}\left\|\NAM^{\ell}-\NAM^L\right\|_2\cdot\|\XM\|_F$, which can be upper bounded by $\left(\frac{\alpha}{1+\alpha}\right)^{L+1}\cdot \|\XM\|_F\cdot \underset{\ell\ge 1}{\max}{\ \mu_{L,L+\ell}}$.
\end{theorem}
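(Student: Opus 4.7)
The plan is to bound $\|\HM-\HM^{\ast}\|_F$ by extracting the error made when substituting $\NAM^L$ for the higher powers $\NAM^\ell$ ($\ell > L$) in the Neumann expansion of $\HM^{\ast}$, and then to turn each $\|\NAM^L-\NAM^\ell\|_2$ into the OME $\mu_{L,\ell}$ via spectral arguments. Throughout, I will use $\HM^{\ast}=\frac{1}{1+\alpha}\sum_{\ell=0}^{\infty}\left(\frac{\alpha}{1+\alpha}\right)^\ell \NAM^\ell \XM$, which follows from Lemma~\ref{lem:opt-H} and Theorem~\ref{lem:neu} (whose hypothesis is satisfied because Lemma~\ref{eq:eigval-bound} yields $|\lambda(\frac{\alpha}{1+\alpha}\NAM)|\le \frac{\alpha}{1+\alpha}<1$).

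First I would split $\HM^{\ast}$ as the sum of the head $\frac{1}{1+\alpha}\sum_{\ell=0}^{L}\left(\frac{\alpha}{1+\alpha}\right)^\ell \NAM^\ell \XM$ and the tail $\frac{1}{1+\alpha}\sum_{\ell=L+1}^{\infty}\left(\frac{\alpha}{1+\alpha}\right)^\ell \NAM^\ell \XM$. Recognising the geometric identity $\frac{1}{1+\alpha}\sum_{\ell=L+1}^{\infty}\left(\frac{\alpha}{1+\alpha}\right)^\ell = \left(\frac{\alpha}{1+\alpha}\right)^{L+1}$, the approximation from Eq.~\eqref{eq:Hprime} can be rewritten as the same head plus $\frac{1}{1+\alpha}\sum_{\ell=L+1}^{\infty}\left(\frac{\alpha}{1+\alpha}\right)^\ell \NAM^L \XM$. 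Subtracting, the head cancels and
\[
\HM-\HM^{\ast}=\frac{1}{1+\alpha}\sum_{\ell=L+1}^{\infty}\left(\frac{\alpha}{1+\alpha}\right)^\ell \bigl(\NAM^L-\NAM^\ell\bigr)\XM.
\]
Taking Frobenius norm, applying the triangle inequality, and using sub-multiplicativity $\|\MM\XM\|_F\le \|\MM\|_2\|\XM\|_F$ yields the first stated bound
\[
\|\HM-\HM^{\ast}\|_F \le \sum_{\ell=L+1}^{\infty}\frac{\alpha^\ell}{(1+\alpha)^{\ell+1}}\,\|\NAM^\ell-\NAM^L\|_2\cdot\|\XM\|_F.
\]

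For the cleaner second bound, I would exploit that $\NAM=\sum_r \omega_r\NAM^{(r)}$ is symmetric (as a convex combination of symmetric matrices), hence simultaneously diagonalisable with itself. Thus $\NAM^\ell-\NAM^L$ shares eigenvectors with $\NAM$ and has eigenvalues $\lambda_i(\NAM)^\ell-\lambda_i(\NAM)^L$, so $\|\NAM^\ell-\NAM^L\|_2 = \max_i|\lambda_i(\NAM)^\ell-\lambda_i(\NAM)^L| = \mu_{L,\ell}$ by Definition~\ref{def:mix-gap}. Re-indexing the sum by $\ell'=\ell-L\ge 1$, each $\mu_{L,L+\ell'}$ is at most $\max_{\ell\ge 1}\mu_{L,L+\ell}$; pulling this factor and $\|\XM\|_F$ out, the residual geometric series evaluates to $\sum_{\ell=L+1}^{\infty}\frac{\alpha^\ell}{(1+\alpha)^{\ell+1}}=\left(\frac{\alpha}{1+\alpha}\right)^{L+1}$, giving the claimed uniform bound.

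The only non-routine step is the spectral identification $\|\NAM^\ell-\NAM^L\|_2=\mu_{L,\ell}$, which hinges on the symmetry of the unified $\NAM$; this is a mild but essential observation since the operator-norm bound used in the preceding step is what links algebraic truncation error to the structural mixing quantity in Definition~\ref{def:mix-gap}. Everything else is geometric-series bookkeeping and standard matrix-norm inequalities.
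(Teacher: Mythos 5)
Your proposal is correct and follows essentially the same route as the paper: the same head/tail decomposition via the geometric identity $\frac{1}{1+\alpha}\sum_{\ell=L+1}^{\infty}(\frac{\alpha}{1+\alpha})^\ell=(\frac{\alpha}{1+\alpha})^{L+1}$, the same triangle-inequality plus $\|\MM\XM\|_F\le\|\MM\|_2\|\XM\|_F$ step, and the same spectral identification $\|\NAM^{L+\ell}-\NAM^L\|_2=\mu_{L,L+\ell}$ (which the paper isolates as Lemma~\ref{lem:OME}, proved exactly by your symmetry/simultaneous-diagonalization argument). Your use of the ratio $\frac{\alpha}{1+\alpha}$ in the Neumann expansion is the correct reading of Eq.~\eqref{eq:comp-H-new} and matches what the paper's own proof uses.
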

Recall that in Figure~\ref{fig:mu_L}, the empirical values of $(L,L+1)$-OME are negligible when $L$ is small, which implies that $\NAM^L$ is close to $\NAM^{L+1}$, and thus, $\NAM^{L+\ell}$ for $\ell> L+1$, rendering approximation error $\|\HM-\HM^{\ast}\|_F=0$.

As for the relation type weights $\{\omega_r\}_{r=1}^R$ in Eq.~\eqref{eq:update-w-new}, \embalgo{} harnesses $\left\| \HM^{\top} \tilde{\EM}^{(r)} \right\|^2_F$ as an approximation of $\texttt{trace}\left(\HM^{\top}(\IM-\NAM^{(r)})\HM\right)$. Particularly, we can derive the following corollary using Theorem 11 in Ref.~\cite{clarkson2017low}:
\begin{corollary}\label{lem:sketch-approximation}
Let $\QM \in \mathbb{R}^{M \times m}$ be a count-sketch matrix and $\tilde{\EM}^{(r)} = \hat{\EM}^{(r)}\QM$, where $m=O(r\epsilon^{-4}\log{(r/\epsilon\delta)}\cdot (r+\log{(1/\epsilon\delta)}))$, $\epsilon$ is an error threshold and $r$ is the rank of $\hat{\EM}^{(r)}$. Then,
\begin{small}
\begin{equation*}
\left\| \HM^{\top} \tilde{\EM}^{(r)} \right\|^2_F = (1\pm \epsilon)^2\cdot \texttt{trace}\left(\HM^{\top}(\IM-\NAM^{(r)})\HM\right) 
\end{equation*}
\end{small}
holds with a probability of at least $1-\delta$.
\end{corollary}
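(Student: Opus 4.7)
The plan is to chain Lemma~\ref{lem:trace-Fnorm} with the oblivious subspace embedding guarantee for the count-sketch matrix $\QM$ from Theorem 11 of \cite{clarkson2017low}. First, I would invoke Lemma~\ref{lem:trace-Fnorm} to rewrite the right-hand side of the claim as $\|\HM^{\top}\hat{\EM}^{(r)}\|_F^{2}$; the task then reduces to establishing the two-sided Frobenius norm bound
\begin{equation*}
\bigl\|\HM^{\top}\tilde{\EM}^{(r)}\bigr\|_F^{2} \;=\; (1\pm\epsilon)^{2}\cdot \bigl\|\HM^{\top}\hat{\EM}^{(r)}\bigr\|_F^{2}.
\end{equation*}

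Next, I would expand $\HM^{\top}\tilde{\EM}^{(r)} = (\HM^{\top}\hat{\EM}^{(r)})\,\QM$, so that the sketch acts from the right on the fixed matrix $\mathbf{M}:=\HM^{\top}\hat{\EM}^{(r)}\in\mathbb{R}^{d\times M^{(r)}}$. Since $\hat{\EM}^{(r)}$ has rank $r$ by hypothesis, the row space of $\mathbf{M}$ lies in an at-most $r$-dimensional subspace of $\mathbb{R}^{M^{(r)}}$. Theorem 11 of \cite{clarkson2017low} asserts that a count-sketch $\QM$ with the stated sketching dimension $m=O\!\bigl(r\epsilon^{-4}\log(r/\epsilon\delta)\cdot(r+\log(1/\epsilon\delta))\bigr)$ is, with probability at least $1-\delta$, an $(\epsilon,\delta)$ oblivious subspace embedding for any such $r$-dimensional subspace; in particular, applying $\QM^{\top}$ preserves all inner products of vectors drawn from that subspace up to a multiplicative $(1\pm\epsilon)$. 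Translating this to the Frobenius norm of $\mathbf{M}\QM$ row-by-row and summing yields $\|\mathbf{M}\QM\|_F = (1\pm\epsilon)\|\mathbf{M}\|_F$ with probability at least $1-\delta$.

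Squaring both sides of this inequality gives the $(1\pm\epsilon)^{2}$ factor in the statement, and combining with the rewriting from Lemma~\ref{lem:trace-Fnorm} completes the argument. The main obstacle I anticipate is the bookkeeping in the second step: one must verify that the subspace embedding property of \cite{clarkson2017low}, which is typically stated for preserving $\|\QM^{\top}\mathbf{A}\mathbf{x}\|_2$ on a fixed $r$-dimensional subspace, transfers cleanly to preserving the Frobenius norm of $\mathbf{M}\QM$ when $\mathbf{M}$ has $r$-dimensional row space. This is routine once one identifies the subspace as $\mathrm{rowspan}(\hat{\EM}^{(r)})$, but it is the only substantive bridge between the cited theorem (phrased for embeddings of a subspace) and our target (phrased for a matrix norm), so the invocation of $m$'s dependence on $r,\epsilon,\delta$ must match the form required there.
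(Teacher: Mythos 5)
Your proposal is correct and follows essentially the same route the paper intends: reduce the trace to $\|\HM^{\top}\hat{\EM}^{(r)}\|_F^2$ via Lemma~\ref{lem:trace-Fnorm}, then invoke the count-sketch guarantee of Theorem 11 in \cite{clarkson2017low} to show the sketched norm $\|\HM^{\top}\tilde{\EM}^{(r)}\|_F^2$ is within a $(1\pm\epsilon)^2$ factor. The paper states the corollary as a direct consequence of that theorem without a detailed proof, and your row-space bookkeeping (identifying the relevant $r$-dimensional subspace as $\mathrm{rowspan}(\hat{\EM}^{(r)})$ and summing the per-row guarantees into a Frobenius-norm bound) is exactly the routine bridge needed.
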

\textcolor{black}{
As empirically validated in Appendix~\ref{sec:add-param-als}, a small $m$ (e.g., $20$) leads to accurate approximation of $\hat{\EM}^{(r)}$, ensuring excellent and stable final clustering quality. 
}

\stitle{Complexity Analysis}
Recall that the invocation of \texttt{CountSketch} at Line 2 essentially computes $\hat{\EM}^{(r)}\mathbf{R}^\top$, where $\hat{\EM}^{(r)}$ is the normalized oriented incidence matrix of $\EDG^{(r)}$ with $2M^{(r)}$ non-zero entries (each column has two entries) and sketching matrix $\mathbf{R}\in \mathbb{R}^{m\times M^{(r)}}$ ($m\ll M^{(r)}$) solely has a single non-zero entry in each column. The sparse matrix multiplication $\hat{\EM}^{(r)}\mathbf{R}^\top$ hence entails $O(M^{(r)})$ time, summing up to $O(M)$ time for all the $R$ relation types.
In each iteration (Lines 4-11) of the alternative optimization, the dominant computational overhead lies in Lines 7 and 11. The former costs $O(Md)$ time for each sparse matrix multiplication $\NAM\widehat{\XM}^{(\ell-1)}$, and hence, $O(MLd)$ time for $L$ rounds, while the latter calculates $\|\HM^{\top} \tilde{\EM}^{(r)}\|^2_F$ for updating each relation type weight $\omega_r$, which needs $NdmR$ operations for all the $R$ relation types. In short, the time cost of each iteration for updating $\HM$ and $\{\omega_r\}_{r=1}^R$ is $O(MLd+NdmR)$. Given that $L$, $m$, and the number of iterations are at most a few dozen in practice, and thus, can be considered as constants, the overall time complexity of \embalgo{} is $O(Md+NdR)$.

Algorithm~\ref{alg:emb} only needs incidence and adjacency matrices with $O(M)$ non-zero entries in total, sketched incidence matrix $\tilde{\EM}^{(r)}\in \mathbb{R}^{N\times m}$, and $N\times d$ intermediate feature vectors $\widehat{\XM}^{(\ell)}$ and $\HM$ in the main memory. Consequently, its space cost is $O(M+Nd+Nm)$, which equals $O(M+Nd)$ when $m$ is regarded as a constant.

Let $w^{\ast}_r$ be the new weight of the next iteration. Define $\boldsymbol{\Delta}$ as
\begin{equation}
\boldsymbol{\Delta} = \sum_{r=1}^R{(w^\ast_r-w_r)\cdot \NAM^{(r)}}.
\end{equation}
The new normalized adjacency matrix of the next iteration is
\begin{equation}
\NAM^\ast = \NAM + \boldsymbol{\Delta}.
\end{equation}

\begin{figure}[!t]
\centering
\includegraphics[width=\columnwidth]{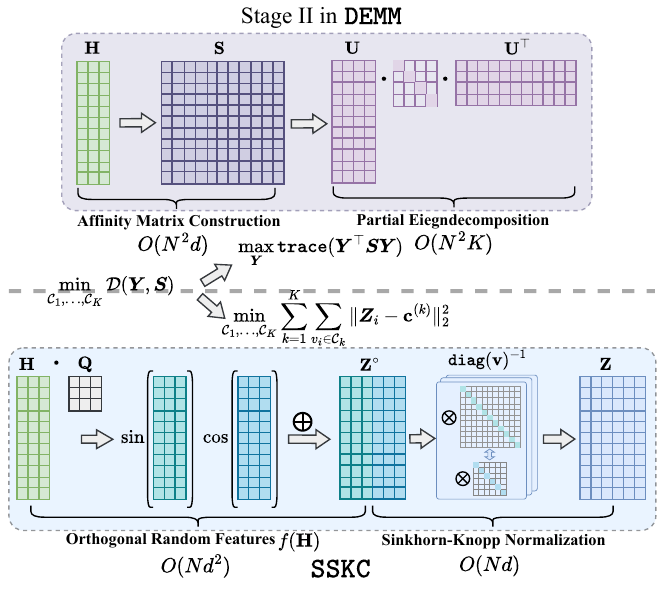}
\vspace{-6ex}
\caption{\textcolor{black}{Illustration of \clustalgo{}.}}
\label{fig:SSKC}
\vspace{-3ex}
\end{figure}

\subsection{Symmetric Sinkhorn-Knopp Clustering}\label{sec:SSKC}

\begin{theorem}\label{lem:clust-trans}
If $\SM$ is doubly stochastic and $\SM=\ZM\ZM^\top$, optimizing Eq.~\eqref{eq:obj-clust} is equivalent to optimizing $\underset{\C_1,\ldots,\C_K}{\min}{\sum_{k=1}^K\sum_{v_i\in \C_k}{\|\ZM_i-\mathbf{c}^{(k)}\|^2_2}}$, 
where $\mathbf{c}^{(k)}=\sum_{v_j\in \C_k}{\frac{\ZM_j}{|\C_k|}}$ stands for the center of cluster $\C_k$.
\end{theorem}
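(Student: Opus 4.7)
The plan is to chain together three reformulations of the Dirichlet-energy objective $\mathcal{D}(\YM,\SM)$ until it matches the $K$-means cost. First, since $\SM$ is doubly stochastic, each row sum $\|\SM_i\|_1=1$, so the normalizing degrees in the definition \eqref{eq:DE} drop out and $\mathcal{D}(\YM,\SM)$ collapses to the standard Laplacian quadratic form $\texttt{trace}(\YM^\top(\IM-\SM)\YM)$. Combined with Lemma~\ref{lem:obj-clust}, this reduces the problem to maximizing $\texttt{trace}(\YM^\top\SM\YM)$ over NCIs $\YM$.

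Second, I would substitute the factorization $\SM=\ZM\ZM^\top$ to rewrite
\begin{equation*}
\texttt{trace}(\YM^\top\SM\YM)=\texttt{trace}(\YM^\top\ZM\ZM^\top\YM)=\|\ZM^\top\YM\|_F^2.
\end{equation*}
Using the explicit form of $\YM$ in \eqref{eq:NCI}, the $k$-th column of $\ZM^\top\YM$ is $\frac{1}{\sqrt{|\C_k|}}\sum_{v_i\in\C_k}\ZM_i^\top=\sqrt{|\C_k|}\,\mathbf{c}^{(k)\top}$, so $\|\ZM^\top\YM\|_F^2=\sum_{k=1}^K|\C_k|\,\|\mathbf{c}^{(k)}\|_2^2$. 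Thus the original objective is equivalent to maximizing $\sum_k|\C_k|\,\|\mathbf{c}^{(k)}\|_2^2$ over partitions.

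Third, I would expand the $K$-means cost on the right-hand side by the usual bias–variance decomposition:
\begin{equation*}
\sum_{k=1}^K\sum_{v_i\in\C_k}\|\ZM_i-\mathbf{c}^{(k)}\|_2^2=\|\ZM\|_F^2-\sum_{k=1}^K|\C_k|\,\|\mathbf{c}^{(k)}\|_2^2,
\end{equation*}
where the cross term cancels because $\sum_{v_i\in\C_k}\ZM_i=|\C_k|\,\mathbf{c}^{(k)}$. Since $\|\ZM\|_F^2$ does not depend on the partition $\{\C_1,\ldots,\C_K\}$, minimizing the $K$-means cost is equivalent to maximizing $\sum_k|\C_k|\,\|\mathbf{c}^{(k)}\|_2^2$, which by the previous step is in turn equivalent to the Dirichlet-energy minimization in \eqref{eq:obj-clust}.

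The arguments are all routine algebraic manipulations; the only subtle point is being precise about where the doubly stochastic assumption is invoked (to eliminate the degree normalization in $\mathcal{D}$) and where the factorization $\SM=\ZM\ZM^\top$ is invoked (to convert the trace into a Frobenius norm that admits a clean per-cluster decomposition). The main ``obstacle'' is really just making sure the $\sqrt{|\C_k|}$ factors from the NCI normalization cancel correctly so that the resulting centroid-of-cluster expression agrees with the definition $\mathbf{c}^{(k)}=\sum_{v_j\in\C_k}\ZM_j/|\C_k|$; once that bookkeeping is done, the equivalence follows immediately.
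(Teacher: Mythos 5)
Your proposal is correct and follows essentially the same route as the paper's proof: both reduce the Dirichlet-energy objective via Lemma~\ref{lem:obj-clust} to maximizing $\texttt{trace}(\YM^\top\SM\YM)$, and both apply the standard bias--variance decomposition of the within-cluster sum of squares to show it equals a partition-independent constant minus that same trace. Your intermediate step of writing the trace as $\|\ZM^\top\YM\|_F^2=\sum_{k}|\C_k|\|\mathbf{c}^{(k)}\|_2^2$ is a slightly cleaner piece of bookkeeping than the paper's double-sum manipulation $\sum_{k}\frac{1}{|\C_k|}\sum_{v_i,v_j\in\C_k}\SM_{i,j}$, but the underlying argument is the same.
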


\stitle{Basic Idea} 
{As remarked in Figure~\ref{fig:SSKC}, \algo{} relies on a partial eigendecomposition of the $N\times N$ dense affinity matrix $\SM$ to approximately solve the NP-hard problem in Eq.~\eqref{eq:obj-clust}, which takes $O(N^2\cdot (d+K))$ time and is still prohibitively expensive. 
Our theoretical finding in Theorem~\ref{lem:clust-trans} pinpoints that the clustering objective is equivalent to minimizing the {\em within-cluster sum of squares} (WCSS) on a matrix $\ZM\in \mathbb{R}^{N\times z}$ that satisfies $\ZM\ZM^\top=\SM$ where $\SM$ is {\em doubly stochastic}. 
This implies that the above spectral clustering over $\SM$ can be further transformed and simplified into a tractable task, i.e., running $K$-Means over $\ZM$, if we make an adjustment to (a normalization) $\SM$ and calculate $\ZM$ such that $\ZM\ZM^\top=\SM$ is doubly stochastic.
Doing so sidesteps the costly eigendecomposition, and hence, results in a time cost of $O(NKz)$, which is almost linear when $z\ll N$.
}

{To make $\ZM\ZM^\top=\SM$ doubly stochastic,} a straightforward way is to first materialize the affinity matrix $\SM$ as in \algo{}, apply a doubly stochastic normalization of $\SM$, and then decompose it into the product of $\ZM$ and its transpose, all of which, however, are rather costly. Inspired by the {\em kernel tricks}~\cite{liu2011kernel}, the idea of \clustalgo{} is to eliminate the need to explicitly materialize $\SM$ via a mapping function $f(\cdot)$ on $\HM$ such that
\begin{equation*}
\SM \approx f(\HM)\cdot f(\HM)^\top,
\end{equation*}
and $f(\HM)$ can be used as $\ZM$ for subsequent $K$-Means clustering.
Since $\SM$ is defined using a Gaussian kernel, such a mapping function $f(\cdot)$ can be derived via {\em random Fourier features}   (RFF)~\cite{rahimi2007random}. \textcolor{black}{
RFF serves as an alternative to the Gaussian kernel, reducing the computational complexity of kernel methods from nonlinear to linear.
That is to say, RFF leverages the Bochner theorem~\cite{rahimi2007random} to map the kernel function with $f(\cdot)$, which avoids computing Eq.~\eqref{eq:edge-weight} with $O(N^2)$ computational complexity.} Along this line, the next task is to make $\ZM\ZM^\top$ doubly stochastic.

\begin{algorithm}[!t]
\caption{\clustalgo{} Algorithm}\label{alg:clust}
\KwIn{Node feature vectors $\HM$ and the number $K$ of clusters}
\KwOut{A set of $K$ clusters $\{\C_1,\ldots,\C_K\}$.}
Normalize $\HM$ according to Eq.\eqref{eq:PCC}\;
${\ZM}^{\circ}\gets \texttt{ORF}(\HM)$\;
$\overleftarrow{\ZM} \gets \ZM^{\circ},\ \overrightarrow{\ZM} \gets \ZM^{\circ}$\;
\Repeat{$\overrightarrow{\ZM}$ converges}{
$\mathbf{v}\gets \overleftarrow{\ZM}\cdot \left(\overrightarrow{\ZM}^\top\cdot \mathbf{1}\right)$\;
$\overleftarrow{\ZM} \gets \texttt{diag}(\mathbf{v})^{-1}\cdot {\overleftarrow{\ZM}}$\;
$\mathbf{v}\gets \left(\mathbf{1}^\top\cdot \overleftarrow{\ZM}\right)\cdot \overrightarrow{\ZM}^\top$\;
$\overrightarrow{\ZM}\gets \texttt{diag}(\mathbf{v})^{-1}\cdot{\overrightarrow{\ZM}} $\;
}
Run $K$-Means over $\overrightarrow{\ZM}$ to generate $\{\C_1,\ldots,\C_K\}$\;
\end{algorithm}
\stitle{Algorithm} 
Figure~\ref{fig:SSKC} summarizes the core steps of \clustalgo{}. It first constructs the mapping function $f(\cdot)$ and $\ZM^\circ=f(\HM)$, i.e., the initial version of $\ZM$, using random Fourier features, followed by a normalization of $\ZM^\circ$ into $\ZM$ for subsequent clustering, both of which can be done in $O(Nd)$ time.

In Algorithm~\ref{alg:clust}, we present the details of \clustalgo{}. Initially, \clustalgo{} leverages the {\em Orthogonal Random Features} (\texttt{ORF}) technique~\cite{yu2016orthogonal} as the mapping function $f(\cdot)$ to transform node feature vectors $\HM$ to $\ZM^\circ$, an initial version of target $\ZM$, such that $\ZM^\circ{\ZM^\circ}^\top \approx \SM$ (Line 1).
More concretely, \texttt{ORF} first transforms $\HM$ into $\tilde{\HM}=\HM\cdot\QM^\top,$ using a uniformly distributed random orthogonal matrix $\QM\in \mathbb{R}^{d\times d}$, and then constructs ${\ZM}^{\circ}$ by
\begin{small}
\begin{equation*}
{\ZM}^{\circ}=\frac{1}{\sqrt{d}}\cdot(\texttt{sin}(\tilde{\HM}) \mathbin\Vert \texttt{cos}(\tilde{\HM})),
\end{equation*}
\end{small}
where $\mathbin\Vert$ denotes the horizontal concatenation operator for matrices.
It is worth mentioning that the resulting feature dimension $z$ of ${\ZM}^{\circ}$ is $2d$ and $d\ll N$.
Subsequently, \clustalgo{} begins the doubly stochastic normalization of $\ZM^\circ{\ZM^\circ}^\top$.
\textcolor{black}{
We introduce {\em Sinkhorn-Knopp algorithm}~\cite{sinkhorn1967concerning} (\texttt{SK}), which obtains a doubly stochastic matrix by iteratively normalizing the rows and columns of the affinity matrix $\ZM\ZM^{\top}$.} Instead of simply employing the SK that requires materializing $\ZM^\circ{\ZM^\circ}^\top$ for normalization, Algorithm~\ref{alg:clust} initializes 
$\overleftarrow{\ZM}$ and $\overrightarrow{\ZM}$ as $\ZM^{\circ}$ at Line 2 and iteratively normalizes them alternately (Lines 3-8), thereby enforcing $\overleftarrow{\ZM}\overrightarrow{\ZM}^\top$ bistochastic.
Particularly, in each iteration, \clustalgo{} computes the row sum vector $\mathbf{v}$ of $\overleftarrow{\ZM}\overrightarrow{\ZM}^\top$ using a trick reordering the matrix multiplication as $\overleftarrow{\ZM}\cdot \left(\overrightarrow{\ZM}^\top\cdot \mathbf{1}\right)$ for higher efficiency, followed by normalizing each row $\overleftarrow{\ZM}$ by $\texttt{diag}(\mathbf{v})^{-1}\cdot {\overleftarrow{\ZM}}$ (Lines 4-5). In the same vein, $\overrightarrow{\ZM}$ is normalized by the column sum vector of $\overleftarrow{\ZM}\overrightarrow{\ZM}^\top$ (Lines 6-7).
As such, at the end of each iteration, a symmetric normalization of rows and columns is imposed on $\overleftarrow{\ZM}\overrightarrow{\ZM}^\top$. The following theorem indicates that 
$\overleftarrow{\ZM}\overrightarrow{\ZM}^\top$ is doubly stochastic with sufficient iterations and $\overleftarrow{\ZM}=\overrightarrow{\ZM}=f(\HM)$.
\begin{theorem}\label{lem:doubly-sto}
$\overleftarrow{\ZM}\overrightarrow{\ZM}^\top$ is doubly stochastic and $\overleftarrow{\ZM}=\overrightarrow{\ZM}$.
\end{theorem}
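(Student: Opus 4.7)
The plan is to reduce the alternating normalization on the factored matrices $\overleftarrow{\ZM}, \overrightarrow{\ZM}$ to the classical Sinkhorn--Knopp iteration on the symmetric kernel matrix $\mathbf{K} := \ZM^\circ{\ZM^\circ}^\top$, and then leverage symmetry plus uniqueness of SK scaling to force $\overleftarrow{\ZM}=\overrightarrow{\ZM}$ at the fixed point. Since Lines 6 and 8 only left-multiply the current iterates by diagonal matrices, and both $\overleftarrow{\ZM}, \overrightarrow{\ZM}$ are initialized to $\ZM^\circ$, I would first show by induction that the iterates retain the form $\overleftarrow{\ZM}^{(t)}=\mathbf{D}_1^{(t)}\ZM^\circ$ and $\overrightarrow{\ZM}^{(t)}=\mathbf{D}_2^{(t)}\ZM^\circ$ for diagonal $\mathbf{D}_1^{(t)}, \mathbf{D}_2^{(t)}$ with $\mathbf{D}_1^{(0)}=\mathbf{D}_2^{(0)}=\IM$. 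Consequently $\overleftarrow{\ZM}^{(t)}\overrightarrow{\ZM}^{(t)\top}=\mathbf{D}_1^{(t)}\mathbf{K}\mathbf{D}_2^{(t)}$; a short calculation (the associativity trick $\overleftarrow{\ZM}(\overrightarrow{\ZM}^\top\mathbf{1})$ merely reorders the arithmetic) then shows the diagonals obey
\begin{equation*}
\bigl(\mathbf{d}_1^{(t+1)}\bigr)_i = \frac{1}{\sum_j \mathbf{K}_{ij}\,\bigl(\mathbf{d}_2^{(t)}\bigr)_j},\qquad \bigl(\mathbf{d}_2^{(t+1)}\bigr)_j = \frac{1}{\sum_i \mathbf{K}_{ij}\,\bigl(\mathbf{d}_1^{(t+1)}\bigr)_i},
\end{equation*}
which is precisely the Sinkhorn--Knopp update that seeks diagonals making $\mathbf{D}_1\mathbf{K}\mathbf{D}_2$ doubly stochastic.

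Next, I would invoke the classical theorem of Sinkhorn: because $\mathbf{K}$ approximates a Gaussian kernel Gram matrix and hence (in the regime where the RFF approximation is faithful) has strictly positive entries, the sequence $\{(\mathbf{D}_1^{(t)}, \mathbf{D}_2^{(t)})\}$ converges to a limit $(\mathbf{D}_1^\ast, \mathbf{D}_2^\ast)$ for which $\mathbf{D}_1^\ast\mathbf{K}\mathbf{D}_2^\ast$ is doubly stochastic. Since at convergence $\overleftarrow{\ZM}\overrightarrow{\ZM}^\top=\mathbf{D}_1^\ast\mathbf{K}\mathbf{D}_2^\ast$, this establishes the first half of the theorem.

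For the second half, I would use $\mathbf{K}=\mathbf{K}^\top$ together with uniqueness of SK scaling. Exchanging the indices $i,j$ in the fixed-point equations and applying $\mathbf{K}_{ij}=\mathbf{K}_{ji}$ shows that the swapped pair $(\mathbf{d}_2^\ast, \mathbf{d}_1^\ast)$ satisfies the very same system as $(\mathbf{d}_1^\ast, \mathbf{d}_2^\ast)$. For a positive symmetric matrix the SK scaling is unique up to the rescaling $(\mathbf{D}_1,\mathbf{D}_2)\mapsto(c\mathbf{D}_1, c^{-1}\mathbf{D}_2)$; combining this uniqueness with the symmetric initialization $\mathbf{D}_1^{(0)}=\mathbf{D}_2^{(0)}=\IM$ and the symmetry of the two update rules under swap $\mathbf{d}_1\leftrightarrow \mathbf{d}_2$, an induction on $t$ pins $c=1$, forcing $\mathbf{d}_1^\ast=\mathbf{d}_2^\ast$ and therefore $\overleftarrow{\ZM}=\mathbf{D}_1^\ast\ZM^\circ=\mathbf{D}_2^\ast\ZM^\circ=\overrightarrow{\ZM}$.

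The main obstacle I anticipate is ruling out the one-parameter ambiguity $c\neq 1$: without care, the argument only yields $\mathbf{d}_1^\ast = c\,\mathbf{d}_2^\ast$. The cleanest fix is to exhibit a conserved quantity of the iteration --- for instance showing that $\sum_i \log(\mathbf{d}_1^{(t)})_i - \sum_i\log(\mathbf{d}_2^{(t)})_i$ stays at $0$ throughout, which follows from symmetry of $\mathbf{K}$ and the symmetric initialization --- and then combining it with the fixed-point relations to deduce $c=1$. Once that invariant is in hand, the equality $\overleftarrow{\ZM}=\overrightarrow{\ZM}$ is immediate.
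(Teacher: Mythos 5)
Your reduction of the alternating normalization to diagonal scalings of $\mathbf{K}=\ZM^\circ{\ZM^\circ}^\top$, followed by an appeal to Sinkhorn's theorem for the double-stochasticity claim, is essentially the paper's own route: the paper accumulates the row/column sum vectors into products of inverse diagonal matrices sandwiching $\ZM^\circ{\ZM^\circ}^\top$ and cites the Sinkhorn--Knopp / iterative-proportional-fitting convergence result. So the first half of your argument is sound and matches the published proof.

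The gap is in the second half. You correctly identify that symmetry of $\mathbf{K}$ plus uniqueness of the scaling pair only yields $\mathbf{d}_1^\ast = c\,\mathbf{d}_2^\ast$ for some $c>0$, but the invariant you propose to pin down $c=1$ --- that $\sum_i\log(\mathbf{d}_1^{(t)})_i-\sum_i\log(\mathbf{d}_2^{(t)})_i$ stays at $0$ --- is not conserved by this iteration. Each pass first row-normalizes (changing $\mathbf{d}_1$ only) and then column-normalizes against the \emph{updated} $\overleftarrow{\ZM}$ (changing $\mathbf{d}_2$ only), and these two half-steps do not cancel. Concretely, if $\mathbf{K}$ has constant row sums equal to $r>1$ (e.g., $\mathbf{K}=\bigl(\begin{smallmatrix}2&1\\1&2\end{smallmatrix}\bigr)$ with $r=3$), the first row normalization gives $\overleftarrow{\ZM}=\ZM^\circ/r$, after which $\overleftarrow{\ZM}\overrightarrow{\ZM}^\top=\mathbf{K}/r$ is already doubly stochastic, the column step leaves $\overrightarrow{\ZM}=\ZM^\circ$ untouched, and the algorithm terminates with $\overrightarrow{\ZM}=r\,\overleftarrow{\ZM}\neq\overleftarrow{\ZM}$; your proposed invariant equals $-N\log r\neq 0$ after a single step. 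So the one-parameter ambiguity is real and your fix does not close it. For what it is worth, you have located exactly the soft spot in the paper's own proof, which derives uniqueness of the scaling pair ``modulo multiplying the first matrix by a positive number and dividing the second by the same number'' and then asserts equality of the two diagonal products without ruling out that scalar. The statement that actually follows from this line of argument is $\overleftarrow{\ZM}=c\,\overrightarrow{\ZM}$ for some $c>0$ (equality up to a global rescaling), which is all the subsequent $K$-Means step requires, but is weaker than the literal claim $\overleftarrow{\ZM}=\overrightarrow{\ZM}$.
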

Finally, Algorithm~\ref{alg:clust} applies the $K$-Means over $\overrightarrow{\ZM}$ and generates clusters $\{\C_1,\C_2,\ldots,\C_K\}$.

\begin{figure}[!t]
\centering
\includegraphics[width=\columnwidth]{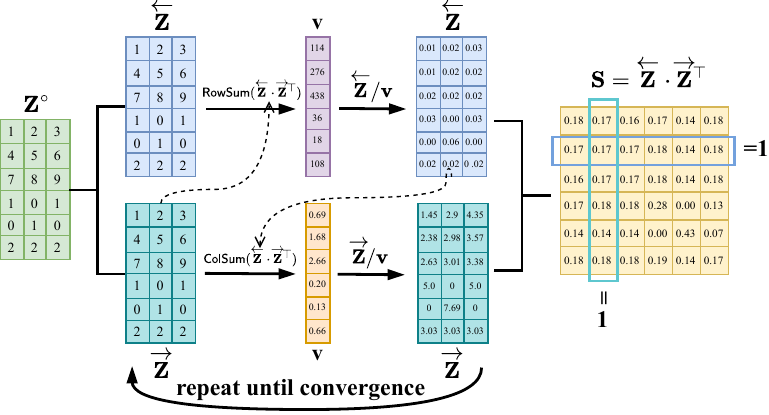}
\vspace{-5ex}
\caption{A running example for the \texttt{SK} normalization.}
\label{fig:sk}
\vspace{-2ex}
\end{figure}

\textcolor{black}{
\begin{example}
Figure~\ref{fig:sk} exemplifies how \clustalgo{} leverages the \texttt{SK} normalization to achieve $\SM=\ZM\ZM^\top$. 
Given a $6\times 3$ feature matrix $\ZM^\circ$ output by \texttt{ORF} (see example in Appendix~\ref{sec:example-ORF}), we initialize $\overleftarrow{\ZM}=\overrightarrow{\ZM}=\ZM^\circ$. 
In the first iteration, \texttt{SK} calculates the sum of entries in each row of $\overleftarrow{\ZM}\overrightarrow{\ZM}^\top$, yielding a vector $\mathbf{v}$ with six rows $[114,276,438,36,18,108]^\top$. Afterwards, six rows in $\overleftarrow{\ZM}$ are normalized by dividing their respective entries in $\mathbf{v}$, e.g., $[1,2,3]/114=[0.01,0.02,0.03]$.
Based on the updated $\overleftarrow{\ZM}$, we start to normalize $\overrightarrow{\ZM}$. \texttt{SK} then calculates the sum of entries in each column of $\overleftarrow{\ZM}\overrightarrow{\ZM}^\top$, leading to a new length-6 vector $\mathbf{v}=[0.69,1.68,2.66,0.2,0.13,,0.66]^\top$. $\overrightarrow{\ZM}$ is subsequently updated by dividing each row by its respective entry in the new $\mathbf{v}$.
By repeating the above alternate procedure sufficiently, we can finally obtain $\overleftarrow{\ZM}=\overrightarrow{\ZM}$ such that the entries in each row and column of $\SM=\overleftarrow{\ZM}\overrightarrow{\ZM}^\top$ sum up to $1.0$, i.e., doubly stochastic. As such, the clusters can be obtained by simply running $K$-means over row vectors of $\overleftarrow{\ZM}$ or $\overrightarrow{\ZM}$.
\end{example}
}

\stitle{Complexity Analysis}
According to \cite{yu2016orthogonal}, $\ZM^\circ$ can be obtained in $O(Nd^2)$ time. By reordering the matrix multiplications as in Lines 5 and 7, $\mathbf{v}$ can be calculated using $O(Nd)$ time. Since the normalizations at Lines 6 and 8 involve $Nd$ operations, each iteration (Lines 5-8) then takes $O(Nd)$ time.
Recall that $K$-Means runs in $O(NK)$ time per iteration. In sum, the total time cost of \clustalgo{} is bounded by $O(Nd^2+NK)$ when the numbers of iterations are considered as constants. Its space cost is $O(Nd)$ since $\HM$ and $\ZM^\circ$ contain $Nd$ and $2Nd$ entries, respectively.

\section{Experiments}\label{sec:exp}

This section experimentally evaluates \algo{}, \algoplus, and \algoal{} against 20 competitors regarding clustering quality and efficiency on 9 real MRGs of varied volumes. 
All experiments are conducted on a Linux machine with an NVIDIA Ampere A100 GPU (80 GB memory), AMD EPYC 7513 CPUs (2.6 GHz), and 1TB RAM.
The codes of all algorithms are collected from their
respective authors, and all are implemented in Python, except \texttt{LMVSC} and \texttt{MCGC}. For reproducibility, the source code and datasets are available at \url{https://github.com/HKBU-LAGAS/DEMM}.

\eat{
\begin{table}[!t]
\centering
\renewcommand{\arraystretch}{0.9} 
\begin{small}
\caption{Statistics of Datasets.}\label{tbl:exp-data}
\vspace{-3mm}
\begin{tabular}{l|c|l|l|c|c}
    \hline
    {\bf Dataset} & $N$ & {\bf \#Relations} & $M$ & $D$ & $K$  \\
    \hline
    \multirow{2}{*}{\centering{\em ACM}} & \multirow{2}{*}{\centering 3,025} & Paper-Subject-Paper & 2,210,761  & \multirow{2}{*}{\centering 1,870} & \multirow{2}{*}{\centering 3}   \\
     & & Paper-Author-Paper & 29,281 & & \\
    \hline
    \multirow{3}{*}{\centering{\em DBLP}} & \multirow{3}{*}{\centering 4,057} & Author-Paper-Author  & 11,113 & \multirow{3}{*}{\centering{334}} & \multirow{3}{*}{\centering{4}}  \\
    & & Author-Paper-Conf.-Paper-Author & 5,000,495 & & \\
    & & Author-Paper-Term-Paper-Author & 6,776,335 & & \\
    \hline
     \multirow{2}{*}{\centering{{\em ACM2}}} &  \multirow{2}{*}{\centering{4,019}} & Paper-Subject-Paper & 4,338,213 &  \multirow{2}{*}{\centering{1,902}} &  \multirow{2}{*}{\centering{3}} \\
     & & Paper-Author-Paper & 57,853 & & \\
    \hline
   \multirow{3}{*}{\centering{ {\em Yelp}}} & \multirow{3}{*}{\centering 2,614} & Business-User-Business  & 11,113 & \multirow{3}{*}{\centering 82} & \multirow{3}{*}{\centering 3}   \\
   & & Business-Rating-Business & 5,000,495 & & \\
   & & Business-Service-Business & 6,776,335 & & \\
    \hline
    \multirow{2}{*}{\centering{\em IMDB}}  & \multirow{2}{*}{\centering 3,550} & Movie-Actor-Movie & 66,428 & \multirow{2}{*}{\centering 2,000} & \multirow{2}{*}{\centering 3}  \\
    & & Movie-Director-Movie & 13,788 & & \\
    \hline
    \multirow{2}{*}{\centering{\em MAG}}  & \multirow{2}{*}{\centering 113,919} & Paper-Paper & 1,806,596 & \multirow{2}{*}{\centering 128} & \multirow{2}{*}{\centering 4}   \\
    & & Paper-Author-Paper & 10,067,799 & & \\
    \hline
    \multirow{3}{*}{\centering{\em OAG-ENG}} & \multirow{3}{*}{\centering 370,623} & Paper-Field-Paper & 59,432,768 & \multirow{3}{*}{\centering 768} & \multirow{3}{*}{\centering 20}   \\
    & & Paper-Author-Paper & 826,298 & & \\
    & & Paper-Paper & 2,475,996 & & \\
    \hline
    \multirow{2}{*}{\centering{\em RCDD}}  & \multirow{2}{*}{\centering 11,933,366} & Item-b-Item & 421,089,810 & \multirow{2}{*}{\centering 256} & \multirow{2}{*}{\centering 2}  \\
    & & Item-f-Item  & 353,719,682 & & \\
    \hline
\end{tabular}
\end{small}
\vspace{0ex}
\end{table}
}

\begin{table}[!t]
\centering
\renewcommand{\arraystretch}{0.9} 
\begin{footnotesize}
\caption{Statistics of Datasets.}\label{tbl:exp-data}
\vspace{-3mm}
\resizebox{\columnwidth}{!}{%
\begin{tabular}{l|r|l|r|r|r}
    \hline
    {\bf Dataset} & $N$ & {\bf Relation Types} & $M$ & $D$ & $K$  \\
    \hline
    \multirow{2}{*}{\centering{\em ACM}} & \multirow{2}{*}{\centering 3K} & Paper-Subject-Paper & 2.2M  & \multirow{2}{*}{\centering 1,870} & \multirow{2}{*}{\centering 3}   \\
     & & Paper-Author-Paper & 29.3K & & \\
    \hline
    \multirow{3}{*}{\centering{\em DBLP}} & \multirow{3}{*}{\centering 4K} & Author-Paper-Author  & 11.1K & \multirow{3}{*}{\centering{334}} & \multirow{3}{*}{\centering{4}}  \\
    & & Author-Paper-Venue-Paper-Author & 5M & & \\
    & & Author-Paper-Term-Paper-Author & 6.8M & & \\
    \hline
     \multirow{2}{*}{\centering{{\em ACM2}}} &  \multirow{2}{*}{\centering{4K}} & Paper-Subject-Paper & 4.3M&  \multirow{2}{*}{\centering{1,902}} &  \multirow{2}{*}{\centering{3}} \\
     & & Paper-Author-Paper & 58K & & \\
    \hline
   \multirow{3}{*}{\centering{ {\em Yelp}}} & \multirow{3}{*}{\centering 2.6K} & Business-User-Business  & 528.3K & \multirow{3}{*}{\centering 82} & \multirow{3}{*}{\centering 3}   \\
   & & Business-Rating-Business & 1.5M & & \\
   & & Business-Service-Business & 2.5M & & \\
    \hline
    \multirow{2}{*}{\centering{\em IMDB}}  & \multirow{2}{*}{\centering 3.6K} & Movie-Actor-Movie & 66.4K & \multirow{2}{*}{\centering 2,000} & \multirow{2}{*}{\centering 3}  \\
    & & Movie-Director-Movie & 13.8K & & \\
    \hline
    \multirow{3}{*}{\centering{\em Protein}} & 
    \multirow{3}{*}{\centering{18.8K}} & 
    {Protein-Protein} & 
    {2.0M} & 
    \multirow{3}{*}{\centering{1280}} & 
    \multirow{3}{*}{\centering{6}} \\
    & & {Protein-Gene-Protein} & {18.9K} & & \\
    & & {Protein-Disease-Protein} & {60.1K} & & \\
    \hline
    \multirow{3}{*}{\centering{\em Amazon}} & 
    \multirow{3}{*}{\centering{11.9K}} & 
    {User-Product-User} & 
    {363.2K} & 
    \multirow{3}{*}{\centering{25}} & 
    \multirow{3}{*}{\centering{2}} \\
    & & {User-Star-User} & {7.1M} & & \\
    & & {User-Review-User} & {2.1M} & & \\
    \hline
    \arrayrulecolor{black}
    \multirow{2}{*}{\centering{\em MAG}}  & \multirow{2}{*}{\centering 113.9K} & Paper-Paper & 1.8M & \multirow{2}{*}{\centering 128} & \multirow{2}{*}{\centering 4}   \\
    & & Paper-Author-Paper & 10.1M & & \\
    \hline
    \multirow{3}{*}{\centering{\em OAG-ENG}} & \multirow{3}{*}{\centering 370.6K} & Paper-Field-Paper & 14.6M & \multirow{3}{*}{\centering 768} & \multirow{3}{*}{\centering 20}   \\
    & & Paper-Author-Paper & 455.7K & & \\
    & & Paper-Paper & 2.1M & & \\
    \hline
    \multirow{3}{*}{\centering{\em OAG-CS}} & \multirow{3}{*}{\centering 546.7K } & Paper-Field-Paper &53.9M & \multirow{3}{*}{\centering 768} & \multirow{3}{*}{\centering 20}   \\
    & & Paper-Author-Paper &1.6M& & \\
    & & Paper-Paper &11.7M & & \\
    \hline
    \multirow{2}{*}{\centering{\em RCDD}}  & \multirow{2}{*}{\centering 11.9M} & Item-b-Item & 421.1M & \multirow{2}{*}{\centering 256} & \multirow{2}{*}{\centering 2}  \\
    & & Item-f-Item  & 353.7M & & \\ \hline
\end{tabular}
}
\end{footnotesize}
\vspace{0ex}
\end{table}

\subsection{Experimental Setup}
\stitle{Datasets}
\textcolor{black}{
We experiment with 11 benchmark MRG datasets of varied volumes and types, whose statistics are presented in Table \ref{tbl:exp-data}. Amid them, {\em ACM}~\cite{Fan2020One2MultiGA}, {\em ACM2}~\cite{Fu2020MAGNNMA}, {\em DBLP}~\cite{ZhaoWSLY20}, {\em MAG}~\cite{Hu2020OpenGB}, {\em OAG-CS}, and {\em OAG-ENG}~\cite{Zhang2019OAGTL} are academic citation networks; {\em Yelp}~\cite{Shi2022RHINERS} and {\em Amazon}~\cite{PengRe2021} are e-commerce review networks; {\em IMDB}~\cite{Wang2019HeterogeneousGA}is a movie review network; {\em RCDD}~\cite{liu2023dink} is risk commodity detection network; and {\em Protein}~\cite{GU2022106127} is a biological network.
}

\stitle{Baselines and Parameters}
For a comprehensive evaluation, we include 20 competing methods in the experiments, which can be categorized into four types:
\begin{itemize}[leftmargin=*]
\item MRGC: \texttt{DMGI}~\cite{Park2019UnsupervisedAM}, \texttt{MvAGC}~\cite{Lin2021GraphFM}, \texttt{MGDCR}~\cite{Mo2023MultiplexGR}, \texttt{BTGF}~\cite{Qian2023UpperBB}, \texttt{DuaLGR}~\cite{Ling2023DualLG}, \texttt{BMGC}~\cite{Shen2024BalancedMG}, and \texttt{DMG}~\cite{Mo2023DisentangledMG};
\item Multi-view graph clustering: \texttt{MCGC}~\cite{Pan2021MultiviewCG}, \texttt{MMGC}~\cite{tan2023metric}, and \texttt{LMVSC}~\cite{kang2020large};
\item Attributed graph clustering: \texttt{Dink-Net} \cite{liu2023dink}, \texttt{DMoN} \cite{tsitsulin2023graph}, \texttt{S3GC} \cite{devvrit2022s3gc}, and \texttt{S\textsuperscript{2}CAG} \cite{lin2024spectral};
\item Attribute-less graph clustering: \texttt{LeadEigvec}~\cite{newman2006finding}, \texttt{SpecClust}~\cite{von2007tutorial}, \texttt{LabelProg}~\cite{raghavan2007near_LabelProg}, \texttt{Louvain}~\cite{blondel2008fast_Louvain}, \texttt{node2vec}~\cite{grover2016node2vec}, \texttt{DeepWalk}~\cite{perozzi2014deepwalk}.
\end{itemize}
In attributed and attribute-less graph clustering baselines, we input the single-relational graph converted from the MRG with equal weights.
For multi-view graph clustering methods, we use the same parameters as in \embalgo{} to generate the feature matrix for each relation type. 
The number of iterations in \algo{}, \algoplus{}, and \algoal{} is fixed to $10$ due to the rapid convergence.
For a fair comparison, we run grid searches on the parameters and report the best clustering performance attained by each evaluated method.
\textcolor{black}{
Table~\ref{tab:baseline_comparison} summarizes the categories, complexities, objectives, and backbone models of the main competitors and our methods.
}

\begin{table}[!h] 
\begin{small}
\vspace{-1ex}
    \centering 
    \caption{Summary of evaluated method.} 
    \label{tab:baseline_comparison} 
\vspace{-2ex}
\resizebox{\columnwidth}{!}{%
\addtolength{\tabcolsep}{-0.3em}
{\color{black}
    \begin{tabular}{l|cccc} 
        \hline
        	&	 \textbf{Category} 	&	  \textbf{Complexity} 	&	 \textbf{Objective} 	&	\textbf{Backbone} 	\\ \hline
 \texttt{DMG} 	&	 MRGC (MVE) 	&	 ${O}(RNd+Md)$ 	&	 Reconstruction 	&	 GNN 	\\
 \texttt{DuaLGR} 	&	 MRGC (MRS) 	&	 ${O}(RN^2)$ 	&	  Reconstruction 	&	 GNN 	\\
 \texttt{MGDCR} 	&	 MRGC (MVE) 	&	 ${O}(R^2Nd^2+Md)$ 	&	 Mutual Info. Max. 	&	 GNN 	\\
 \texttt{DMGI} 	&	 MRGC (MRS) 	&	 ${O}(RNd^2+Md)$ 	&	 Modularity Max. 	&	 GNN 	\\
    \texttt{MvAGC} 	&	MRGC(MRS)	&	 ${O}(Nd^2)$ 	&	 Subspace Clustering 	&	 - 	
  \\ 
   \texttt{MGDCR} 	&	MRGC(MVE)	&	 ${O}(MN+NK^2)$ 	&	 Subspace Clustering 	&	 - 	
  \\ 
   \texttt{BTGF} 	&	MRGC(MVE)	&	 ${O}(N^2d+M^2Nd^2)$ 	& Reconstruction	&	 GNN 	
  \\ 
   \texttt{BMGC} 	&	MRGC(MVE)	&	 ${O}(MN^2+MNd)$ 	&	 Contrastive 	&	 GNN 	
  \\ 
  \texttt{MCGC} 	&	MVGC	&	 ${O}(MN^2(d+K))$ 	&	 Contrastive 	&	 - 	\\
  \texttt{LMVSC} 	&	MVGC	&	 ${O}(MN+NK^2)$ 	&	 Subspace Clustering 	&	 - 	
  \\ 

   \texttt{MMGC} 	&	MVGC	&	 ${O}(MN^2K+MNK)$ 	&	 Subspace Clustering 	&	 - 	
  \\
   \texttt{DMoN} 	&	AGC	&	 ${O}(Nd^2+Md)$ 	&	 Contrastive 	&	 GNN 	\\
 \texttt{Dink-Net} 	&	AGC	&	 ${O}(NdK+dK^2)$ 	&	 Adversarial 	&	 GNN 	\\
   \texttt{S3GC} 	&	AGC	&	 ${O}(Nd^2)$ 	&	 Contrastive	&	 GNN 	
  \\ 
   \texttt{S2AGC} 	&	AGC	&	 ${O}(NKd)$ 	&	 Subspace Clustering 	&	 - 	
  \\ \hline
  \algo{} &	 MRGC	&	 ${O}(MN+Nd(N+dR))$	&	 MRDE	&	 -	\\
 \algoplus{}	&	 MRGC	&	 ${O}(Nd^2+Md)$	&	 MRDE	&	 -	\\
        \hline
    \end{tabular}
}
}
\vspace{-2ex}
\end{small}
\end{table}

\stitle{Evaluation Protocol}
Following previous works~\cite{bhowmick2024dgcluster,Cai2022EfficientDE}, we adopt three classic metrics {\em clustering accuracy} (ACC), {\em Normalized Mutual Information} (NMI), {\em Adjusted Rand Index} (ARI) to assess the quality of output clusters. All of them are calculated against the ground-truth cluster labels, and higher values indicate better quality. Particularly, ACC and NMI scores range from $0$ to $1.0$, whereas ARI falls in the range of $[-0.5,1.0]$.  

For the interest of space, we refer interested readers to Appendix~\ref{sec:add-exp} for more details regarding datasets, \textcolor{black}{baselines}, parameters, and evaluation metrics.

\begin{table*}[!t]
\centering
\renewcommand{\arraystretch}{0.8}
\caption{Clustering quality on small MRGs (best is highlighted in \textcolor{blue!50}{blue} and best baseline underlined).}\vspace{-3mm}
\begin{small}
\addtolength{\tabcolsep}{-0.25em}
\begin{tabular}{c c|ccc | ccc | ccc | ccc|ccc }
\hline
& \multirow{2}{*}{\bf Method} & \multicolumn{3}{c|}{\bf{ {\em ACM}}} & \multicolumn{3}{c|}{\bf{ {\em DBLP}}}   & \multicolumn{3}{c|}{\bf{ {\em ACM2}}} & \multicolumn{3}{c|}{\bf{ {\em Yelp}}}  & \multicolumn{3}{c}{\bf{ {\em IMDB}}}  \\ \cline{3-17}
& & ACC \textuparrow & NMI \textuparrow & ARI \textuparrow & ACC \textuparrow & NMI \textuparrow & ARI \textuparrow & ACC \textuparrow & NMI \textuparrow & ARI \textuparrow & ACC \textuparrow & NMI \textuparrow & ARI \textuparrow & ACC \textuparrow & NMI \textuparrow & ARI \textuparrow \\ 
\hline
\multirow{8}{*}{\rotatebox[origin=c]{90}{w/o attributes}} & \texttt{node2vec}~\cite{grover2016node2vec} &$60.8$ & {$40.7$ }& $32.1$ & $28.5$ & $0.4$ & $0.3$ & $65.1$ & $39.7$ & $31.5$ & $35.7$ & $0.2$ & $0.1$ & $35.4$ & $0.3$ & $0.2$\\ 
& \texttt{DeepWalk}~\cite{perozzi2014deepwalk} &{$61.4$} & $34.9$ & $31.6$ & $75.9$ & $60.4$ & $55.7$ & $56.5$ & $21.1$ & $15.9$ & $51.7$ & $14.4$ & $13.5$ & $36.2$ & $0.2$ & $0.1$\\ 
& \texttt{LeadEigvec}~\cite{newman2006finding} &$35.2$ & $0.7$ & $0.0$ & $79.3$ & $66.1$ & $65.7$ & $49.5$ & $0.2$ & $-0.1$ & {$66.0$} & $29.7$ & $35.6$ & $36.3$ & $6.8$ & $0.0$\\ 
& \texttt{LabelProg}~\cite{raghavan2007near_LabelProg} &$57.1$ &$40.3$ &{$39.4$} &$29.5$ &$0.0$ &$0.0$ &$63.2$ &$40.6$ &$35.0$ &$41.4$ &$0.0$ &$0.0$ &$11.3$ &{$10.9$} &$0.6$\\ 
& \texttt{Louvain}~\cite{blondel2008fast_Louvain} &$55.3$ &$40.1$ &$36.4$ &$79.3$ &$67.6$ &$66.1$ &$60.7$ &$39.3$ &$34.8$ &$60.6$ &$36.6$ &$40.9$ &$13.3$ &$4.9$ &{$1.1$}\\ 
& \texttt{SpecClust}~\cite{von2007tutorial} &$35.3$ &$0.4$ &$0.0$ &{$91.6$} &{$76.7$} &{$80.3$} &{$70.3$} &{$51.1$} &{$41.0$} &$65.2$ &{$37.5$} &{$41.4$} &{$37.9$} &$0.3$ &$0.0$\\ \cline{2-17}
& Improv. &+$6.4$&+$4.0$ &+$3.1$ &+$0.6$ &+$0.9$ &+$1.7$ &+$2.7$ &-$9.8$ &+$1.6$ &+$2.5$ &-$2.2$&-$3.7$&+$0.9$ &-$10.5$ & -$1.1$\\ 
& \algoal{} &$68.0$ & $44.7$ & $42.5$ & $92.2$ & $77.6$ & $82.0$ & $73.0$ & $41.3$ & $42.6$ & $68.5$ & $35.3$ & $37.7$ & $38.8$ & $0.4$ & $0.0$ \\ 
\hline
\hline
\multirow{18}{*}{\rotatebox[origin=c]{90}{w/ attributes}} & \texttt{S3GC} \cite{devvrit2022s3gc} & $66.7$ & $41.9$ & $44.7$ & $54.1$ & $38$ & $20.3$ & $64.2$ & $50.9$ & $46.6$ & $66.5$ & $41.7$ & $44.3$ & $44.7$ & $5.5$ & $5.8$ \\ 
& \texttt{DMoN} \cite{tsitsulin2023graph} & $70.7$ & $45.6$ & $49.5$ & $80.6$ & $54.6$ & $60.2$ & $69.7$ & $38.7$ & $37.6$ & $75.3$ & $51.5$ & $52.2$ & $49.4$ & $12$ & $9.7$ \\ 
& \texttt{Dink-Net} \cite{liu2023dink} & $72.3$ & $49.2$ & $46.1$ & $90.6$ & $74.9$ & $77.4$ & $76.9$ & $48.2$ & $47.8$ & $71.8$ & $42.6$ & $46.1$ & $51.2$ & $10.6$ & $12.5$ \\ 
& \texttt{S\textsuperscript{2}CAG} \cite{lin2024spectral} & $88.6$ & $65$ & $69.5$ & $83.1$ & $58.1$ & $63.2$ & $80.9$ & $55.2$ & $55.2$ & $87.0$ & $59.9$ & $64$ & $53.9$ & $18.0$ & $18.9$ \\ 
& \texttt{DMGI} \cite{Park2019UnsupervisedAM} 	&$84.8$	&$59.6$&$61.5$&$89.0$ &$68.5$ &$74.5$&$76.0$	&$46.5$&$40.0$&$69.2$&$37.3$&$39.2$&$58.5$&$19.0$&$18.9$ \\ 
& \texttt{LMVSC}~\cite{kang2020large} 	&$91.6$&$72.5$&$76.7$&$70.1$&$46.6$&$39.9$&$89.5$	&$64.5$	&$70.1$	&$85.7$		& $58.6$ & $58.4$&$51.9$&$11.9$&$12.3$ \\ 
& \texttt{MvAGC}~\cite{Lin2021GraphFM} 	&$89.8$&$67.4$&$72.1$&$92.8$&$77.3$&$82.8$	&$49.6$	&$0.1$&$0.0$		&$74.4$ &$38.7$&$40.7$ &$56.3$&$3.7$&$9.7$ \\ 
& \texttt{MCGC}~\cite{Pan2021MultiviewCG}	&$91.5$&$71.3$&$76.3$&$92.9$&$77.5$&$83.0$&$70.1$&$45.8$&$36.5$&$56.6$&$20.9$&$8.8$&$61.8$&$11.5$&$18.1$ \\ 
& \texttt{MMGC}~\cite{tan2023metric} 	&$86.6$&$58.1$&$64.5$&$65.8$&$29.4$&$58.5$&$82.3$&$48.4$&$53.1$&$54.9$&$28.0$&$55.7$&$45.2$&$19.5$&$20.1$ \\ 
& \texttt{MGDCR}~\cite{Mo2023MultiplexGR}& $91.9$ &$72.1$&$65.1$&$91.9$&$75.9$&$80.7$&$66.4$&$54.3$&$50.3$&$71.6$&$38.9$&$42.6$&$56.3$&$21.2$&$19.5$ \\ 
& \texttt{BTGF}~\cite{Qian2023UpperBB}&\underline{$93.2$}&\underline{$75.8$}&\underline{$80.9$}&$83.1$&$62.4$&$59.7$&$88.3$&$64.2$&$67.6$&$73.2$&$44.2$&$45.4$&\underline{$66.8$}&\underline{$22.6$}&\underline{$25.7$} \\ 
& \texttt{DuaLGR}~\cite{Ling2023DualLG} 	&$92.7$&$73.2$&$79.4$&$92.4$&$75.5$&$81.7$&$87.3$&$61.3$&$64.8$&$88.1$&$63.4$&$65.0$&$52.4$&$16.0$&$14.5$ \\ 
& \texttt{DMG}~\cite{Mo2023DisentangledMG} 	&$93.0$&$73.6$&$80.3$&$93.4$& \underline{$79.1$}&$83.3$&$87.9$&$67.3$&$63.4$&$56.1$&$42.6$&$39.1$&$48.3$&$11.3$&$14.5$\\
& \texttt{BMGC}~\cite{Shen2024BalancedMG} 	&$93.0$&$75.7$&$80.4$&\underline{$93.4$}&$78.3$&\underline{$84.0$}&\underline{$91.3$}&\cellcolor{blue!30}\underline{$72.0$}& $74.2$&\underline{$91.5$}&\underline{$71.7$}&\underline{$73.8$}&$51.0$&$14.3$&$14.4$\\ \cline{2-17}
& \algo{} &$93.2$ & $75.6$   & $80.7$ & $92.6$ & $76.5$ & $82.1$ & $90.8$ & $70.1$ & $73.2$ & $91.7$ & $69.7$ & $74.7$ &\cellcolor{blue!30} $68.5$ &\cellcolor{blue!30} $25.0$ &\cellcolor{blue!30} $28.1$ \\ 
& Improv. &$0.0$&-$0.2$&-$0.2$&-$0.8$&-$2.6$&-$1.9$&-$0.5$&-$1.9$&-$1.0$&+$0.2$&-$2.0$&+$0.9$&+$1.7$&+$2.4$&+$2.4$\\ 
& \algoplus{}&\cellcolor{blue!30}$93.6$&$\cellcolor{blue!30}77.2$&\cellcolor{blue!30}$81.9$&\cellcolor{blue!30}$93.7$&\cellcolor{blue!30}$79.6$&\cellcolor{blue!30}$84.8$&\cellcolor{blue!30}$91.3$&$71.2$&\cellcolor{blue!30}$74.7$&\cellcolor{blue!30}$92.7$&\cellcolor{blue!30}$72.6$&\cellcolor{blue!30}$77.7$&$67.6$&$24.4$&$26.5$\\ 
& Improv. &+$0.4$&+$1.4$&+$1.0$&+$0.3$&+$0.5$&+$0.8$&+$0.0$&-$0.8$&+$0.5$&+$1.2$& +$1.3$&+$3.9$ &+$0.8$&+$1.8$&+$0.8$ \\ 
\hline
\end{tabular}
\end{small}
\label{tbl:node-clustering-small}
\vspace{0ex}
\end{table*}

\eat{\begin{table*}[!t]
\centering
\renewcommand{\arraystretch}{0.8}
\caption{Clustering quality on large MRGs (best is highlighted in \textcolor{blue!50}{blue} and best baseline underlined).}\vspace{-3mm}
\begin{small}
\addtolength{\tabcolsep}{-0.25em}
\begin{tabular}{cc|ccc | ccc | ccc | ccc }
\hline
& \multirow{2}{*}{\bf Method} & \multicolumn{3}{c|}{\bf{ {\em MAG}}} & \multicolumn{3}{c|}{\bf{ {\em OAG-ENG}}}   & \multicolumn{3}{c|}{\bf{ {\em OAG-CS}}} & \multicolumn{3}{c}{\bf{ {\em RCDD}}} \\ \cline{3-14}
& & ACC \textuparrow & NMI \textuparrow & ARI \textuparrow & ACC \textuparrow & NMI \textuparrow & ARI \textuparrow & ACC \textuparrow & NMI \textuparrow & ARI \textuparrow & ACC \textuparrow & NMI \textuparrow & ARI \textuparrow  \\ 
\hline
\multirow{8}{*}{\rotatebox[origin=c]{90}{w/o attributes}} & \texttt{node2vec}~\cite{grover2016node2vec}  & $52.1$ & $31.8$ & $19.1$ & $19.7$ & $18.4$ & $2.1$ &$19.5$ & $11.8$ & {$6.5$} & $50.3$ & $0.0$ & $0.0$ \\
& \texttt{DeepWalk}~\cite{perozzi2014deepwalk} & $49.9$ & $35.6$ & $30.1$ & $9.1$ & $3.0$ & $1.1$ & $18.3$ & $12.2$ & $6.1$ &$54.7$ & $0.0$ & {$0.2$}\\
& \texttt{LeadEigvec}~\cite{newman2006finding} & $27.1$ & $2.1$ & $0.0$ & $7.3$ & $14.8$ & $0.2$ & $9.8$ & $1.7$ & $0.0$ &$-$&$-$&$-$ \\
& \texttt{LabelProg}~\cite{raghavan2007near_LabelProg} &$15.7$ &$24.5$ &$12.6$ &$11.4$ &$36.8$ &$5.5$ &$17.0$ &{$19.4$} &$5.3$ &$4.3$ &{$4.9$} &$0.1$\\ 
& \texttt{Louvain}~\cite{blondel2008fast_Louvain} &$40.8$ &$37.5$ &$28.6$ &$23.2$ &$30.0$ &$10.6$ &$18.2$ &$13.7$ &$5.6$ &$4.1$ &$4.6$ &$0.1$\\ 
& \texttt{SpecClust}~\cite{von2007tutorial} &$27.2$ &$0.1$ &$0.0$ &$7.5$ &$0.6$ &$0.0$ &$9.8$ &$0.1$ &$0.0$ &$-$ &$-$&$-$ \\ 
\cline{2-14}
& Improv. &+$11.5$ &+$24.8$ &+$21.2$ &+$2.8$ &-$14.7$ &-$0.3$ &+$9.0$ &+$18.9$ &+$10.7$ &-$2.6$ &-$4.9$ & -$0.2$\\ 
& \algoal{} &$63.6$&$62.3$&$51.3$&$26.0$&$22.1$&$10.3$&$28.5$&$38.3$&$17.2$&$52.1$ &$0.0$&$0.0$ \\
\hline
\hline
\multirow{14}{*}{\rotatebox[origin=c]{90}{w/ attributes}} & \texttt{S3GC} \cite{devvrit2022s3gc} & $64.5$ & $61.5$ & $51.5$ & $5.6$ & $3.7$ & $3.4$ &\underline{$35.4$} &\underline{$38.5$} & \underline{$21.4$} &$-$&$-$&$-$ \\ 
& \texttt{DMoN} \cite{tsitsulin2023graph} &$55.8$ & $43.5$ &\cellcolor{blue!30}\underline{$53.7$}  &$13.0$ & $8.4$ & $3.9$ & $11.1$ & $8.5$ & $6.0$&$-$&$-$&$-$\\ 
& \texttt{Dink-Net} \cite{liu2023dink} &$64.8$& $61.7$&$49.6$&$-$&$-$&$-$&$-$&$-$&$-$&$-$&$-$&$-$  \\ 
& \texttt{S\textsuperscript{2}CAG} \cite{lin2024spectral} &\underline{$66.7$}&\underline{$62.5$}&$53.5$&$6.9$&$0.1$&$0.0$&$6.8$&$0.1$&$0.0$& $69.3$&\underline{$13.2$}&\underline{$16.9$} \\ 
& \texttt{DMGI} \cite{Park2019UnsupervisedAM} 	&$29.1$ &$0.7$ &$1.0$ &$8.2$ &$1.8$ &$0.6$ &$9.8$ &$4.7$ &$1.3$ &$67.7$ &$2.6$ &$4.2$\\ 
& \texttt{LMVSC}~\cite{kang2020large} 	&$41.7$&$19.5$&$13.1$&$18.6$&$16.4$&$9.5  $&$19.3$&$14.2$&$5.7 $&$69.9$& $1.6$ &$1.9$ \\ 
& \texttt{MvAGC}~\cite{Lin2021GraphFM} &$54.0$&$32.7$&$27.7$&$12.2$&$5.4$&$2.0$&$10.9$&$4.4$&$1.6$&\underline{$75.1$}&$4.2$&$11.3$\\ 
& \texttt{MGDCR}~\cite{Mo2023MultiplexGR}& $61.4$&$54.5$&$44.0$& \underline{$25.7$} &$21.0$&\underline{$13.8$}&$25.3$&$25.9$&$16.8$&$ -$&$-$&$-$ \\ 
& \texttt{DMG}~\cite{Mo2023DisentangledMG} 	&$55.3$&$43.1$&$34.9$&$25.2$&\underline{$24.5$}&$10.9$&$25.9$&$28.3$&$13.9$&$-$&$-$&$-$ \\ 
& \texttt{BMGC}~\cite{Shen2024BalancedMG} 	&$65.3$&$57.0$&$47.8$&$16.5$&$14.3$&$4.9$&$16.5$&$16.5$&$14.3$&$-$&$-$&$-$ \\ \cline{2-14}
& \algo{} & \cellcolor{blue!30} $68.0$&\cellcolor{blue!30}$64.4$&$52.6$& $-$ &$-$ & $-$&$-$&$-$&$-$&$-$&$-$&$-$ \\ 
& Improv. &+$1.3$&+$1.9$&-$1.1$& $-$& $-$& $-$&$-$&$-$&$-$&$-$&$-$&$-$ \\ 
& \algoplus{} 	&$67.8$&$63.3$&$52.3$&\cellcolor{blue!30}$42.3$&\cellcolor{blue!30}$41.8$&\cellcolor{blue!30}$24.8$&\cellcolor{blue!30}$40.1$&\cellcolor{blue!30}$42.7$&\cellcolor{blue!30}$24.1$&\cellcolor{blue!30}$83.4$&\cellcolor{blue!30}$18.6$&\cellcolor{blue!30}$29.0$ \\ 
& Improv. &+$1.1$&+$0.8$&-$1.4$&+$16.6$&+$17.3$&+$11.0$&+$4.7$&+$4.2$&+$2.7$ &+$8.3$&+$5.4$&+$12.1$  \\ 
\hline
\end{tabular}
\end{small}
\label{tbl:node-clustering-large}
\vspace{0ex}
\end{table*}}

\begin{table*}[!t]
\centering
\renewcommand{\arraystretch}{0.8}
\caption{Clustering quality on large MRGs (best is highlighted in \textcolor{blue!50}{blue} and best baseline underlined).}\vspace{-3mm}
\resizebox{\textwidth}{!}{%
\begin{small}
\addtolength{\tabcolsep}{-0.25em}
\begin{tabular}{cc|
ccc|ccc|
ccc|ccc|ccc|ccc}
\hline
& \multirow{2}{*}{\bf Method} 
& \multicolumn{3}{c|}{\bf{\textcolor{black}{{\em Protein}}}} 
& \multicolumn{3}{c|}{\bf{\textcolor{black}{{\em Amazon}}}} 
& \multicolumn{3}{c|}{\bf{ {\em MAG}}}
& \multicolumn{3}{c|}{\bf{ {\em OAG-ENG}}}
& \multicolumn{3}{c|}{\bf{ {\em OAG-CS}}}
& \multicolumn{3}{c}{\bf{ {\em RCDD}}}
\\ \cline{3-20}
& & \textcolor{black}{ACC \textuparrow} & \textcolor{black}{NMI \textuparrow} & \textcolor{black}{ARI \textuparrow}
  & \textcolor{black}{ACC \textuparrow} & \textcolor{black}{NMI \textuparrow} & \textcolor{black}{ARI \textuparrow}
  & ACC \textuparrow & NMI \textuparrow & ARI \textuparrow
  & ACC \textuparrow & NMI \textuparrow & ARI \textuparrow
  & ACC \textuparrow & NMI \textuparrow & ARI \textuparrow
  & ACC \textuparrow & NMI \textuparrow & ARI \textuparrow\\ 
\hline
\multirow{8}{*}{\rotatebox[origin=c]{90}{w/o attributes}} 
& \texttt{node2vec}~\cite{grover2016node2vec}
& \textcolor{black}{$27.1$} & \textcolor{black}{$4.9$} & \textcolor{black}{$2.7$}
& \textcolor{black}{$57.2$} & \textcolor{black}{$3.0$} & \textcolor{black}{$-2.8$}
& $52.1$ & $31.8$ & $19.1$ & $19.7$ & $18.4$ & $2.1$ & $19.5$ & $11.8$ & {$6.5$} & $50.3$ & $0.0$ & $0.0$ \\
& \texttt{DeepWalk}~\cite{perozzi2014deepwalk}
& \textcolor{black}{$33.5$} & \textcolor{black}{$4.7$} & \textcolor{black}{$2.5$}
& \textcolor{black}{$60.2$} & \textcolor{black}{$1.5$} & \textcolor{black}{$2.0$}
& $49.9$ & $35.6$ & $30.1$ & $9.1$ & $3.0$ & $1.1$ & $18.3$ & $12.2$ & $6.1$ &$54.7$ & $0.0$ & {$0.2$} \\
& \texttt{LeadEigvec}~\cite{newman2006finding}
& \textcolor{black}{$32.4$} & \textcolor{black}{$0.3$} & \textcolor{black}{$-0.1$}
& \textcolor{black}{$61.4$} & \textcolor{black}{$0.7$} & \textcolor{black}{$-1.9$}
& $27.1$ & $2.1$ & $0.0$ & $7.3$ & $14.8$ & $0.2$ & $9.8$ & $1.7$ & $0.0$ &$-$&$-$&$-$ \\
& \texttt{LabelProg}~\cite{raghavan2007near_LabelProg}
& \textcolor{black}{$31.5$} & \textcolor{black}{$5.5$} & \textcolor{black}{$0.3$}
& \textcolor{black}{$91.4$} & \textcolor{black}{$1.2$} & \textcolor{black}{$4.2$}
&$15.7$ &$24.5$ &$12.6$ &$11.4$ &$36.8$ &$5.5$ &$17.0$ &{$19.4$} &$5.3$ &$4.3$ &{$4.9$} &$0.1$ \\
& \texttt{Louvain}~\cite{blondel2008fast_Louvain}
& \textcolor{black}{$32.6$} & \textcolor{black}{$11.4$} & \textcolor{black}{$4.6$}
& \textcolor{black}{$40.1$} & \textcolor{black}{$0.5$} & \textcolor{black}{$0.2$}
&$40.8$ &$37.5$ &$28.6$ &$23.2$ &$30.0$ &$10.6$ &$18.2$ &$13.7$ &$5.6$ &$4.1$ &$4.6$ &$0.1$ \\
& \texttt{SpecClust}~\cite{von2007tutorial}
& \textcolor{black}{$35.6$} & \textcolor{black}{$5.8$} & \textcolor{black}{$2.8$}
& \textcolor{black}{$76.3$} & \textcolor{black}{$1.6$} & \textcolor{black}{$-5.6$}
&$27.2$ &$0.1$ &$0.0$ &$7.5$ &$0.6$ &$0.0$ &$9.8$ &$0.1$ &$0.0$ &$-$ &$-$&$-$ \\
\cline{2-20}
& Improv.
& \textcolor{black}{-$3.2$} & \textcolor{black}{-$9.5$} & \textcolor{black}{-$4.6$}
& \textcolor{black}{+$0.2$} & \textcolor{black}{+$1.4$} & \textcolor{black}{+$11.2$}
&+$11.5$ &+$24.8$ &+$21.2$ &+$2.8$ &-$14.7$ &-$0.3$ &+$9.0$ &+$18.9$ &+$10.7$ &-$2.6$ &-$4.9$ & -$0.2$ \\
& \algoal{}
& \textcolor{black}{$32.3$}& \textcolor{black}{$1.9$}& \textcolor{black}{$0.0$}
& \textcolor{black}{$91.6$}& \textcolor{black}{$4.4$}& \textcolor{black}{$15.4$}
&$63.6$&$62.3$&$51.3$&$26.0$&$22.1$&$10.3$&$28.5$&$38.3$&$17.2$&$52.1$ &$0.0$&$0.0$ \\
\hline
\hline
\multirow{14}{*}{\rotatebox[origin=c]{90}{w/ attributes}} 
& \texttt{S3GC} \cite{devvrit2022s3gc}
& \textcolor{black}{$37.7$}& \textcolor{black}{$15.5$}& \textcolor{black}{$9.7$}
& \textcolor{black}{$87.3$}& \underline{\textcolor{black}{$10.3$}}& \textcolor{black}{$2.6$}
& $64.5$ & $61.5$ & $51.5$ & $5.6$ & $3.7$ & $3.4$ &\underline{$35.4$} &\underline{$38.5$} & \underline{$21.4$} &$-$&$-$&$-$ \\
& \texttt{DMoN} \cite{tsitsulin2023graph}
& \underline{\textcolor{black}{$38.0$}}& \textcolor{black}{$6.9$}& \textcolor{black}{$5.5$}
& \textcolor{black}{$44.5$}& \textcolor{black}{$5.8$}& \textcolor{black}{$6.7$}
&$55.8$ & $43.5$ &\cellcolor{blue!30}\underline{$53.7$}  &$13.0$ & $8.4$ & $3.9$ & $11.1$ & $8.5$ & $6.0$&$-$&$-$&$-$ \\
& \texttt{Dink-Net} \cite{liu2023dink}
& \textcolor{black}{$33.1$}& \textcolor{black}{$8.7$}& \textcolor{black}{$4.5$}
& \textcolor{black}{$76.8$}& \textcolor{black}{$2.3$}& \textcolor{black}{$2.1$}
&$64.8$& $61.7$&$49.6$&$-$&$-$&$-$&$-$&$-$&$-$&$-$&$-$&$-$ \\
& \texttt{S\textsuperscript{2}CAG} \cite{lin2024spectral}
& \textcolor{black}{$22.8$}& \textcolor{black}{$1.4$}& \textcolor{black}{$0.6$}
& \textcolor{black}{$63.7$}& \textcolor{black}{$1.4$}& \textcolor{black}{$3.6$}
&\underline{$66.7$}&\underline{$62.5$}&$53.5$&$6.9$&$0.1$&$0.0$&$6.8$&$0.1$&$0.0$& $69.3$&\underline{$13.2$}&\underline{$16.9$} \\
& \texttt{DMGI} \cite{Park2019UnsupervisedAM}
& \textcolor{black}{$23.4$}& \textcolor{black}{$2.1$}& \textcolor{black}{$0.9$}
& \textcolor{black}{$56.0$}& \textcolor{black}{$3.8$}& \textcolor{black}{$1.3$}
&$29.1$ &$0.7$ &$1.0$ &$8.2$ &$1.8$ &$0.6$ &$9.8$ &$4.7$ &$1.3$ &$67.7$ &$2.6$ &$4.2$ \\
& \texttt{LMVSC}~\cite{kang2020large}
& \textcolor{black}{$29.6$}& \textcolor{black}{$3.7$}& \textcolor{black}{$0.0$}
& \textcolor{black}{$63.7$}& \textcolor{black}{$0.0$}& \textcolor{black}{$0.0$}
&$41.7$&$19.5$&$13.1$&$18.6$&$16.4$&$9.5$&$19.3$&$14.2$&$5.7$&$69.9$& $1.6$ &$1.9$ \\
& \texttt{MvAGC}~\cite{Lin2021GraphFM}
& \textcolor{black}{$35.1$}& \textcolor{black}{$11.5$}& \textcolor{black}{$8.8$}
& \textcolor{black}{$75.1$}& \textcolor{black}{$8.8$}& \underline{\textcolor{black}{$14.6$}}
&$54.0$&$32.7$&$27.7$&$12.2$&$5.4$&$2.0$&$10.9$&$4.4$&$1.6$&\underline{$75.1$}&$4.2$&$11.3$ \\
& \texttt{MGDCR}~\cite{Mo2023MultiplexGR}
& \textcolor{black}{$29.1$}& \textcolor{black}{$0.3$}& \textcolor{black}{$0.0$}
& \textcolor{black}{$81.6$}& \textcolor{black}{$2.6$}& \textcolor{black}{$0.0$}
& $61.4$&$54.5$&$44.0$& \underline{$25.7$} &$21.0$&\underline{$13.8$}&$25.3$&$25.9$&$16.8$&$ -$&$-$&$-$ \\
& \texttt{DMG}~\cite{Mo2023DisentangledMG}
& \textcolor{black}{$32.2$}& \textcolor{black}{$0.2$}& \textcolor{black}{$0.1$}
& \underline{\textcolor{black}{$90.9$}}& \textcolor{black}{$1.4$}& \textcolor{black}{$7.6$}
&$55.3$&$43.1$&$34.9$&$25.2$&\underline{$24.5$}&$10.9$&$25.9$&$28.3$&$13.9$&$-$&$-$&$-$ \\
& \texttt{BMGC}~\cite{Shen2024BalancedMG}
& \textcolor{black}{$37.5$}& \underline{\textcolor{black}{$17.3$}}& \underline{\textcolor{black}{$10.3$}}
& \textcolor{black}{$77.5$}& \textcolor{black}{$0.4$}& \textcolor{black}{$1.8$}
&$65.3$&$57.0$&$47.8$&$16.5$&$14.3$&$4.9$&$16.5$&$16.5$&$14.3$&$-$&$-$&$-$ \\
\cline{2-20}
& \algo{}
& \textcolor{black}{$38.9$}& \textcolor{black}{$14.1$}& \textcolor{black}{$8.2$}
& \textcolor{black}{$91.2$}& \textcolor{black}{$14.3$}& \textcolor{black}{$32.4$}
& \cellcolor{blue!30} $68.0$&\cellcolor{blue!30}$64.4$&$52.6$& $-$ &$-$ & $-$&$-$&$-$&$-$&$-$&$-$&$-$ \\
& Improv.
& \textcolor{black}{+$0.9$}& \textcolor{black}{-$3.2$}& \textcolor{black}{-$2.1$}
& \textcolor{black}{+$0.3$}& \textcolor{black}{+$4.0$}& \textcolor{black}{+$17.8$}
&+$1.3$&+$1.9$&-$1.1$& $-$& $-$& $-$&$-$&$-$&$-$&$-$&$-$&$-$ \\
& \algoplus{}
&\cellcolor{blue!30}\textcolor{black}{$39.2$}&\cellcolor{blue!30}\textcolor{black}{$19.4$}&\cellcolor{blue!30}\textcolor{black}{$12.8$}
&\cellcolor{blue!30}\textcolor{black}{$92.6$}&\cellcolor{blue!30}\textcolor{black}{$15.7$}&\cellcolor{blue!30}\textcolor{black}{$34.2$}
&$67.8$&$63.3$&$52.3$&\cellcolor{blue!30}$42.3$&\cellcolor{blue!30}$41.8$&\cellcolor{blue!30}$24.8$&\cellcolor{blue!30}$40.1$&\cellcolor{blue!30}$42.7$&\cellcolor{blue!30}$24.1$&\cellcolor{blue!30}$83.4$&\cellcolor{blue!30}$18.6$&\cellcolor{blue!30}$29.0$ \\
& Improv.
& \textcolor{black}{+$1.2$} & \textcolor{black}{+$2.1$} & \textcolor{black}{+$2.5$}
& \textcolor{black}{+$1.5$} & \textcolor{black}{+$5.4$} & \textcolor{black}{+$19.6$}
&+$1.1$&+$0.8$&-$1.4$&+$16.6$&+$17.3$&+$11.0$&+$4.7$&+$4.2$&+$2.7$ &+$8.3$&+$5.4$&+$12.1$ \\
\hline
\end{tabular}
\end{small}
}
\label{tbl:node-clustering-large}
\vspace{0ex}
\end{table*}

\input{tex/figs/efficiency}

\input{tex/figs/ablation}

\subsection{Clustering Quality Evaluation}
This set of experiments studies the clustering quality attained by \algo{}, \algoplus{}, \algoal{}, and 20 competitors on all 9 MRG datasets. We exclude a method or omit its results if it fails to return valid outcomes within 2 days or runs beyond physical memory limits.
Tables~\ref{tbl:node-clustering-small} and~\ref{tbl:node-clustering-large} report the ACC, NMI and ARI scores of all evaluated methods on small and large MRGs, respectively. Each table is divided into two parts, where the top part compares \algoal{} against attribute-less graph clustering baselines by discarding the attributes of all datasets. The best results are highlighted in blue, and the best baselines are underlined.

From the tables, we can make the following observations. Firstly, \algoplus{} consistently and considerably outperforms the best baselines in almost all cases. Particularly, on the large datasets, \algoplus is able to achieve significant gains of $16.6\%$, $17.3\%$, and $11.0\%$ in ACC, NMI, and ARI on {\em OAG-ENG} and remarkable improvements of $8.3\%$, $5.4\%$, and $12.1\%$ on {\em RCDD}, respectively. 
\textcolor{black}{On medium‐sized datasets {\em Protein} and {\em Amazon}, \algoplus{} also outperforms all baselines, yielding notable gains of $1.2\%$, $2.1\%$, $2.5\%$, and $1.7\%$, $5.4\%$, and $19.6\%$ in ACC, NMI and ARI, respectively.}
In addition, it can be observed that \algo{} is comparable to \algoplus{} on most small MRGs but slightly better on {\em IMDB} and {\em MAG}. On larger datasets, \algo{} fails to report results due to the quadratic complexity analyzed in Section~\ref{sec:analysis-1}. The superiority of \algo{} and \algoplus{} over MRGC, attributed graph clustering, and multi-view graph clustering baselines substantiates the effectiveness of our proposed two-stage objectives based on MRDE and DE in fusing multi-relational graph structures. 

On attribute-less MRGs, the variant \algoal{} of \algoplus{} surpasses the best baselines in terms of ACC on all datasets except {\em RCDD}. Most notably, on {\em MAG}, \algoal{} takes a lead of $11.5\%$, $24.8\%$, and $21.2\%$ in ACC, NMI, and ARI. Notice that \texttt{LabelProg} and \texttt{Louvain} determine the number of clusters automatically, which accidentally leads to higher NMI and ARI values on {\em Yelp}, {\em IMDB}, and {\em OAG-ENG} compared to \algoal{}. 

\subsection{Clustering Efficiency Evaluation}
Figure~\ref{fig:time-small} plots the runtime costs consumed by \algoplus{} and 10 strong baselines in Tables~\ref{tbl:node-clustering-small} and~\ref{tbl:node-clustering-large}.
Note that the $y$-axis is in log-scale and the measurement unit for running time is seconds (sec). For fairness, we exclude the time costs needed for loading input data and outputting results in all methods, as well as their pre-training or pre-processing costs. The baselines with the best clustering quality are marked with $\star$.
We exclude \texttt{MCGC}, \texttt{MMGC}, \texttt{BTGF}, and \texttt{DuaLGR} on large MRGs as they are unable to terminate with valid outcomes.

As evidenced in Figure~\ref{fig:time-small}, \algoplus consistently demonstrates higher efficiency across all benchmark datasets.
Compared to the best baselines in Tables~\ref{tbl:node-clustering-small} and~\ref{tbl:node-clustering-large}, \algoplus{} is able to achieve remarkable speedups of $62.5\times$, $23.9\times$, $25.6\times$, $21.4\times$, and $67.6\times$ on small datasets {\em ACM}, {\em DBLP}, {\em ACM2}, {\em Yelp}, and {\em IMDB}, respectively.
Notably, on large MRGs {\em OAG-CS} and {\em OAG-ENG} datasets with tens of millions of edges, the accelerations achieved by \algoplus{} are over $139\times$ and $53\times$, respectively.
Even on the largest dataset {\em RCDD} with $11.9$ million nodes and $0.78$ billion edges, where most recent competitive MRGC approaches \texttt{BTGF}, \texttt{DuaLGR}, \texttt{MGDCR}, \texttt{DMG}, and \texttt{BMGC} fail, \algoplus{} is still nearly $2\times$ faster compared to the best viable baseline \texttt{S\textsuperscript{2}CAG}, while producing significant improvements of $14.1\%$, $5.4\%$, and $12.1\%$ in ACC, NMI, and ARI. 

In Figure~ ~\ref{fig:demm-vs-demm+}, we further corroborate the effectiveness of our proposed algorithms \embalgo{} (Stage I) and \clustalgo{} (Stage II) in enhancing computational efficiency. As reported, 
\algoplus accelerates the computation of both stages in \algo, i.e., the construction of $\HM$ and the generation of clusters. The acceleration is particularly pronounced on the large MRG dataset {\em MAG}, where \algoplus obtains an overall speedup of $3,252\times$ than \algo. 
Moreover, \algo{} cannot handle larger MRGs within 2 days, whereas \algoplus{} finishes the clustering over {\em RCDD} using less than 30 minutes (see Figure~\ref{fig:time-small}).

\input{tex/figs/parameters}

\subsection{Ablation Study}
In this set of experiments, we empirically analyze the efficiency of three key ingredients in \algoplus{}, including the adjustments of RTWs $\{\omega_r\}_{r=1}^R$, the estimator $\alpha\cdot \widehat{\XM}^{(L)}$ of the terms beyond $L$ hops in $\HM$ in Eq.~\eqref{eq:comp-H-new}, and the regularization term $\mathcal{L}_{\textnormal{reg}}$ in Eq.~\eqref{eq:obj-emb}.

According to Tables~\ref{tbl:abl-small} and ~\ref{tbl:abl-large}, compared to three ablated versions that remove the three ingredients, the complete \algoplus always obtains conspicuously superior ACC, NMI, and ARI results on all MRGs. 
Notably, on the {\em ACM2} and {\em DBLP}, the ACC scores increase by $1.3\%$ and $2.1\%$, respectively, by including $\mathcal{L}_{\textnormal{reg}}$ term, which indicates the significance of the regularization term in balanced fusion of multiplex graph structures. The improvements are more significant on {\em OAG-NEG} and {\em OAG-CS}, where substantial ACC improvements of $17.9\%$ and $19.9\%$ can be gained. On {\em MAG}, the conducive effects of the first and second ingredients are still noticeable, whereas the $\mathcal{L}_{\textnormal{reg}}$ term contributes minimally.

\subsection{Parameter Analysis}\label{sec:para_set}

This section investigates the impact of parameters $\alpha$, $\beta$, $L$, and $d$ in \algoplus on two small datasets {\em ACM} and {\em IMDB} and two large MRGs {\em MAG} and {\em OAG-CS}, respectively, by varying each parameter while fixing others. We report ACC scores only as NMI and ARI results are quantitatively similar, and thus, are deferred to Appendix~\ref{sec:add-exp}.

\stitle{Varying $\alpha$} Figure~\ref{fig:alpha-small} shows the impact of varying $\alpha$ from 1 to 8 on the clustering performance on {\em ACM} and {\em IMDB}, while Figure~\ref{fig:alpha-large} presents its effects on {\em MAG} and {\em OAG-CS} when varying it from 10 to 150. The results reveal that $\alpha$ has a negligible influence on {\em ACM}, but a profound impact on {\em IMDB}, {\em MAG}, and {\em OAG-CS}. Specifically, the ACC scores of {\em IMDB} improve monotonically with $\alpha$ until reaching its maximum value at $\alpha=7$, whereas {\em MAG} and {\em OAG-CS} exhibit oscillatory behaviors, attaining peak values at $\alpha=50$ and $110$, respectively. 
Recall that in Eq. \eqref{eq:obj-emb}, $\alpha$ is the weight assigned to the MRDE term towards injecting graph topology information into the node feature vectors $\HM$. Thus, a higher $\alpha$ indicates a larger portion of structural features encoded into $\HM$.
Generally, on the four datasets, a large $\alpha$ is preferred, implying the importance of graph structures in MRGC.

\stitle{Varying $\beta$} Figure~\ref{fig:beta} displays the effects of the regularization weight $\beta$ on ACC scores in Eq.\eqref{eq:obj-emb}. In Figure~\ref{fig:beta-small}, where $\beta$ varies within a short range from 2.5 to 6, the ACC scores of datasets {\em ACM} and {\em IMDB} exhibit divergent trends: the clustering performance of {\em ACM} deteriorates monotonically with increasing $\beta$, whereas that on {\em IMDB} grows progressively. 
In Figure~\ref{fig:beta-large}, when varying $\beta$ from 20 to 90, it can be observed that increasing $\beta$ has little impact on {\em MAG}, but brings a considerable performance rise on {\em OAG-CS}. The differences can be ascribed to their unique structural disparities and volume differences between edges of different relation types.

\stitle{Varying $L$} Figures~\ref{fig:L-small} and~\ref{fig:L-large} depict how the ACC scores change when $L$ is varied from 3 to 17 on {\em ACM} and {\em IMDB}, and from 6 to 20 on {\em MAG} and {\em OAG-CS}. It can be seen that increasing $L$ has little impact on ACC scores on {\em ACM} and {\em IMDB}. 
In comparison, on larger MRGs {\em MAG} and {\em OAG-CS}, the ACC scores first undergo upticks when increasing $L$ to roughly $12$ or $14$, followed by a decrease or plateau. 
The results imply that estimating $\HM$ as in Eq.~\eqref{eq:Hprime} with up to a small number $L$ hops of terms is sufficiently accurate, consistent with our empirical and theoretical analyses in Section~\ref{sec:FAAO}.

\stitle{Varying $d$} 
The parameter $d$ represents the dimension of initial feature vectors $\XM$, which are reduced from the input attribute matrix through a principal component analysis (Section~\ref{sec:objtives}).
Figures~\ref{fig:d-small} and ~\ref{fig:d-large} illustrate the changes in ACC scores on all four datasets when varying $d$ in the ranges of $[8,1024]$ and $[4,128]$. 
For all datasets, we can see a clear rise in performance when enlarging $d$ from 4 to 128, meaning more features are retained. However, the performance of \algoplus{} starts to remain invariant or even undergoes minor drops when $d$ exceeds $128$, on either {\em ACM} and {\em IMDB} whose original attribute dimensions $D$ are up to 2,000, or {\em MAG} and {\em OAG-CS} with $D=128$ and 768. The drops are caused by data noise embodied in original attribute vectors, while the invariance can be explained by the well-known Johnson-Lindenstrauss lemma.

\vspace{-2ex}
\section{Related Work}\label{related-work}

\stitle{Multi-relational Graph Clustering} MRGC focuses on generating consistent node representation by integrating consistency information across different relation types. 
Previous methods typically use adaptive weights to fuse each relation together and construct a unified graph~\cite{Hassani2020ContrastiveMR,Lin2021GraphFM,Pan2021MultiviewCG}, 
\texttt{SwMC}~\cite{Nie2017SelfweightedMC} and \texttt{MvAGC}~\cite{Lin2021GraphFM} are the representative methods with a self-adjusting weight computation algorithm.
To further extract shared patterns from MRG, numerous methods have incorporated consistency information during the fusion of different relation types. \texttt{DuaLGR}~\cite{Ling2023DualLG} proposed a method where soft labels derived from consistency information are used to refine the graphs of each relation type before fusion. 
\texttt{DMGI}~\cite{Park2019UnsupervisedAM} reconstructs MRG by maximizing the mutual information across relation types.
However, these methods cannot fully exploit the dependencies between different relation types and the feature matrices, resulting in their underperformance in MRGs. 

Recently, many approaches generate node embeddings for each relation type individually and identify cross-relational consistencies from different relational graphs ~\cite{Liu2021MultilayerGC,Xia2021MultiviewGE,Pan2023BeyondHR,Shen2024BeyondRI,Peng2023UnsupervisedMG,Pan2021MultiviewCG}. \texttt{BTGF}~\cite{Qian2023UpperBB} designs filters with non-shared parameters for each relation type to obtain node embeddings from diverse perspectives. \texttt{DMG}~\cite{Mo2023DisentangledMG} disentangles consistent and redundant information from the features of different relations. \texttt{BMGC}~\cite{Shen2024BalancedMG} introduces imbalanced multiview learning to refine embeddings derived from less important relation types. 
Nevertheless, these methods overlook the complementary information introduced by fusing MRGs, thus hindering the exploitation of MRGs.

\stitle{Attributed Graph Clustering} Attributed graph clustering (AGC) has been extensively studied nowadays~\cite{bothorel2015clustering,yang2021effective,lai2023re,li2024versatile,xie2025diffusion,yang2024effective,li2023efficient,zheng2025adaptive}. Most recent research has focused on integrating graph topology with node attributes to produce cohesive embeddings ~\cite{Akbas2017AttributedGC,Combe2015ILouvainAA,Li2018CommunityDI,yang2023pane,Zhu2021SimpleSG}, which are then clustered by using classical clustering methods to obtain the final results. With the widespread adoption of deep learning, methods that leverage deep learning models like GNNs~\cite{Scarselli2009TheGN} to learn consistent node representations have gained popularity~\cite{sdcn2020,Cui2020AdaptiveGE,liu2022survey,Huo2021CaEGCNCF,DCRN}, \texttt{DMoN}~\cite{tsitsulin2023graph}, \texttt{Dink-Net}~\cite{liu2023dink}, and \texttt{S3GC}~\cite{devvrit2022s3gc} are the representative methods among them. \textcolor{black}{\texttt{H-GCN}~\cite{HuHGCN19} introduces graph coarsening to capture long-range information, thereby addressing the potential overfitting caused by increasing the depth of GNN models.} To fully integrate topological and attribute information of graphs, attention mechanisms~\cite{Xia2023RobustCM,wang2019attributed,Zhao2022HierarchicalAN} and graph contrastive learning~\cite{Hassani2020ContrastiveMR,Yang2023ClusterguidedCG,Zhao2021GraphDC} have also been widely employed in this process. 
Some recent approaches~\cite{fettal2023scalable,lin2024spectral} integrate subspace clustering with spectral clustering techniques~\cite{Ng2001OnSC}.
However, AGC fails to account for the varying significance of distinct relations, rendering it inapplicable to MRGs.

\stitle{Multi-View Graph Clustering} Multi-view clustering is to group data with heterogeneous feature representations. Due to dimensional differences across vertices, directly linearly combining features from different views is not feasible. Early graph-based approaches rely on constructing similarity matrices followed by spectral clustering.~ \cite{Tang2009ClusteringWM,Zhou2007SpectralCA,Strehl2003ClusterE,Son2016AdaptiveSC}, \texttt{LMVSC}~\cite{kang2020large} enhances scalability by introducing anchor graphs to replace fully connective graph.
\texttt{GTLEC}~\cite{Chen2023OnRM} and \texttt{CGL}~\cite{Li2021ConsensusGL} enhance multi-view consistency through optimized affinity matrix construction.
These methods often incur significant memory consumption for similarity matrix construction. To this end, \texttt{UOMvSC}~\cite{Tang2023UnifiedOM} eliminates the need for explicit similarity matrix construction. Matrix factorization-based methods extract cross-view shared information through matrix decomposition and integrate it into a unified representation ~\cite{Chaudhuri2009MultiviewCV,White2012ConvexMS,Wei2014LearningAM,Wang2017ExclusivityConsistencyRM,Huang2023MultiViewSC,cui2023deep}.

Recent deep learning-based approaches define and optimize specific metrics such as \texttt{MCGC}~\cite{Pan2021MultiviewCG} and \texttt{MAGCN} \cite{Cheng2020MultiViewAG}.
Despite effectively integrating cross-dimensional features, they struggle to generalize to MRG due to incompatible relation modeling. 

\section{Conclusion}
This paper proposes two effective methods, \algo{} and \algoplus{}, for MRGC. \algo{} achieves remarkable clustering performance on MRGs, via our innovative two-stage optimization objectives formulated upon the MRDE of MRGs and DE of affinity graphs. Based thereon, we develop \algoplus{}, which significantly advances the efficiency and scalability of \algo{} via two elaborate secondary algorithms \embalgo{} and \clustalgo{} containing several non-trivial optimization techniques. Our extensive evaluations experimentally manifest the consistent superiority of \algoplus{} over a wide range of baselines in clustering quality and empirical efficiency.
However, the proposed techniques are mainly designed for static MRGs, which struggle to cope with dynamic MRGs with frequent updates.
In the future, our work can be extended to dynamic MRGs by devising sampling and incremental techniques for structural changes (e.g., node/edge insertions/deletions). Moreover, the notion of MRDE can be further generalized to heterogeneous graphs with multiple node types, enabling broader applications in real-world scenarios.

\begin{acks}
This work is supported by the Hong Kong RGC ECS grant (No. 22202623), NSFC No. 62302414, and the Huawei gift fund.
\end{acks}

\balance


\appendix
{\color{black}

\begin{algorithm}[!t]
\caption{\algoal{} Algorithm}\label{alg:emd-AL}
\KwIn{An attribute-less MRG $\G$, parameters $\alpha$, $\beta$, and $K$}
\KwOut{A set of $K$ clusters $\{\C_1,\ldots,\C_K\}$}
{\nonl Lines 1-4 are the same as in Algorithm~\ref{alg:emb}\;}
\setcounter{AlgoLine}{4}
$\HM\gets$ the first $d$ eigenvectors of $\NAM$\;
{\nonl Lines 6-7 are the same as Lines 10-12 in Algorithm~\ref{alg:emb}\;}
\setcounter{AlgoLine}{7}
$\{\C_1,\ldots,\C_K\}\gets \clustalgo{}(\HM, K)$\;
\end{algorithm}

\section{Extension to Attribute-less MRGs}\label{sec:extend}
In this section, we further extend \algoplus{} to handle attribute-less MRGs and dub the extended version as \algoal{}.

\stitle{Idea} Since in an attribute-less MRG $\G$, attribute matrix $\XM=\mathbf{0}$, our objective function in Eq.~\eqref{eq:obj-emb} then becomes
\begin{small}
\begin{equation*}
\begin{gathered}
\min_{\HM\in \NR,\ \omega_r\in \mathbb{R}} \alpha\cdot\mathcal{L}_{\textnormal{MRDE}} + \beta\cdot \sum_{r=1}^{R}\omega_r \cdot \|\NAM^{(r)}\|_F^2\quad \text{s.t.}  \quad \sum_{r=1}^R {\omega_r}=1,
\end{gathered}
\end{equation*}
\end{small}
consisting of two valid terms, MRDE and regularization. As per our analysis in Section~\ref{sec:BFAO}, $\mathcal{L}_{\textnormal{MRDE}}=\texttt{trace}(\HM^{\top}(\IM-\NAM)\HM)$, wherein $\NAM$ denotes the unified normalized adjacency matrix. Although we can analogously apply the alternating optimization scheme and update relation type weights $\{\omega_r\}_{r=1}^R$ efficiently as in Section~\ref{sec:FAAO}, the updating of node feature vectors $\HM$ is still problematic. 

Specifically, although the constraint $\HM\in \NR$ on $\HM$ can avoid trivial solutions to $\texttt{trace}(\HM^{\top}(\IM-\NAM)\HM)$, e.g., $\mathbf{0}$, the direct optimization with such a constraint undergoes numerous iterations of time-consuming projected gradient ascent steps.
As a workaround, the idea of \algoal{} is to impose an additional orthogonality constraint $\HM^\top\HM=\IM$ to $\HM$, thereby facilitating the problem transformation from minimizing $\texttt{trace}(\HM^{\top}(\IM-\NAM)\HM)$ to
\begin{equation*}
\max_{\HM^\top\HM=\IM}{ \texttt{trace}(\HM^{\top}\NAM\HM)}.
\end{equation*}
By Ky Fan's trace maximization principle~\cite{fan1949theorem}, the optimal $\HM$ to this problem is the first $d$ eigenvectors of $\NAM$, which can be efficiently computed via fast partial eigendecomposition solvers as $d\ll N$.





\stitle{Algorithm} As displayed in Algorithm~\ref{alg:emd-AL}, \algoal{} takes as input an attribute-less MRG $\G$, parameters $\alpha, \beta$, and the number $K$ of clusters. As Lines 1-2 in Algorithm~\ref{alg:emb}, Algorithm~\ref{alg:emd-AL} begins by initializing relation type weights $\{\omega_r\}_{r=1}^R$ and building matrix $\tilde{\EM}^{(r)}$. Afterwards, at Lines 3-7, \algoal{} iteratively updates node feature vectors $\HM$ and relation type weights. In each iteration, Algorithm~\ref{alg:emd-AL} computes the unified normalized adjacency matrix $\NAM$ by Eq.~\eqref{eq:A-sum} at Line 4, takes the first $K$ eigenvectors of $\NAM$ as $\HM$ at Line 5 through the {\em Arnoldi iterative solver}~\cite{lehoucq1996deflation}, followed by normalizing $\HM$ such that $\HM\in \NR$ at Line 6, respectively. Additionally, with $\HM$ and $\tilde{\EM}^{(r)}$ at hand, we update $\{\omega_r\}_{r=1}^R$ in the same way as in Algorithm~\ref{alg:emb} (Line 7). Eventually, the resulting node feature vectors $\HM$ after convergence will be input to \clustalgo{} (Algorithm~\ref{alg:clust}) to derive the final clusters $\{\C_1,\ldots,\C_K\}$.
 
\stitle{Complexity Analysis}
Lines 1-7 are identical to Algorithm~\ref{alg:emb} except for updating $\HM$ at Line 5, which involves a partial eigendecomposition of sparse matrix $\NAM$ and consumes $O(Md)$ time~\cite{lehoucq1996deflation}.
Combined with the cost analysis in Section~\ref{sec:FAAO}, the time overhead for generating $\HM$ in each iteration in the first stage is $O(Md+NdR)$.
Additionally, Algorithm~\ref{alg:emd-AL} invokes Algorithm~\ref{alg:clust} at Line 8 for the second stage. As per its cost analysis in Section~\ref{sec:SSKC}, the overall time complexity of \algoal{} is bounded by $O(Md+N(d^2+dR+K))$ when the numbers of iterations are regarded as constants.
The space overhead is the same as \algoplus{}, i.e., $O(M+N (d+K))$.
}

\section{Theoretical Proofs}\label{sec:proof}

\begin{lemma}[Lidskii Inequality~\cite{horn1962eigenvalues,lidskii1982spectral}]\label{le:lidin}
Suppose $\AM$ is a random matrix, and let $\lambda(\AM)$ denote the largest eigenvalue of $\AM$, For any Hermitian matrices $\AM$ and $\BM$, the following inequality holds:
\begin{equation*}
    \lambda(\AM + \BM) \leq \lambda(\AM) + \lambda(\BM)
\end{equation*}
\end{lemma}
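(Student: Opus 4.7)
The statement as written is essentially Weyl's inequality for the top eigenvalue of a sum of Hermitian matrices, which falls out almost immediately from the variational characterization of eigenvalues. My plan is therefore to avoid any heavy machinery and work directly from Rayleigh quotients.

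The first step is to invoke the Rayleigh–Ritz characterization: for any Hermitian matrix $\MM \in \mathbb{C}^{N\times N}$, the largest eigenvalue satisfies
\begin{equation*}
\lambda(\MM) \;=\; \max_{\xvec \in \mathbb{C}^N,\ \|\xvec\|_2 = 1} \xvec^{*} \MM \xvec.
\end{equation*}
Since $\AM$ and $\BM$ are Hermitian, so is their sum $\AM + \BM$, and this characterization applies to $\AM+\BM$ as well.

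The second step is to apply the characterization to $\AM+\BM$, split the quadratic form additively, and then use the elementary fact that the maximum of a sum is bounded above by the sum of the maxima:
\begin{align*}
\lambda(\AM+\BM) &= \max_{\|\xvec\|_2=1}\bigl(\xvec^{*}\AM\xvec + \xvec^{*}\BM\xvec\bigr) \\
&\le \max_{\|\xvec\|_2=1}\xvec^{*}\AM\xvec \;+\; \max_{\|\xvec\|_2=1}\xvec^{*}\BM\xvec \\
&= \lambda(\AM) + \lambda(\BM),
\end{align*}
which is the claimed bound.

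There is really no hard step here; the only subtlety worth flagging is that the inequality on the middle line is strict in general, because the maximizers of $\xvec^{*}\AM\xvec$ and $\xvec^{*}\BM\xvec$ need not coincide (they are the top eigenvectors of $\AM$ and $\BM$ respectively). Hermiticity is used only to guarantee that both Rayleigh-quotient expressions equal the largest eigenvalues, which is needed for the final equality. I do not plan to appeal to the full Lidskii majorization theorem, as it is unnecessary for this single-eigenvalue form of the inequality.
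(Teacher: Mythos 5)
Your proof is correct. Note that the paper does not actually prove this lemma at all: it is stated as a cited result from Horn and Lidskii and used as a black box in the proof of Lemma~\ref{eq:eigval-bound}, so there is no in-paper argument to compare against. Your Rayleigh--Ritz derivation is the standard elementary proof of this single-top-eigenvalue form (really Weyl's inequality rather than the full Lidskii majorization), and it is entirely adequate: Hermiticity gives $\lambda(\MM)=\max_{\|\xvec\|_2=1}\xvec^{*}\MM\xvec$ for each of $\AM$, $\BM$, and $\AM+\BM$, and subadditivity of the maximum does the rest. Two cosmetic remarks: the phrase ``the inequality is strict in general'' should read ``need not be an equality'' --- it is an equality whenever $\AM$ and $\BM$ share a top eigenvector --- and you are right to quietly ignore the paper's stray hypothesis that $\AM$ is ``a random matrix,'' which plays no role in the statement or the proof.
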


\begin{lemma}\label{lem:OME}
$\|\NAM^{L+\ell}-\NAM^L\|_2=\mu_{L,L+\ell}.$
\end{lemma}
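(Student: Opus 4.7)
The plan is to exploit the symmetry of the normalized adjacency matrix $\NAM = \DM^{-1/2}\AM\DM^{-1/2}$, which guarantees a real spectral decomposition with an orthonormal eigenbasis. Once that is in place, the claim reduces to the elementary fact that the spectral norm of a symmetric matrix equals the maximum absolute value of its eigenvalues.

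First, I would write $\NAM = \UM \Lambda \UM^\top$, where $\UM$ is orthogonal and $\Lambda = \operatorname{diag}(\lambda_1(\NAM),\ldots,\lambda_N(\NAM))$. Powers then factor through the diagonal: $\NAM^L = \UM \Lambda^L \UM^\top$ and $\NAM^{L+\ell} = \UM \Lambda^{L+\ell} \UM^\top$, so
\begin{equation*}
\NAM^{L+\ell} - \NAM^L \;=\; \UM\bigl(\Lambda^{L+\ell}-\Lambda^L\bigr)\UM^\top.
\end{equation*}
This is an orthogonal similarity, so the eigenvalues of $\NAM^{L+\ell}-\NAM^L$ are exactly $\{\lambda_i(\NAM)^{L+\ell}-\lambda_i(\NAM)^L\}_{i=1}^N$. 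Moreover, $\NAM^{L+\ell}-\NAM^L$ is itself symmetric.

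Next, I would invoke the standard identity that for any symmetric matrix $\MM$, $\|\MM\|_2 = \max_i |\lambda_i(\MM)|$. Applying it to $\MM = \NAM^{L+\ell}-\NAM^L$ yields
\begin{equation*}
\|\NAM^{L+\ell}-\NAM^L\|_2 \;=\; \max_{1 \le i \le N}\bigl|\lambda_i(\NAM)^{L+\ell}-\lambda_i(\NAM)^L\bigr|,
\end{equation*}
which is precisely $\mu_{L,L+\ell}$ by Definition~\ref{def:mix-gap}. No hard step is involved: the whole argument is a one-line reduction via simultaneous diagonalization, and the only subtlety to flag is confirming that $\NAM$ is indeed symmetric (which follows from the symmetric normalization used throughout the paper) so that the spectral theorem and the spectral-norm-equals-largest-eigenvalue identity both apply directly. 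Lidskii's inequality (Lemma~\ref{le:lidin}) is not even needed here, since we have an exact diagonal identity rather than a perturbation bound.
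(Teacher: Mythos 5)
Your proof is correct and follows essentially the same route as the paper's: both diagonalize the symmetric matrix $\NAM$, observe that $\NAM^{L+\ell}-\NAM^L$ is orthogonally similar to the diagonal matrix of eigenvalue differences, and conclude that its spectral norm equals $\max_i|\lambda_i(\NAM)^{L+\ell}-\lambda_i(\NAM)^L|=\mu_{L,L+\ell}$. The only cosmetic difference is that the paper phrases the last step via the maximum singular value rather than the max absolute eigenvalue of a symmetric matrix, which is the same fact.
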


\begin{proof}[\bf Proof of Eq.~\eqref{eq:Ncut}]
Let $s_i=\|\SM_i\|_1$. By the definition of the DE, we can rewrite $\mathcal{D}(\YM,\SM)$ in Eq.~\eqref{eq:obj-clust} as follows:
\begin{align*}
\mathcal{D}(\YM,\SM) =& \frac{1}{2}\sum_{v_i,v_j\in \V}{\SM}_{i,j}\cdot\left\|{\YM_i}/{\sqrt{s_i}}-{\YM_j}/{\sqrt{s_j}}\right\|^2_2 \\
=& \frac{1}{2}\sum_{k=1}^K\sum_{v_i,v_j\in \V}{\SM}_{i,j}\cdot\left({\YM_{i,k}}/{\sqrt{s_i}}-{\YM_{j,k}}/{\sqrt{s_j}}\right)^2 \\
=& \frac{1}{2}\sum_{k=1}^K\sum_{v_i, v_j\in \C_k}{\frac{\SM_{i,j}}{|\C_k|}\cdot \left(\frac{1}{\sqrt{s_i}}-\frac{1}{\sqrt{s_j}}\right)}^2 \\
& + \frac{1}{2}\sum_{k=1}^K\sum_{v_i\in \C_k, v_j\in \V\setminus\C_k}{\SM_{i,j}\cdot \frac{1}{|\C_k|\cdot s_i}}.
 \end{align*}
 If we assume that $s_i=s_j\ \forall{v_i,v_j\in \V}$, we can derive that the minimization of $\mathcal{D}(\YM,\SM)$ is equivalent to minimizing $$\sum_{k=1}^K\sum_{v_i\in \C_k, v_j\in \V\setminus\C_k}{\frac{\SM_{i,j}}{|\C_k|}}.$$
\end{proof}

\begin{proof}[\bf Proof of Lemma~\ref{eq:eigval-bound}]
 Consider a vector $\mathbf{x} \in \mathbb{R}^n$ such that $\mathbf{x}_i \neq 0$ for all $i \in \{1, 2, \dots, n\}$. By the Courant-Fischer Theorem, we have:
\begin{align*}
    \lambda (\NAM^{(r)}) = \frac{\mathbf{x}^\top \NAM^{(r)} \mathbf{x}}{\mathbf{x}^\top \mathbf{x}}.
\end{align*}
Let $\mathbf{y} = {\DM^{(r)}}^{-\frac{1}{2}} \mathbf{x}$. Substituting this into the above expression, we obtain: 
\begin{align*}
    \lambda (\NAM^{(r)}) = \frac{\mathbf{y}^\top {\AM^{(r)}} \mathbf{y}}{\mathbf{y}^\top {\DM^{(r)}} \mathbf{y}}.
\end{align*}
For any vector $y$, applying the Cauchy-Schwarz inequality yields: 
\begin{align*}
    \mathbf{y}^\top {\AM^{(r)}} \mathbf{y} &= \sum_{i,j} \AM^{(r)}_{ij} \mathbf{y}_i \mathbf{y}_j \leq \frac{1}{2} \sum_{i,j} \AM^{(r)}_{ij} \left( \mathbf{y}_i^2 + \mathbf{y}_j^2 \right) \\
    & = \sum_i d_i \mathbf{y}_i^2 = \mathbf{y}^\top \DM^{(r)} \mathbf{y}.
\end{align*}
From this, we conclude that $\lambda(\NAM^{(r)}) \leq 1$.

Next, observe that:
\begin{align*}
    \NAM = \sum_{r=1}^{R} \omega_r \NAM^{(r)} \quad \Rightarrow \quad \lambda(\NAM) = \lambda\left(\sum_{r=1}^{R} \omega_r \NAM^{(r)}\right).
\end{align*}
Since each $\NAM^{(r)}$ is a symmetric normalized positive definite matrix, it follows that $\NAM^{(r)} = {\NAM^{(r)}}^\top$ and $x^\top \NAM^{(r)} x \geq 0$ for any $x$. Thus, $\NAM^{(r)}$ is Hermitian. As $\NAM$ is a weighted sum of Hermitian matrices, it is also Hermitian. By Lemma \ref{le:lidin}, we have:
\begin{align*}
    \lambda\left(\sum_{r=1}^{R} \omega_r \NAM^{(r)}\right) \leq \sum_{r=1}^{R} \omega_r \lambda(\NAM^{(r)}) \leq \sum_{r=1}^{R} \omega_r = 1.
\end{align*}
This completes the proof.
\end{proof}
\begin{proof}[\bf Proof of Lemma~\ref{lem:opt-H}]
By setting its derivative w.r.t. $\HM$ to zero and , we obtain the optimal $\HM$ as:
\begin{align}
& \frac{\partial{\{\alpha\cdot\textsf{trace}(\HM^{\top}(\IM-\NAM)\HM)+\|\HM - \XM\|^2_F \}}}{\partial{\HM}}=0 \notag\\
& \Longrightarrow \alpha\cdot(\IM-\NAM)\HM +  (\HM - \XM) = 0 \notag\\
& \Longrightarrow ((1+\alpha)\IM - \alpha\cdot\NAM) \cdot \HM = \XM \notag\\
& \Longrightarrow (\IM - \frac{\alpha}{1+\alpha}\cdot \NAM) \cdot \HM = \frac{1}{1+\alpha}\XM \notag\\
& \Longrightarrow \HM = \frac{1}{1+\alpha}\cdot \left(\IM-\frac{\alpha}{1+\alpha}\NAM\right)^{-1} \XM. \label{eq:Z-derivative},
\end{align}
which seals the proof.
\end{proof}

\begin{proof}[\bf Proof of Eq~\eqref{eq:update-w}]

Assume $\HM$ is fixed during the adjustment of $\omega_r$. Let
$$
c_r = \beta \cdot\|\NAM^{(r)}\|_F^2 + \alpha \cdot \textsf{trace}\left(\HM^{\top}(\IM-\NAM^{(r)})\HM\right) \geq 0,
$$
which simplifies the objective function to $\sum_{r=1}^{R}\omega_r c_r$.

Applying the Cauchy-Schwarz inequality:
$$
\left( \sum_{r=1}^{R}\omega_r c_r \right) \left( \sum_{r=1}^{R}\frac{1}{c_r} \right) \geq \left( \sum_{r=1}^{R}\sqrt{\omega_r c_r} \cdot \frac{1}{\sqrt{c_r}} \right)^2 = \left( \sum_{r=1}^{R}\sqrt{\omega_r} \right)^2 \geq 1.
$$
Equality holds if and only if $\sqrt{\omega_r c_r} \propto \frac{1}{\sqrt{c_r}}$, i.e., $\omega_r = p \cdot c_r^{-2}$ for some constant $p$. 
With the constraint $\sum_{r=1}^{R}\omega_r = 1$ 
, we can easily get $p$:
$$
p = \frac{1}{\sum_{r=1}^{R}c_r^{-2}}.
$$
Substituting $p$ into $\omega_r = p \cdot c_r^{-2}$
we can get $\omega_r=\frac{c_r^{-2}}{\sum_{i=1}^{R}c_i^{-2}}$, which completes the proof.
\end{proof}

\begin{proof}[\bf Proof of Lemma~\ref{lem:obj-clust}]
Let $s_i=\|\SM_i\|_1$. We can expand $\mathcal{D}(\YM, \SM)$ as follows:
\begin{align*}
\mathcal{D}(\YM, \SM) &= \frac{1}{2}\sum_{v_i,v_j\in \V}{\SM_{i,j}\left\|{\YM_i}/{\sqrt{s_i}}-{\YM_j}/{\sqrt{s_j}}\right\|^2_2}\\
&= \sum_{k=1}^{K}\frac{1}{2}\sum_{v_i,v_j\in \V}{\SM_{i,j}\cdot \left(\frac{\YM_{i,k}}{\sqrt{s_i}}-\frac{\YM_{j,k}}{\sqrt{s_j}}\right)^2}\\
&= \sum_{k=1}^{K}{\YM_{\cdot,k}^\top (\IM-\SM)\YM_{\cdot,k}}\\
& = \texttt{trace}(\YM^{\top}(\IM-\SM)\YM) = \texttt{trace}(\YM^{\top}\YM)-\texttt{trace}(\YM^{\top}\SM\YM).
\end{align*}
By the definition of $\YM$ in Eq.~\eqref{eq:NCI}, $\YM^{\top}\YM=\IM$, which is a constant. Thus, the minimization of \(\mathcal{D}(\YM, \SM)\) is equivalent to the maximization of \(\texttt{trace}(\YM^{\top}\SM\YM)\).
\end{proof}

\eat{
\begin{proof}[\bf Proof of Corollary~\ref{lem:opt-H-new}]
\begin{lemma}[\cite{horn2012matrix}]\label{col:neu}
Let $\MM$ be a matrix whose dominant eigenvalue $\lambda$ satisfies $|\lambda|<1$. Then, $\IM-\MM$ is invertible, and its inverse $(\IM-\MM)^{-1}$ can be expanded as a Neumann series: $(\IM-\MM)^{-1}=\sum_{\ell=0}^\infty\MM^\ell$.
\end{lemma}

By the property of Neumann series, we have 
$\left(\IM-\frac{\alpha}{1+\alpha}\NAM\right)^{-1} = \sum_{k=0}^{\infty}{\left(\frac{\alpha}{1+\alpha}\right)^k \NAM^{k}}$. Plugging it into Eq.~\eqref{eq:Z-derivative} completes the proof.
\end{proof}
}

\begin{proof}[\bf Proof of Lemma~\ref{lem:trace-Fnorm}]
According to the definition of the oriental incidence matrix, we have $\DM^{(r)}-\AM^{(r)}=\EM^{(r)}{\EM^{(r)}}^\top$. Hence,
\begin{small}
\begin{align*}
\texttt{trace}\left(\HM^{\top}(\IM-\NAM^{(r)})\HM\right) & = \texttt{trace}\left(\HM^{\top}\DM^{{(r)}-\frac{1}{2}}(\DM-\AM^{(r)})\DM^{{(r)}-\frac{1}{2}}\HM\right) \\
& = \texttt{trace}\left(\HM^{\top}\DM^{{(r)}-\frac{1}{2}}\EM^{(r)}{\EM^{(r)}}^\top\DM^{{(r)}-\frac{1}{2}}\HM\right)\\
& = \texttt{trace}\left(\HM^{\top}\hat{\EM}^{(r)}{\hat{\EM}^{(r) \top}}\HM\right) = \|\HM^{\top}\hat{\EM}^{(r)}\|^2_F,
\end{align*}
\end{small}
which completes the proof.
\end{proof}

\begin{proof}[\bf Proof of Theorem~\ref{lem:H-Hprime}]
According to Lines 5-8, we have
\begin{align*}
\HM =& \widehat{\XM}^{(0)} + \frac{\alpha}{1+\alpha}\cdot \NAM \widehat{\XM}^{(0)} + \left(\frac{\alpha}{1+\alpha}\right)^2\cdot \NAM^2 \widehat{\XM}^{(0)}+\ldots\\
& + \left(\frac{\alpha}{1+\alpha}\right)^L\cdot \NAM^L \widehat{\XM}^{(0)} + \alpha\cdot \left(\frac{\alpha}{1+\alpha}\right)^L\cdot \NAM^L \widehat{\XM}^{(0)}\\
=& \frac{1}{1+\alpha}\sum_{\ell=0}^{L}{\left(\frac{\alpha}{1+\alpha}\right)}^\ell\NAM^\ell\XM + \left(\frac{\alpha}{1+\alpha}\right)^{L+1}\NAM^L\XM,
\end{align*}
which is exactly Eq.~\eqref{eq:Hprime}.
By the definition of $\HM^{\ast}$ in Eq.~\eqref{eq:comp-H-new} and the Frobenius norm and operator norm inequality,
\begin{align*}
\|\HM-\HM^{\ast}\|_F & = \left\|\frac{1}{1+\alpha}\sum_{\ell=L+1}^{\infty}{\left(\frac{\alpha}{1+\alpha}\right)}^\ell\cdot(\NAM^\ell-\NAM^L)\cdot\XM\right\|_F \\
& \le \frac{1}{1+\alpha}\sum_{\ell=L+1}^{\infty}{\left(\frac{\alpha}{1+\alpha}\right)}^\ell\cdot\left\|(\NAM^\ell-\NAM^L)\cdot\XM\right\|_F \\
& \le \frac{1}{1+\alpha}\sum_{\ell=L+1}^{\infty}{\left(\frac{\alpha}{1+\alpha}\right)}^\ell\cdot\left\|\NAM^{\ell}-\NAM^L\right\|_2\cdot\|\XM\|_F \\
& \le \frac{1}{1+\alpha}\sum_{\ell=1}^{\infty}{\left(\frac{\alpha}{1+\alpha}\right)}^{L+\ell}\cdot\left\|\NAM^{L+\ell}-\NAM^L\right\|_2\cdot\|\XM\|_F.
\end{align*}

By Lemma~\ref{lem:OME},
\begin{align*}
\|\HM-\HM^{\ast}\|_F & \le \frac{1}{1+\alpha}\sum_{\ell=1}^{\infty}{\left(\frac{\alpha}{1+\alpha}\right)}^{L+\ell}\cdot \mu_{L,L+\ell} \cdot\|\XM\|_F \\
& = \frac{1}{1+\alpha}\cdot \left(\frac{\alpha}{1+\alpha}\right)^L \cdot \sum_{\ell=1}^{\infty}{\left(\frac{\alpha}{1+\alpha}\right)}^{\ell} \cdot\|\XM\|_F \cdot \max_{\ell \ge 1}{\mu_{L,L+\ell}}\\
& = \left(\frac{\alpha}{1+\alpha}\right)^{L+1}\cdot \|\XM\|_F\cdot \max_{\ell \ge 1}{\mu_{L,L+\ell}}.
\end{align*}
This completes the proof.
\end{proof}

\eat{
\begin{proof}[\bf Proof of Corollary~\ref{lem:sketch-approximation}]
\renchi{to-do}
It's easy to get that:
\begin{align*}
    \left\| \HM^{\top} \hat{\EM}^{(r)} \right\|^2_F 
    &= \texttt{\textnormal{trace}}\left( \HM^{\top} \hat{\EM}^{(r)} (\hat{\EM}^{(r)})^\top \HM \right) \label{eq:trace_original}
\end{align*}

Let $\BM = \HM^{\top} \hat{\EM}^{(r)} \in \mathbb{R}^{d \times m}$. By Theorem 4.1 in \cite{}, for any $i,j \in \V$:
\begin{equation*}\label{eq:sketch-prob}
    \mathbb{P}\left( \left| (\BM_{i,:} \PM^\top)(\PM \BM_{:,j}) - \BM_{i,:} \BM_{:,j} \right| < \epsilon \right) \geq 1 - \delta
\end{equation*}
Where $\epsilon$ and $\delta$ are infinitesimal values. 

This implies the inner product preservation:
\begin{equation*}
    \BM \BM^{\top} \approx \BM \PM^{\top}\PM\BM^{\top}= \HM^{\top} \tilde{\EM}^{(r)} (\tilde{\EM}^{(r)})^\top \HM
\end{equation*} 

Substituting into the trace operation:
\begin{align}
    \texttt{\textnormal{trace}}\left( \HM^{\top} \hat{\EM}^{(r)} (\hat{\EM}^{(r)})^\top \HM \right) 
    &\approx \texttt{\textnormal{trace}}\left( \HM^{\top} \tilde{\EM}^{(r)} (\tilde{\EM}^{(r)})^\top \HM \right) \\
    &= \left\| \HM^{\top} \tilde{\EM}^{(r)} \right\|^2_F
\end{align}
\renchi{see Theorem 4.1 in \url{https://arxiv.org/pdf/2406.05482}}
\end{proof}
}

\begin{proof}[\bf Proof of Theorem~\ref{lem:clust-trans}]
Let $\mathcal{J}={\sum_{k=1}^K\sum_{v_i\in \C_k}{\|\ZM_i-\mathbf{c}^{(k)}\|}^2}$, and we can compute $\mathcal{J}$ as follows:
\begin{align*}
\mathcal{J} &= \sum_{k=1}^K\sum_{v_i\in \C_k}\left(\ZM_i\ZM_i^\top - 2\ZM_i\mathbf{c}^{(k)\top}+ \mathbf{c}^{(k)}\mathbf{c}^{(k)\top}\right) \\
&= \sum_{i=1}^{|\V|}\ZM_i\ZM_i^\top - 2\sum_{k=1}^K|\C_k|\mathbf{c}^{(k)\top}\mathbf{c}^{(k)} + \sum_{k=1}^K|\C_k|\mathbf{c}^{(k)}\mathbf{c}^{(k)\top} \\
&= \sum_{i=1}^{|\V|}\ZM_i\ZM_i^\top - \sum_{k=1}^K|\C_k|\mathbf{c}^{(k)}\mathbf{c}^{(k)\top}.
\end{align*}

Since we have $\mathbf{c}^{(k)}=\frac{1}{|\C_k|}\sum_{v_j\in \C_k}{\ZM_j}$:
\begin{align*}
|\C_k|^2\mathbf{c}^{(k)}\mathbf{c}^{(k)\top} &= \left(\sum_{v_i\in \C_k}\ZM_i\right)\left(\sum_{v_j\in \C_k}\ZM_j^\top\right)
&= \sum_{v_i,v_j\in \C_k}\ZM_i\ZM_j^\top.
\end{align*}
This allows us to rewrite $\mathcal{J}$ as:
\begin{align*}
\mathcal{J} &= \sum_{i=1}^{|\V|}\ZM_i \ZM_i^\top - \sum_{k=1}^K\frac{1}{|\C_k|}\sum_{v_i,v_j\in \C_k}\ZM_i\ZM_j^\top 
\end{align*}
Since $\SM=\ZM^{\top}\ZM$ , we can get that:
$$
\sum_{k=1}^K\frac{1}{|\C_k|}\sum_{v_i,v_j\in \C_k} \SM_{i,j}=\texttt{trace}(\YM^\top \SM \YM)
$$
So we can compute $\mathcal{J}$ by $\SM$ using following function:
$$
\mathcal{J}=\texttt{trace}(\SM) - \texttt{trace}(\YM^\top\SM\YM)
$$
where is a NCI $\YM$ satisfying $\YM\YM^{\top}=\IM$ and $\YM\YM^{\top} \mathbb{1}=\mathbb{1}$.
Thus, we establish the equivalence:
\begin{align*}
\min_{\C_1,\ldots,\C_K}\mathcal{J} \Leftrightarrow \max_{\YM}\texttt{trace}(\YM^\top\SM\YM),
\end{align*}

 By Lemma~\ref{lem:obj-clust}, this confirms the equivalence between optimizing Eq.~\eqref{eq:obj-clust} and  ${\sum_{k=1}^K\sum_{v_i\in \C_k}{\|\ZM_i-\mathbf{c}^{(k)}\|}}$.
\end{proof}

\begin{proof}[\bf Proof of Theorem~\ref{lem:doubly-sto}]
\eat{
    Suppose $\SM=\hat{\ZM}\cdot\hat{\ZM}^T$,when $\hat{\ZM}$ is Doubly Stochastic matrix \renchi{why upper case}, we can know that $\sum_{i=1}^{n}\hat{\ZM}_{ij}=1$ and $\sum_{j=1}^{d}\hat{\ZM}_{ij}=1$ \renchi{$\hat{\ZM}_{ij}$ -> $\hat{\ZM}_{i,j}$. Be careful with notations and be rigorous.}
    The i-th row of $\SM$ can be calculated by: \renchi{i-th row -> $i$-th row. what is $\SM_{i\_}$? a row is 1?}
    \begin{equation}
    \SM_{i\_} = \sum_{j=1}^{n} \sum_{k=1}^{d} \hat{\ZM}_{ik} \hat{\ZM}_{jk}
    = \sum_{k=1}^{d} \hat{\ZM}_{ik} \sum_{j=1}^{n} \hat{\ZM}_{jk}
    = 1
\end{equation}
    As $\SM$ is a symmetric matrix, it is not difficult to deduce that the sum of its column vectors is also 1.
    To sum up, if $\hat{\ZM}$ is Doubly Stochastic matrix, then $\hat{\ZM}\cdot\hat{\ZM}^T$ is also Doubly Stochastic matrix.
}
Denote by $\overleftarrow{\mathbf{v}}^{(\ell)}$ (resp. $\overrightarrow{\mathbf{v}}^{(\ell)}$) the row (resp. column) sum vector $\mathbf{v}$ at Line $4$ (resp. Line 6) in the $\ell$-th iteration. Suppose that \clustalgo{} terminates the iterative process in the $T$-th iteration. At the end of the $T$-th iteration, we have
\begin{equation*}
\overleftarrow{\ZM} = \prod_{\ell=1}^T{\texttt{diag}(\overleftarrow{\mathbf{v}}^{(\ell)})^{-1}}\cdot \ZM^\circ\ \text{and}
\end{equation*}
\begin{equation*}
\overrightarrow{\ZM} = \prod_{\ell=1}^T{\texttt{diag}(\overrightarrow{\mathbf{v}}^{(\ell)})^{-1}}\cdot \ZM^\circ,
\end{equation*}
leading to
\begin{equation*}
\overleftarrow{\ZM}\overrightarrow{\ZM}^\top = \prod_{\ell=1}^T{\texttt{diag}(\overleftarrow{\mathbf{v}}^{(\ell)})^{-1}}\cdot \left(\ZM^\circ{\ZM^{\circ}}^\top\right) \cdot \prod_{\ell=1}^T{\texttt{diag}(\overrightarrow{\mathbf{v}}^{(\ell)})^{-1}}.
\end{equation*}
This result is equivalent to the {\em Iterative Proportional Fitting Procedure} in the Sinkhorn-Knopp algorithm, and using the Birkhoff-von Neumann theorem, we can conclude that $\overleftarrow{\ZM}\overrightarrow{\ZM}^\top$ is doubly stochastic~\cite{knight2008sinkhorn}.

Since $\ZM^\circ{\ZM^{\circ}}$ is a non-negative square matrix, according to Sinkhorn's theorem~\cite{sinkhorn1967concerning}, $\prod_{\ell=1}^T{\texttt{diag}(\overleftarrow{\mathbf{v}}^{(\ell)})^{-1}}$ and $\prod_{\ell=1}^T{\texttt{diag}(\overrightarrow{\mathbf{v}}^{(\ell)})^{-1}}$ are unique modulo multiplying the first matrix by a positive number and dividing the second one by the same number. By the symmetry of $\ZM^\circ{\ZM^{\circ}}$ and $\overleftarrow{\ZM}\overrightarrow{\ZM}^\top$,
\begin{equation*}
\overleftarrow{\ZM}\overrightarrow{\ZM}^\top = \prod_{\ell=1}^T{\texttt{diag}(\overrightarrow{\mathbf{v}}^{(\ell)})^{-1}} \cdot \left(\ZM^\circ{\ZM^{\circ}}^\top\right) \cdot \prod_{\ell=1}^T{\texttt{diag}(\overleftarrow{\mathbf{v}}^{(\ell)})^{-1}},
\end{equation*}
and the uniqueness of the two scaling matrices, we can conclude that
\begin{equation*}
 \prod_{\ell=1}^T{\texttt{diag}(\overleftarrow{\mathbf{v}}^{(\ell)})^{-1}} = \prod_{\ell=1}^T{\texttt{diag}(\overrightarrow{\mathbf{v}}^{(\ell)})^{-1}},
\end{equation*}
The theorem is then proved.
\end{proof}

\begin{proof}[\bf Proof of Lemma~\ref{lem:OME}]
By the definition of $\|\NAM^{L+\ell}-\NAM^L\|_2$, $\|\NAM^{L+\ell}-\NAM^L\|_2=\sigma_{\max}\left(\NAM^{L+\ell}-\NAM^L\right)$, i.e., the maximum singular value of $\NAM^{L+\ell}-\NAM^L$.

Further, let $\VM\texttt{diag}(\boldsymbol{\lambda})\VM^\top$ be the full eigendecomposition of $\NAM$, wherein eigenvalue  $\boldsymbol{\lambda}_{i}=\lambda_i(\NAM) \ \forall{1\le i\le N}$. Using the semi-unitary property of $\VM$, i.e., $\VM^\top\VM=\IM$, we have $\NAM^{L+\ell}=\VM\texttt{diag}(\boldsymbol{\Lambda})^{\ell+L}\VM^{\top}$ and $\NAM^{L}=\VM\texttt{diag}(\boldsymbol{\Lambda})^{L}\VM^{\top}$. This leads to $\NAM^{L+\ell}-\NAM^L = \VM(\texttt{diag}(\boldsymbol{\Lambda})^{\ell+L}-\texttt{diag}(\boldsymbol{\Lambda})^{L})\VM^{\top}$. 
\begin{align*}
\sigma_{\max}\left(\NAM^{L+\ell}-\NAM^L\right) & = \max_{1\le i\le N}{|\boldsymbol{\lambda}_{i}^{L+\ell}-\boldsymbol{\lambda}_{i}^{L}|},
\end{align*}
which finishes the proof.
\end{proof}
\section{Additional Algorithmic Details}\label{sec:algo-detail}

\eat{
\begin{algorithm}[!t]
\caption{\texttt{SNEM-Rounding}~\cite{yang2024efficient}}
\label{alg:round}
\KwIn{The first $K$ eigenvectors $\UM$}
\KwOut{A set $\{\C_1,\C_2,\ldots,\C_K\}$ of $K$ clusters.}
$\YM \gets \UM$\;
\Repeat{$\YM$ converges}{
    $\RM \gets \UM^{\top}\YM,\ \YM \gets \mathbf{0}$\;
    \For{$v_i\in \V$}{
        $j^{*}= \arg\max_{1\le j\le K}{(\UM\RM)_{i,j}}$\;
        $\YM_{i,j^{*}}\gets 1$\;
    }
    Normalize $\YM$ such that $\YM\in \NC$\;
}
Convert $\YM$ into $K$ clusters $\{\C_1,\C_2,\ldots,\C_K\}$\;
\end{algorithm}
}

\subsection{The \texttt{CountSketch} Algorithm}
Algorithm~\ref{alg:count-sketch} displays the pseudo-code of \texttt{CountSketch} Algorithm, at the beginning, it need to generate the oriented incidence matrix $\EM^{(r)}\in \mathbb{R}^{N \times 2M^{(r)}}$ for $\NAM^{(r)}$ (Line 1), and then, in Line 2 we normalized $\EM^{(r)}$ so that we can get $\hat{\EM}^{(r)}$ which can eatimate\\ $\texttt{trace}\left(\HM^{\top}(\IM-\NAM^{(r)})\HM\right)$, and then we can get count-sketch matrix by following equation:
\begin{equation}\label{eq:countsketch}
\tilde{\EM}^{(r)}[k, j] = \sum_{\substack{i=1 \\ h_k(i) = j}}^{n} s_k(i) \cdot \hat{\EM}^{(r)}[i,:] 
\end{equation}
Where $h_k=\{1, 2, \ldots, n\} \to \{1, 2, \ldots, t\}$ is the random hash function, and $s_k= \{1, 2, \ldots, n\} \to \{\pm 1\}$ is the $k$-th Rademacher sign function.

\begin{algorithm}[!t]
\caption{\texttt{CountSketch} Algorithm}\label{alg:count-sketch}
\KwIn{Normalized oriented incidence matrix $\hat{\EM} \in \{0,1\}^{n \times M}$, Target dimension $k$}
\KwOut{Sketch matrix $\tilde{\EM} \in \mathbb{R}^{n \times m}$}

Initialize hash function $h: \{1,\dots,n\} \to \{1,\dots,k\}$ with uniform randomness\;
Initialize diagonal sign matrix $\Delta \in \{-1,+1\}^{M \times M}$ with $\Delta_{i,i} \sim \text{Rademacher}$\;
Construct sparse bucket matrix $\Phi \in \{0,1\}^{m \times M}$ where $\Phi_{j,i} = \mathbf{1}_{[h(i)=j]}$\;
Compute combined projection matrix $\RM \gets \Phi\Delta$\;
$\tilde{\EM} \gets \hat{\EM}\RM^{\top}$\;

\end{algorithm}
\subsection{The \texttt{ORF} Algorithm}
Here, we describe the details of Orthogonal Random Features (\texttt{ORF}) algorithm. First, we generate a Gaussian random matrix $\mathbf{W} \in \mathbb{R}^{N \times d}$ (Line 1), followed by performing a QR decomposition of it to obtain the orthogonal matrix $\mathbf{Q}$ (Line 2). Finally, we use the following formula to derive $\mathbf{Z}^{\circ}$:
\begin{equation}\label{eq:R-prime}
{\ZM}^{\circ} = \sqrt{\frac{2}{d}}\cdot \left(sin(\ZM) \mathbin\Vert cos(\ZM)\right)\in \mathbb{R}^{N \times 2d},
\end{equation}
Where $\mathbin\Vert$ represent horizontal
concatenation operator for matrices. 
\begin{algorithm}[!t]
\caption{\texttt{ORF}}\label{alg:orf}
\KwIn{Node feature vectors $\HM$, Feature dimension $d$}
\KwOut{${\ZM}^{\circ}$}
Sample a Gaussian random matrix $\WM\in \mathbb{R}^{d \times d}$\;
Compute $\QM$ by a QR decomposition over $\WM$\;
$\ZM\leftarrow \sqrt{d}\cdot\HM \QM^{\top}$\;
Compute ${\ZM}^{\circ}$ according to Eq. \eqref{alg:orf}\;
\end{algorithm}

\begin{figure}[H]
\centering
\includegraphics[width=0.8\columnwidth]{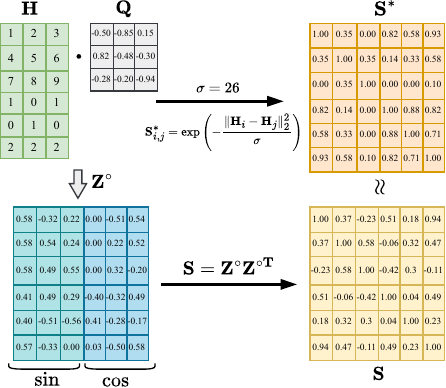}
\vspace{-3ex}
\caption{{Running example for \texttt{ORF}.}}
\label{fig:rff}
\end{figure}

\subsection{Illustrative Example for \texttt{ORF}}\label{sec:example-ORF}

\textcolor{black}{
In Fig~\ref{fig:rff}, the feature matrix $\HM \in \mathbb{R}^{6\times 3}$ is first multiplied by an orthogonal random matrix $\QM$, after that, the first row of $\HM$ becomes $[-0.2,-1.36,-3.27]$. Then, the mapping functions $\texttt{sin}$ and 
$\texttt{cos}$ are applied to this feature matrix, to be more precise, the first row of the multiplied feature matrix becomes $[0.58,-0.32,0.22]$ and $[0.0,-0.51,0.54]$ after computing by $\texttt{sin}$ and 
$\texttt{cos}$.
Then we horizontally connect the mapped features to obtain ${\ZM}^{\circ}$ . The matrix $\SM$
 obtained by ${\ZM}^{\circ}{\ZM}^{\circ \top}$ is closely resembles to the matrix $\SM^{\ast}$ given by Eq. \eqref{eq:edge-weight}. We can observe that in the first row of $\SM$, the largest element except for $\SM_{1,1}$ is $\SM_{1,6}=0.93$, and the smallest element is $\SM_{1,3}=0.0$. Similarly, in the first row of $\SM^{\ast}$, the largest element except for $\SM_{1,1}^{\ast}$ is $\SM_{1,6}^{\ast}=0.94$, and the smallest element is $\SM_{1,3}^{\ast}=-0.23$, that is to say, the overall distributions of the two matrices are similar. Nevertheless, the error between the two matrices is still relatively large, which is mainly because the dimension of $\HM$ ($d=3$) in the example is too small to well approximate the infinite-dimensional kernel function.  
}

\section{Additional Experimental Settings and Results}\label{sec:add-exp}

\begin{table}[!t]
\small
\centering
\caption{Parameter setting in \algoplus{}}
\label{tab:paramsdemm+}
\vspace{-3ex}
\setlength{\tabcolsep}{1.5pt}  
\resizebox{\columnwidth}{!}{%
\begin{tabular}{@{}c*{9}{c}@{}}
\toprule
 & \multicolumn{9}{c}{Datasets} \\
\cmidrule(l{1pt}r{1pt}){2-10}
Parameter & {\em ACM} & {\em DBLP} & {\em ACM2} &{\em YELP}  & {\em IMDB} & {\em MAG} & {\em OAG-CS} & {\em OAG-ENG} &{\em RCDD}  \\
\midrule
$\alpha$   &  4 & 28 &  4 & 32 &  7 & 50 & 110 & 120 &  4 \\
$\beta$    & 2.5 & 40 & 4.2 & 3 & 6 & 30 & 90 & 120 & 1.5 \\
$L$        &  5 &  6 &  3 & 16 & 13 & 14 &  12 &  16 &  4 \\
$d$        & 128 & 64 & 512 & 32 & 1024 & 32 & 128 & 128 & 128 \\
$m$        & $(10,14)$ & $(10,8,10)$ &$10$  &$(14,12,16)$  &$16$  &$12$  &$36$  &$40$  &$40$  \\
\bottomrule
\end{tabular}
}
\end{table}

\begin{table}[!t]
\small
\centering
\caption{Parameter setting in \algoal{}}
\label{tab:paramsdemmal}
\vspace{-3ex}
\setlength{\tabcolsep}{3.5pt}
\resizebox{\columnwidth}{!}{%
\begin{tabular}{@{}c*{9}{c}@{}}
\toprule
 & \multicolumn{9}{c}{Datasets} \\
\cmidrule(l{3pt}r{3pt}){2-10}
Parameter & {\em ACM} & {\em DBLP} & {\em ACM2} & {\em YELP} & {\em IMDB} & {\em MAG} & {\em OAG-CS} & {\em OAG-ENG} & {\em RCDD} \\
\midrule
$d$      &  6 &  4 &  4 &  3 & 80 & 30 & 68 & 62 &  8 \\
$\beta$  &  2 & 25 &  2 & 24 & 10 & 50 & 280 & 340 &  4 \\
$m$      & $10$ & $10$ & $10$ & $10$  & $16$ &$5$& $36$ &$36$ & $40$ \\
\bottomrule
\end{tabular}
}
\end{table}

\begin{table}[!t]
\small
\centering
\caption{Parameter setting in \algo{}}
\label{tab:paramsdemm}
\vspace{-3ex}
\setlength{\tabcolsep}{3.5pt}
\begin{tabular}{@{}c*{6}{c}@{}}
\toprule
 & \multicolumn{6}{c}{Datasets} \\
\cmidrule(l{3pt}r{3pt}){2-7}
Parameter & {\em ACM} & {\em DBLP} & {\em ACM2} & {\em YELP} & {\em IMDB} & {\em MAG}  \\
\midrule
$\alpha$ & 2 & 1900 & 1.5 & 26 & 6 & 50  \\
$\beta$  & 2 & 4200 & 2 & 50 & 8 & 6 \\
\bottomrule
\end{tabular}
\end{table}

\subsection{Datasets} We describe the details of each dataset used in the experiments in what follows:
\begin{itemize}
    \item {\em ACM}~\cite{Fan2020One2MultiGA} contains a paper collaboration network of $3{,}025$ publications with two relational edges: paper-subject connections (shared research subjects) and paper-author connections (shared authorship). Node features are bag-of-words representations of paper abstracts. Ground-truth labels classify publications into three research domains: database, wireless communication, and data mining.
    
    \item {\em DBLP}~\cite{ZhaoWSLY20} contains an academic collaboration network of $4{,}057$ papers with three relational edges: author-paper connections (co-authorship), paper-conference associations (shared venues), and paper-term linkages (shared technical terms). Node features are bag-of-words representations of paper abstracts. Ground-truth labels classify publications into four categories: database, data mining, machine learning, and information retrieval.
    
    \item {\em ACM2}~\cite{Fu2020MAGNNMA} contains an enhanced paper network of $4{,}019$ publications with two relational edges: paper-subject connections (subject-based) and paper-author interactions (author collaboration). Node features are bag-of-words representations of paper abstracts. Ground-truth labels classify publications into three academic domains: database, wireless communication, and data mining.
    
    \item {\em Yelp}~\cite{Shi2022RHINERS} contains a business interaction network of $2{,}614$ establishments with three relational edges: business-user interactions (shared customers), business-rating associations (common ratings), and business-service relationships (shared services). Node features are bag-of-words representations of rating descriptions. Ground-truth labels categorize businesses into three service types: Mexican flavor, hamburger, and food bar.
    
    \item {\em IMDB}~\cite{Wang2019HeterogeneousGA} contains a movie collaboration network of $3{,}550$ films with two relational edges: movie-actor connections (co-starring) and movie-director connections (shared directors). Node features are bag-of-words representations of movie plots. Ground-truth labels categorize films into three genres: Action, Comedy, and Drama.
\textcolor{black}{
\item {\em Amazon}~\cite{Shi2022RHINERS} comprises a product review network of $11{,}949$ users under the musical instrument category, with three types of relational edges: user‐product interactions (shared reviewed products), user‐star associations (identical star ratings within a week), and user‐review similarities (top 5\% review text similarity via TF‐IDF). Each user node is represented by a 25‐dimensional feature vector, encompassing attributes such as rating statistics, voting patterns, temporal activity, username length, and sentiment analysis of comments. The dataset provides a binary ground‐truth classiﬁcation for fraud detection.
\item {\em Protein}~\cite{GU2022106127} contains a protein interaction network of $18{,}877$ proteins, with three relational edge types: protein‐protein interactions (direct interactions), protein‐gene associations (shared genes), and protein‐disease associations (related diseases). Each protein node is represented by a $1{,}280$‐dimensional feature vector generated from its molecular sequence. Ground‐truth labels categorize proteins into six functional classes according to their biological roles.
}

    \item {\em MAG}~\cite{Hu2020OpenGB} contains a citation network of $113{,}919$ papers with two relational edges: paper-paper citations and paper-author connections (co-authorship). Node features are Word2Vec embeddings. Ground-truth labels classify publications into four research domains from the original dataset.
    
    \item {\em OAG-ENG} \& {\em OAG-CS}~\cite{Zhang2019OAGTL} contain academic citation networks with $370{,}623$ (engineering) and $546{,}704$ (computer science) papers respectively. Relational edges include citations, shared research fields, and shared authors. Node features are Word2Vec embeddings of paper keywords. Ground-truth labels preserve the 20 largest classes, with $77{,}768$ (OAG-ENG) and $50{,}247$ (OAG-CS) labeled nodes.
    
    \item {\em RCDD}~\cite{liu2023dink} contains an anonymized e-commerce network of $421{,}089{,}810$ items with relational connections (e.g., item-b-item). Node features are anonymized representations. Ground-truth labels provide a 9:1 imbalanced binary classification task with $122{,}487$ labeled nodes.
\end{itemize}

\input{tex/figs/appendix_NMI}

\input{tex/figs/appendix_ARI}

\subsection{Parameter Settings}
In this section, we introduce the parameters that we did not mention in the main text. Some parameters are fixed for each dataset since it did not make a big difference for the experiment results, e.g. the $\sigma$ in Eq.~\eqref{eq:edge-weight} is fixed as $1$ for all datasets, and the iteration rounds of \clustalgo{} is fixed as $2$ for all small datasets and $10$ for all large datasets. We perform exhaustive grid search over the parameter space of \algo{}, \algoplus{}, and \algoal{} to obtain optimal configurations, and analyze the influence of $\alpha$,$\beta$,$d$ and $L$ in Section~\ref{sec:para_set}, $m$ is the  dimension of $\tilde{\EM}^{(r)}$. 
In datasets with significant edge count disparity across relations (e.g., {\em ACM}), we set different $m$ for each relation.
All the parameters with the best performance are listed in Table~\ref{tab:paramsdemm+}, Table~\ref{tab:paramsdemmal} and Table~\ref{tab:paramsdemm}.
\subsection{Evaluation Metrics}
The specific mathematical definitions of {\em Clustering Accuracy} (ACC), {\em Normalized Mutual Information} (NMI), and {\em Adjusted Rand Index} (ARI) are as follows:
\begin{small}
\begin{equation*}
ACC = \frac{\sum_{u_i\in \V}{\mathbb{1}_{y_{u_i}=\texttt{map}(y^{\prime}_{u_i})}}}{|\V|},
\end{equation*}  
\end{small}
where $y^{\prime}_{u_i}$ and $y_{u_i}$ stand for the predicted and ground-truth cluster labels of node $u_i$, respectively, $\texttt{map}(y^{\prime}_{u_i})$ is the permutation function that maps each $y^{\prime}_{u_i}$ to the equivalent cluster label provided via Hungarian algorithm \cite{kuhn1955hungarian}, and the value of $\mathbb{1}_{y_{u_i}=\texttt{map}(y^{\prime}_{u_i})}$ is 1 if $y_{u_i}=\texttt{map}(y^{\prime}_{u_i})$ and 0 otherwise,
\begin{small}
\begin{equation*}
NMI = \frac{\sum_{i=1}^{k}\sum_{j=1}^{k}{|\C^{\ast}_i\cap \C_j|\cdot \log{\frac{|\C^{\ast}_i\cap \C_j|}{|\C^{\ast}_i|\cdot |\C_j|}}}}{\sqrt{\sum_{i=1}^k{|\C^{\ast}_i|\cdot \log{\frac{|{\C^{\ast}_i}|}{|\V|}}}}\cdot \sqrt{\sum_{i=1}^k{|\C_i|\cdot \log{\frac{|{\C_i}|}{|\V|}}}}},
\end{equation*}
and
\begin{equation*}
ARI=\frac{\sum_{i=1}^k\sum_{j=1}^k{\binom {|\C^{\ast}_i\cap \C_j|}2}-\left(\sum_{i=1}^k{\binom {|\C^{\ast}_i|}2}\cdot \sum_{j=1}^k{\binom {|\C_j|} 2}\right)/{\binom {|\V|}2}}{0.5\left(\sum_{i=1}^k{\binom {|\C^{\ast}_i|}2}+ \sum_{j=1}^k{\binom {|\C_j|}2} \right)- \left(\sum_{i=1}^k{\binom {|\C^{\ast}_i|}2}\cdot \sum_{j=1}^k{\binom {|\C_j|}2}\right)/{\binom {|\V|}2} },
\end{equation*}
\end{small}
where $\C^{\ast}_i$ and $\C_i$ represent the $i$-th ground-truth and predicted clusters for $\V$ in $\G$, respectively.

\begin{figure}[!t]
\centering
{\color{black}
\begin{small}
\begin{tikzpicture}
    \begin{customlegend}
    [legend columns=4,
        legend entries={{ACM}, {IMDB}, {MAG}, {OAG-ENG}},
        legend style={at={(0.45,1.35)},anchor=north,draw=none,font=\footnotesize,column sep=0.2cm}]
    \addlegendimage{line width=0.7mm,mark size=4pt,color=NSCcol1}
    \addlegendimage{line width=0.7mm,mark size=4pt,dashed,color=teal}
    \addlegendimage{line width=0.7mm,mark size=4pt,color=myred_new2}
    \addlegendimage{line width=0.7mm,mark size=4pt,dotted,color=NSCcol3}
    \end{customlegend}
\end{tikzpicture}
\\[-\lineskip]
\vspace{-1mm}

\subfloat[Varying $m$ in \algoplus{}]{
\begin{tikzpicture}[scale=1,every mark/.append style={mark size=3pt}]
    \begin{axis}[
        xmode=log,
        log basis x=2,
        height=\columnwidth/2.4,
        width=\columnwidth/1.8,
        xmin=9, xmax=330,
        ymin=0.39, ymax=0.95,
        xtick={10,20,40,80,160,320},
        xticklabels={10,20,40,80,160,320},
        ytick={0.4,0.5,0.6,0.7,0.8,0.9,0.94},
        yticklabels={0.4,0.5,0.6,0.7,0.8,0.9},
        xticklabel style = {font=\footnotesize},
        yticklabel style = {font=\footnotesize},
        legend style={fill=none,draw=none},
    ]
    \addplot[line width=0.7mm, smooth, color=NSCcol1]
        coordinates {(10, 0.936) (20, 0.935) (40, 0.935) (80, 0.934) (160, 0.935) (320, 0.932)};
    \addplot[line width=0.7mm, smooth, dashed, color=teal]
        coordinates {(10, 0.676) (20, 0.675) (40, 0.677) (80, 0.676) (160, 0.675) (320, 0.672)};
    \addplot[line width=0.7mm, smooth, color=myred_new2]
        coordinates {(10, 0.675) (20, 0.677) (40, 0.676) (80, 0.678) (160, 0.678) (320, 0.678)};
    \addplot[line width=0.7mm, smooth, dotted, color=NSCcol3]
        coordinates {(10, 0.398) (20, 0.421) (40, 0.423) (80, 0.422) (160, 0.422) (320, 0.424)};
    \end{axis}
\end{tikzpicture}
\label{fig:m-demm+}
}
\subfloat[Varying $m$ in \algo{}]{
\begin{tikzpicture}[scale=1,every mark/.append style={mark size=3pt}]
    \begin{axis}[
        xmode=log,
        log basis x=2,
        height=\columnwidth/2.4,
        width=\columnwidth/1.8,
        xmin=9, xmax=330,
        ymin=0.24, ymax=0.69,
        xtick={10,20,40,80,160,320},
        xticklabels={10,20,40,80,160,320},
        ytick={0.25,0.35,0.45,0.55,0.65,0.68},
        yticklabels={0.25,0.35,0.45,0.55,0.65},
        xticklabel style = {font=\footnotesize},
        yticklabel style = {font=\footnotesize},
        legend style={fill=none,draw=none},
    ]
    \addplot[line width=0.7mm, smooth, color=NSCcol1]
        coordinates {(10, 0.680) (20, 0.678) (40, 0.679) (80, 0.681) (160, 0.680) (320, 0.682)};
    \addplot[line width=0.7mm, smooth, dashed, color=teal]
        coordinates {(10, 0.388) (20, 0.382) (40, 0.386) (80, 0.385) (160, 0.389) (320, 0.387)};
    \addplot[line width=0.7mm, smooth, color=myred_new2]
        coordinates {(10, 0.634) (20, 0.632) (40, 0.637) (80, 0.634) (160, 0.633) (320, 0.635)};
    \addplot[line width=0.7mm, smooth, dotted, color=NSCcol3]
        coordinates {(10, 0.248) (20, 0.259) (40, 0.260) (80, 0.261) (160, 0.260) (320, 0.260)};
    \end{axis}
\end{tikzpicture}
\label{fig:m-demm}
}
\end{small}
\vspace{-3mm}
\caption{{\color{black}{Varying $m$ in \algo{} and \algoplus{}.}}}
\label{fig:m-parameter-both}
}
\end{figure}

\begin{figure}[!t]
\centering
{\color{black}
\begin{small}
\begin{tikzpicture}
    \begin{customlegend}
    [legend columns=4,
        legend entries={{ACM}, {IMDB}, {MAG}, {OAG-ENG}},
        legend style={at={(0.45,1.35)},anchor=north,draw=none,font=\footnotesize,column sep=0.2cm}]
    \addlegendimage{line width=0.7mm,mark size=4pt,color=NSCcol1}
    \addlegendimage{line width=0.7mm,mark size=4pt,dashed,color=teal}
    \addlegendimage{line width=0.7mm,mark size=4pt,color=myred_new2}
    \addlegendimage{line width=0.7mm,mark size=4pt,dotted,color=NSCcol3}
    \end{customlegend}
\end{tikzpicture}
\\[-\lineskip]
\begin{tikzpicture}[scale=1, every mark/.append style={mark size=3pt}]
    \begin{axis}[
        width=0.3\textwidth, 
        height=0.2\textwidth, 
        xmin=0, xmax=10,
        ymin=40, ymax=95,
        xtick={0.1,1,2,4,6,8,10},
        xticklabels={0.1,1,2,4,6,8,10},
        ytick={45,55,65,75,85,95},
        yticklabels={45,55,65,75,85,95},
        xticklabel style = {font=\footnotesize},
        yticklabel style = {font=\footnotesize},
    ]
    \addplot[line width=0.7mm, smooth, color=NSCcol1]
        coordinates {(0.1,93.1)  (1,93.2) (2,93.4) (4,93.4) (6,93.4) (8,93.4) (10,93.4)};
    \addplot[line width=0.7mm, smooth, dashed, color=teal]
        coordinates {(0.1,45.24)  (1,68.5) (2,64.3) (4,64.3) (6,64.3) (8,64.3) (10,64.3)};
    \end{axis}
\end{tikzpicture}
\label{fig:sigma-demm} 
\end{small}
\vspace{-3mm}
\caption{{\color{black}{Varying $\sigma$ in \algo{}.}}}
\label{fig:sigma-parameter}
}
\end{figure}
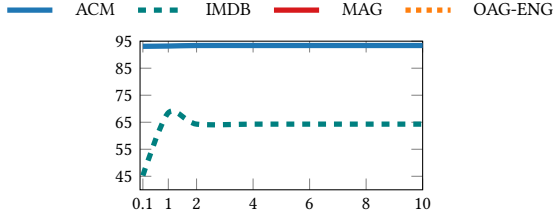

\subsection{Parameter Analysis}\label{sec:add-param-als}

We analyze the parameters for NMI and ARI, with results shown in Figures~\ref{fig:alpha-nmi}--\ref{fig:d-nmi} (NMI) and Figures~\ref{fig:alpha-ari}--\ref{fig:d-ari} (ARI).

The variation trends of NMI and ARI closely align with ACC across most datasets. In the majority of cases, these metrics attain their optimal values under consistent conditions, e.g., the ACC, NMI, and ARI metrics of {\em ACM} all achieve their maximum values at $L=5$. However, in rare cases, parameter configurations maximizing NMI/ARI differ slightly from those optimizing ACC, e.g., NMI and ARI of {\em MAG} peak at $L=16$, while ACC get the highest score when $L=14$. In such conflicting situations, we adopt ACC as the decisive criterion for performance evaluation.

\textcolor{black}{
We employ the \texttt{CountSketch} method to the approximate normalized
oriented incidence matrix $\hat{\EM}$ as $\tilde{\EM}$. According to Corollary~\ref{lem:sketch-approximation}, selecting an appropriate sketch size $m$ can effectively minimize the approximation error, we can minimize the approximation error. From Fig.~\ref{fig:m-parameter-both} and Fig.~\ref{fig:sigma-parameter}, for small and medium datasets {\em ACM}  ,{\em IMDB} and {\em MAG}, when $m$ is greater than $10$, the results keep invariant when increase $m$. For large dataset {\em OAG-ENG} with with abundant edges, the results keep unchanged when $m>40$. 
}

\textcolor{black}{
Due to the time and space complexity limitations of \algo{} ($\mathcal{O}(N^3)$ and $\mathcal{O}(N^2)$), we conduct $\sigma$ analysis only on two relatively small datasets {\em ACM}  and {\em IMDB}. Specifically, we find that the performance of {\em ACM} is almost unaffected by the changes of $\sigma$, while the performance of {\em IMDB} drops significantly when $\sigma$ is equal to $0.1$. This is mainly because {\em IMDB} has a higher $\HM$ dimension ($d=1024$). According to the {\em distance concentration}~\cite{Kriegel09}, for high-dimensional data, when $\sigma$ is too small, the off-diagonal elements of the affinity matrix  will be close to $0$, which causes the affinity matrix to become invalid. 
}
\input{tex/figs/add-exp-cpu-baselines}

\subsection{\textcolor{black}{Comparison with General-purpose Clustering Methods}}
\textcolor{black}{
    We fuse the MRGs into a single graph, and then use algorithms like \texttt{DeepWalk}, \texttt{Node2Vec}, and \texttt{PANE}~\cite{yang2020scaling} to generate node embeddings from graph structure, after that, we apply three clustering methods \texttt{DBSCAN}, \texttt{BIRCH}~\cite{zhangbirch96}, and \texttt{K-Means} on the embeddings to get the clustering results. According to Table~\ref{tbl:data-clustering}, we find that clustering methods like \texttt{DBSCAN}, which do not specify the number of clusters, tend to result in poor clustering performance. On datasets such as {\em ACM} and {\em DBLP}, the ACC and ARI scores of clustering with \texttt{DBSCAN} on embeddings generated by \texttt{DeepWalk} and \texttt{Node2Vec} are both $0$. Meanwhile, we can observe that clustering node embeddings with \texttt{K-Means} performs better than \texttt{DBSCAN} and \texttt{BIRCH}. Therefore, in \algoplus{}, we use \texttt{K-Means} to generate clusters. Additionally, \texttt{NMF} and \texttt{GMM}~\cite{DempGmms18} models are applied on node embeddings generated by FAAO algorithm with the same parameter settings as \algoplus{}.Experimental results indicate that \texttt{NMF}  generally outperforms \texttt{GMM} , as the latter tends to overfit when estimating Gaussian distribution parameters in high-dimensional spaces~\cite{Fraley02}.}

\subsection{\textcolor{black}{Computational Efficiency on CPUs}}

To demonstrate the computational advantage of \algoplus{} over deep learning methods, Figure~\ref{fig:cpu-time} compares their running times on CPUs across eight datasets of varying scales. 
Compared to running \algoplus{} on GPUs, running it on CPUs achieves more significant acceleration. Specifically, compared with the best baseline among the methods listed in Figure~\ref{fig:cpu-time}, \algoplus{} achieves speedups of $396\times$, $47\times$, $59\times$, $64\times$, and $52\times$ on small datasets {\em ACM}, {\em DBLP}, {\em ACM2}, {\em Yelp}, and {\em IMDB} using the CPUs. Compared to training on the GPUs, the average improvement rate of using the CPU on small datasets is $169.2\%$. For large datasets {\em MAG}, {\em Amazon} and {\em Protein}, a substantial improvement is also achieved: \algoplus{} achieves speedups of $645\times$, $23\times$, and $45\times$ compared to their respective best baseline. This is mainly because deep learning methods typically rely more heavily on the massively parallel computing architecture of GPUs, which means \algoplus{} can operate more efficiently even with limited computational resources.


{\color{black}
\section{Extension to Property Graphs}
Recall that a {\em property graph} is typically represented as a tuple $\G=(\V, \EDG, \ell, \pi)$, where $\V=\{v_1,v_2,\ldots,v_N\}$ denotes a set of $N$ nodes, $\EDG=\subset \V\times \V$ is a set of $M$ edges. $\ell: \V\cup\EDG \rightarrow 2^\LL$ is a labeling function that maps nodes and edges to finite sets of labels in $\LL$, and $\pi$ is a function that maps each node or edge to its respective properties (i.e., key-value pairs). Note that the properties of nodes and edges can be easily encoded as attribute vectors $\XM^{(\V)}$ and $\XM^{(\EDG)}$ with pre-trained language models, respectively, i.e., $\pi(v_i)=\XM^{(\V)}_i$ or $\pi((v_i,v_j))=\XM^{(\EDG)}_{(i,j)}$.
Suppose that there are $S$ (resp. $R$) distinct labels for nodes (resp. edges) in $\LL$.
If we regard these labels for nodes and edges as their types, the original property graph can be transformed into an augmented MRG where both nodes and edges are attributed and of various types, i.e., $\G=(\{\V^{(s)}\}_{s=1}^{S}, \{\EDG^{(r)}\}_{r=1}^R, \XM^{(\V)}, \XM^{(\EDG)})$, where $\V^{(s)}$ (resp. $\EDG^{(r)}$) is the set of nodes (resp. edges) with the $s$-th (resp. $r$-th) labels.

To extend our \algo{} and \algoplus{} to such graphs, 
we can first adapt the MRDE $\mathcal{L}_{\textnormal{MRDE}}$ in Eq.~\eqref{eq:MRDE} to the $S$ types of nodes with the $R$ edge sets $\{\EDG^{(r)}\}_{r=1}^R$ in $\G$ as follows:
\begin{equation}
\mathcal{L}_{\textnormal{MRDE}}= \sum_{s=1}^S\sum_{r=1}^R{\omega_{s,r}\cdot\mathcal{D}(\HM, \AM^{(r)}[\V^{(s)},\V^{(s)}])},
\end{equation}
where $\omega_{s,r}$ is the weight for node type $s$ and edge type $r$, and $\AM^{(r)}[\V^{(s)},\V^{(s)}]$ is the adjacency matrix constructed from edge set $\EDG^{(r)}$ and only contains nodes in $\V^{(s)}$.
Accordingly, the other two terms $\mathcal{L}_{\textnormal{fit}}$ and $\mathcal{L}_{\textnormal{reg}}$ in the Stage I objective in Eq.~\eqref{eq:obj-emb} can be adjusted as
\begin{small}
\begin{equation}
\mathcal{L}_{\textnormal{fit}} = \|\HM-\XM^{(\V)}\|_F^2, \quad \mathcal{L}_{\textnormal{reg}} = \sum_{s=1}^{S}\sum_{r=1}^{R}\omega_{s,r} \cdot \|\NAM^{(r)}[\V^{(s)},\V^{(s)}]\|_F^2.
\end{equation}
\end{small}
As for the attribute vectors of edges in $\XM^{(\EDG)}$, one simple way to incorporate such information into the objective function is to replace the above fitting term by the following term:
\begin{equation}
\mathcal{L}_{\textnormal{fit}} = \|\HM-\XM\|_F^2\ \text{and}\ \XM_{i} = \XM^{(\V)}_i + \frac{1}{R}\sum_{r=1}^R\sum_{(v_i,v_j)\in \EDG^{(r)}}{\frac{\XM^{(\EDG)}_{(i,j)}}{d^{(r)}_i}}.
\end{equation}
In doing so, \algo{} and \algoplus{} follow the same updating rules for $\HM$ and $\{\omega_{s,r}\}_{s=1,r=1}^{S,R}$ described in Sections~\ref{sec:DEMM} and \ref{sec:demm+}.


\eat{
A {\em property graph}~\cite{hou2019representation} (PG) is defined as a tuple $\G=(\V, \EDG,\PP,\LL)$, where $\V$ is
the set of nodes and $\EDG$ is the set of edges. $\PP$ is the property set (resp. $p$) combines edge properties $\PP_{\EDG}$ (resp. $p_e$) and node properties $\PP_{\V}$ (resp. $p_v$). $\LL$ is the set of labels for node labels $\LL_{\V}$ (resp. $l_v$) and edge labels $\LL_{\EDG}$ (resp. $l_e$), where $|\LL_{\V}|$ and $|\LL_{\EDG}|$ are the size of $\LL_{\V}$ and $\LL_{\EDG}$. It should be noted that when discussing a specific edge, we use the node tuple $(v,v^{'})$ as a substitute for $e$. Here, we use $\NN_v$ to represent all neighboring nodes of node $v$ and $|\NN_v|$ is the number of the neighboring nodes of $v$. In the following discussion, we assume that all the $\mathbf{p}_e$ and $\mathbf{p}_v$ in the PG are encoded as vectors of the same dimension.
 
In this work, we come up with \algo{} and \algoplus{} to address the clustering mission on MRGs. Due to its character of modeling multiple relations, we can extend it to property graphs with various properties on nodes and edges. 
We can draw an analogy between the node attributes $p_v$ in a PG and the node representations $H_v$ in a MRG. However, unlike MRGs, each node and edge in a PG carries distinct labels. Therefore, when extending the MRDE to PGs, we assign separate weight parameters $\omega_n^{(r)}$ and $\omega_e^{(k)}$ to nodes and edges with different labels, where $r \in [1,|\LL_{\V}|]$ and $k \in [1,|\LL_{\EDG}|]$. Based on this formulation, we introduce the equation of computing DE for homogeneous subsets of nodes and edges in a PG:
\begin{small}
\begin{equation*}
\mathcal{D}\left(\PP_{\V}^{(r)},\EDG^{(k)}\right)=\sum_{(v,v^{'})\in \EDG^{(k)},v^{'}\in \NN_v}\left\| \frac{\mathbf{p}_v^{(r)}}{\sqrt{|\NN_v^{(k)}|}}-\frac{\mathbf{p}_{v^{'}}^{(r)}}{\sqrt{|\NN_{v^{'}}^{(k)}|}}\right\|_F^2
\end{equation*}
\end{small}
And then, we can define {\em Property Graph Dirichlet energy} (PGDE) as the objective function for clustering: 
\begin{small}
\begin{equation*}
\min\sum_{k=1}^{|\LL_{\EDG}|}\omega_e^{(k)}\sum_{r=1}^{|\LL_{\V}|}\omega_n^{(r)}\cdot\mathcal{D}\left(\PP_{\V}^{(r)},\EDG^{(k)}\right)
\end{equation*}
\end{small}
Akin to the process in \algo{} and \algoplus{}, we can get $\omega_v^{(r)}$ and $\omega_e^{(k)}$ by minimizing the objective function, and then apply them in updating the node properties (node features)
\begin{small}
\begin{equation*}
\mathbf{p}_v=\sum_{k=1}^{|\LL_{\EDG}|}\omega_n^{(k)} \mathbf{p}_{(v,v^{'})}^{(k)} +\sum_{r=1}^{|\LL_{\V}|} \omega_v^{(r)} \mathbf{p}_{v^{'}}^{(r)}     \quad s.t.\quad v^{'} \in \NN_v
\end{equation*}
\end{small}
}
}

\pagebreak
\bibliographystyle{ACM-Reference-Format}
\bibliography{main}

\end{document}